\documentclass{article}

\usepackage{iclr2027_conference,times}

\usepackage{mathtools}
\usepackage{amssymb,amsthm,amsfonts}
\usepackage{bm}
\usepackage{enumitem}
\usepackage[table]{xcolor}
\usepackage{longtable}
\usepackage{booktabs}
\usepackage{array}
\usepackage{graphicx}
\usepackage{booktabs}
\usepackage{multirow}
\usepackage{array}
\usepackage{longtable}
\usepackage{threeparttable}
\usepackage{algorithm}
\usepackage{algpseudocode}
\algrenewcommand\algorithmicrequire{\textbf{Input:}}
\algrenewcommand\algorithmicensure{\textbf{Output:}}

\usepackage[table]{xcolor}

\usepackage{wrapfig}

\usepackage{amsmath,amsfonts,bm}

\def\1{\mathbf{1}}

\DeclareMathAlphabet{\mathsfit}{\encodingdefault}{\sfdefault}{m}{sl}
\SetMathAlphabet{\mathsfit}{bold}{\encodingdefault}{\sfdefault}{bx}{n}

\newcommand{\R}{\mathbb{R}}

\newcommand{\Gop}{\mathcal{G}}

\newcommand{\LN}{\operatorname{LN}}

\newcommand{\xq}{\bm{x}_q}
\newcommand{\bx}{\bm{x}}
\newcommand{\bk}{\bm{k}}

\definecolor{refblue}{HTML}{0000CD}
\definecolor{emporange}{HTML}{FF8C00}

\newcommand{\refvar}[1]{\textcolor{refblue}{#1}}
\newcommand{\discvar}[1]{\textcolor{emporange}{#1}}

\usepackage{xspace}

\usepackage{pgf}

\usepackage{xurl}
\usepackage{hyperref}
\usepackage{pifont}

\newtheorem{theorem}{Theorem}
\newtheorem{lemma}{Lemma}

\newtheorem{corollary}{Corollary}

\theoremstyle{definition}

\newtheorem{assumption}{Assumption}

\newcommand{\model}{\textbf{\textsc{Stfo}}\xspace}
\newcommand{\CFE}{\textsc{CFE}\xspace}
\newcommand{\SRE}{\textsc{SRE}\xspace}
\newcommand{\DFO}{\textsc{DFO}\xspace}
\newcommand{\CQD}{\textsc{CQD}\xspace}

\newcommand{\best}[1]{\textbf{#1}}
\newcommand{\second}[1]{\underline{#1}}

\definecolor{datasetgray}{RGB}{232,232,232}

\definecolor{heatblue}{RGB}{105,155,205}
\definecolor{heatpink}{RGB}{225,155,175}

\newcommand{\heatcell}[4]{%
  \pgfmathsetmacro{\heatpos}{%
    min(
      1,
      max(
        0,
        (#1-#2)/max(#3-#2,0.000001)
      )
    )
  }%

  \pgfmathparse{\heatpos <= 0.5 ? 1 : 0}%

  \ifnum\pgfmathresult=1
    \pgfmathtruncatemacro{\heatlevel}{%
      round(
        34*(1-2*\heatpos)
      )
    }%

    \edef\heatcmd{%
      \noexpand\cellcolor{heatblue!\heatlevel}%
    }%

  \else
    \pgfmathtruncatemacro{\heatlevel}{%
      round(
        34*(2*\heatpos-1)
      )
    }%

    \edef\heatcmd{%
      \noexpand\cellcolor{heatpink!\heatlevel}%
    }%
  \fi

  \heatcmd #4%
}

\newcommand{\hval}[3]{%
  \heatcell{#1}{#2}{#3}{#1}%
}

\newcommand{\hbest}[3]{%
  \heatcell{#1}{#2}{#3}{\best{#1}}%
}

\newcommand{\hsecond}[3]{%
  \heatcell{#1}{#2}{#3}{\second{#1}}%
}

\definecolor{cfecol}{HTML}{D6E8F5}
\definecolor{srecol}{HTML}{FBE5D6}
\definecolor{dfocol}{HTML}{E2F0D9}
\definecolor{cqdcol}{HTML}{EDE0F2}
\definecolor{rowgray}{HTML}{F2F2F2}

\title{
More Sensors Only One Field: Rethinking Continual Spatio-Temporal  Forecasting
}

\newcommand{\stfoauthorblock}{%
  \parbox[t]{\dimexpr\textwidth-2\tabcolsep\relax}{%
    \raggedright
    Lewei~Xie$^{1}$\thanks{Equal contribution.}\quad
    Haoyu~Zhang$^{2}$\footnotemark[1]\quad
    Jiajun~Zhou$^{3}$\quad
    Yulong~Chen$^{1}$\quad
    Guanxing~Chen$^{1}$\quad
    Yu-An~Huang$^{4}$\quad
    Hau-San~Wong$^{2}$\quad
    Yifan~Zhang$^{1}$\thanks{\raggedright
      Corresponding authors:
      Yifan Zhang (\texttt{yif.zhang@cityu-dg.edu.cn});
      Zhi-An Huang (\texttt{huang.za@cityu-dg.edu.cn}).}\quad
    Zhi-An~Huang$^{1}$\footnotemark[2]
    \par\vspace{0.5em}
    {\normalfont\small
      $^{1}$City University of Hong Kong (Dongguan)\\
      $^{2}$City University of Hong Kong\\
      $^{3}$Zhuhai College of Science and Technology\\
      $^{4}$Northwestern Polytechnical University
      \par
    }%
  }%
}

\author{\stfoauthorblock}

\iclrfinalcopy

\makeatletter
\let\stfo@addcontentsline\addcontentsline
\def\stfo@tocfile{toc}
\def\stfo@notoc#1#2#3{%
  \def\stfo@file{#1}%
  \ifx\stfo@file\stfo@tocfile\else\stfo@addcontentsline{#1}{#2}{#3}\fi}
\let\addcontentsline\stfo@notoc
\makeatother


\begin{document}

\maketitle
\fancyhead{}
\fancyhead[L]{\small\textsc{Preprint}}
\renewcommand{\headrulewidth}{0.4pt}

\begin{abstract}
Continual spatio-temporal forecasting supports traffic management and
environmental monitoring under evolving dynamics and expanding sensor
networks. However, conventional graph-based continual learning methods
tie forecasting representations to the current sensor layout, so sensor
expansion can alter the representation of learned spatial relationships.
\textbf{\textit{Our key insight is that sensor expansion changes the evidence
available about a process without necessarily changing the dynamics
to be learned.}}
We propose \model{} (Spatio-Temporal Field Operator), which parameterizes
forecasting knowledge as a shared field-evolution operator and handles
changing sensor layouts through observation and query interfaces.
Normalized coordinate-based aggregation lifts irregular sensor histories
onto a fixed latent grid, enabling reuse of learned spatial maps across
observation sets without sensor-specific parameters.
To accommodate process drift, a spectral descriptor summarizes variation
across spatial scales and conditions Fourier propagation and attention
to adapt operator responses to the current spatial regime.
Coordinate-based decoding queries the evolved field at sensor locations
and combines spatial corrections with local-history predictions.
Experiments on PEMS-Stream, CA-Stream, and AIR-Stream demonstrate
state-of-the-art average forecasting performance. \model-Large
reduces average MAE over DOL by 8.4\% on PEMS-Stream and
4.7\% on CA-Stream.
Our code is available at
\url{https://github.com/Xielewei/Spatio-Temporal-Field-Operator}.
\end{abstract}

\section{Introduction}
\label{sec:intro}

\textbf{\textit{Continual spatio-temporal forecasting}} is essential for
traffic management and environmental monitoring systems that operate over
extended periods. These systems must maintain accurate predictions as both
underlying dynamics and sensor layouts evolve: seasonal and long-term
changes alter the process being forecast, while newly deployed sensors expand observation coverage
\citep{chen2021trafficstream,francies2025framework}.
This calls for reusing forecasting knowledge from earlier periods while
incorporating new sensors and adapting to changing dynamics.

Existing spatio-temporal forecasting methods typically represent sensors as graph nodes and learn spatial dependencies through graph convolution or attention
\citep{yu2018spatiotemporal,wu2019graph,guo2019attention,zheng2020gman}.
Continual methods extend these representations through experience replay,
knowledge consolidation, and pattern memory
\citep{chen2021trafficstream,wang2023knowledge,wang2023pattern},
or use online adaptation to separate stable and dynamic components
\citep{wang2024towards,wang2025dol}.

\textbf{\textit{Despite this progress, transferring forecasting knowledge
across sensor layouts remains challenging}}, because
\textbf{\textit{changes in the underlying dynamics and observation layouts}} 
enter the same graph-based representation. 
In graph-based formulations, the spatial computation is defined directly over the current sensor set.
Adding sensors can therefore change graph neighborhoods and aggregation weights, even when the underlying spatial dynamics among previously observed locations remain similar.
Figure~\ref{fig:motivation}(a) quantifies this effect: on CA-Stream,
where the sensor count grows from 480 to 1698, up to 72\% of $k$NN edges
are reconfigured after a single expansion.
\textbf{\textit{Our key insight is that such large changes in observation layouts need not imply comparable changes in underlying spatial relationships.}}
Figure~\ref{fig:motivation}(b) tests this 
\begin{wrapfigure}{r}{0.6\linewidth}
\vspace{-0cm}
\centering
\includegraphics[width=\linewidth]{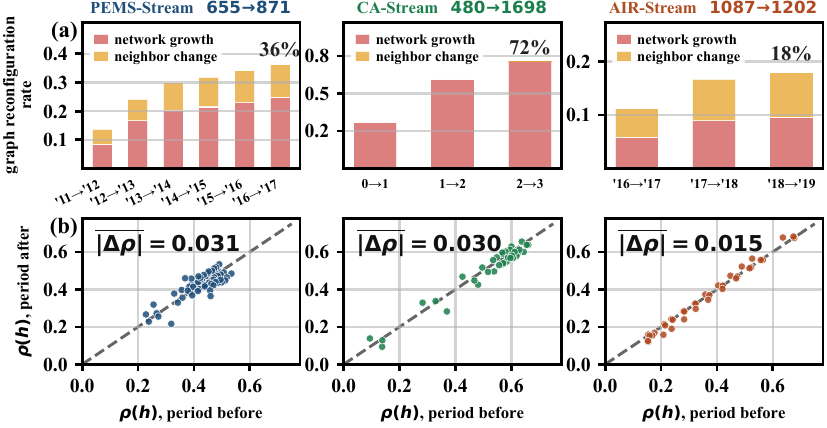}
\vspace{-0.68cm}
\caption{Motivation. (a)~Fraction of $k$NN edges reconfigured after each
expansion. (b)~Pairwise spatial correlation.}
\label{fig:motivation}
\vspace{-0.35cm}
\end{wrapfigure}
claim on the same three streams.
For each pair of consecutive periods, we compute the mean correlation $\rho(h)$ between sensor pairs at separation $h$ and plot the values before and after expansion on the $x$ and $y$ axes, respectively. Points near the diagonal indicate similar correlations across periods.
We first remove the shared time-of-day cycle to reduce its influence on
the measured correlations and use a fixed sensor set shared across all
periods to exclude changes in sensor composition.
Across the three datasets, the average shift never exceeds $0.031$, even
where the sensor count grows by a factor of $3.5$.
These results suggest that sensor expansion changes how densely the
underlying process is sampled, without necessarily changing the spatial dependencies that to be learned.

The contrast between panels (a) and (b) motivates a separation that graph
representations do not naturally provide:
\textbf{\textit{decouple knowledge of underlying spatial dynamics
from how it interfaces with a particular observation layout.}}
Existing methods typically adapt spatial representations defined over
the current sensor set, coupling layout changes with adaptation to
underlying dynamics.
Ideally, when the layout changes while spatial dynamics remain similar,
previously learned forecasting knowledge should remain reusable across
layouts.
The function-space perspective of neural operators
\citep{li2020fourier,lu2021learning} offers a natural basis for this
separation: sensor measurements provide observations of a
spatio-temporal field at given coordinates, while a shared operator
models its evolution. New sensors then contribute additional observations without requiring changes to the operator's parameterization. 
However, separating sensor layouts from underlying dynamics does not eliminate the need to adapt to \textbf{\textit{drift in spatial dynamics}}. The balance between broad spatial trends and localized variation can shift across periods, requiring the model to adjust its spatial responses. The resulting design principle is to \textbf{\textit{maintain a fixed computational domain for knowledge reuse while adapting field evolution to changing spatial dynamics}}.

Motivated by this principle, we introduce \model{}, the
\textbf{\textit{Spatio-Temporal Field Operator}}, which realizes the
separation above through two complementary mechanisms.
\ding{182} \textbf{\textit{Sensor-independent field representation.}}
\model{} maps observations from changing sensor layouts onto a fixed latent grid through coordinate-based aggregation, representing forecasting knowledge without sensor-specific parameters. A coordinate-based decoder queries the evolved field at sensor locations and combines the resulting features with local histories to produce
forecasts. The latent grid and parameter shapes remain fixed during fine-tuning
across periods.
\ding{183} \textbf{\textit{Drift-adaptive field evolution.}}
To accommodate changes in the underlying spatial dynamics, a compact spectral descriptor summarizes how variation is distributed across spatial scales and conditions the operator's channels on the current regime. 
A dual-path operator combines Fourier propagation for broad spatial structure with state-dependent attention for localized variation, enabling field evolution to adapt to changing dynamics.

Our contributions are threefold:
\ding{182} We reveal a previously underexplored mismatch in continual spatio-temporal forecasting between changes in observation layout and changes in the underlying spatial dynamics, and propose \model{} to separate them through sensor-independent field representation and drift-adaptive field evolution.
\ding{183} For fixed neural maps and coordinates, we establish sensor-sampling consistency under explicit sampling assumptions and latent-grid consistency for a fixed spectral design. We separately quantify the effect of coordinate reassignment when the sensor set changes;
\ding{184} We validate our method on three real-world datasets, demonstrating substantial improvements in forecasting accuracy over conventional graph-based continual learning methods.

\section{Problem Formulation}
\label{sec:prelim}
Continual forecasting proceeds through periods $p=1,\ldots,P$, with nested sensor sets $\mathcal V_p\subseteq\mathcal V_{p+1}$ and $N_p=|\mathcal V_p|$. Each period supplies observations from the current sensor set and updates the parameters learned in the previous period. Table~\ref{tab:notation} in Appendix~\ref{app:notation} lists the notation in this paper.

\textbf{\textit{Coordinates and observations.}}
Sensor $i$ is assigned a coordinate $\bm x_i^{(p)}=\chi_p(\bm s_i)\in\Omega=[0,1]^2$,
where $\bm s_i$ contains static descriptors such as geographic location and available attributes. The mapping $\chi_p$ depends on the current sensor set and is used consistently for encoding and querying within each period. Appendix~\ref{app:setup} details the coordinate assignment procedure.
Omitting $p$ below, let $u_t:\Omega\to\R$ denote the spatial field and
$y_{i,t}=u_t(\bm x_i)+\varepsilon_{i,t}$ its noisy observation.
The input set is $\mathcal D=\{(\bm x_i,\bm u_i)\}_{i=1}^{N}$,
where $\bm u_i=[y_{i,1},\ldots,y_{i,T_h}]^\top$ is sensor $i$'s
observation history.
For an observed query sensor $q$ with history $\bm u_q$, the forecasting
operator produces $\hat{\bm y}_q=\Gop_\theta(\mathcal D)(\bm x_q)\in\R^{T_f}$,
whose $k$-th component estimates $u_{T_h+k}(\bm x_q)$.

\textbf{\textit{Discretization consistency.}}
\label{def:invariance}
For a fixed coordinate representation, fixed neural maps, and a noiseless field history, consider sensor layouts $\mathcal S\subset\Omega$ with a fill distance $h_{\mathcal S}=\sup_{\bm x\in\Omega}\min_{\bm x_i\in\mathcal S}\|\bm x-\bm x_i\|_2$.
Consistency requires predictions to converge to a common reference under
an admissible sampling refinement:
$\big\|\Gop_\theta(\mathcal D_{\mathcal S})(\bm x_q)-\overline{\Gop}_\theta(\bm x_q)\big\|_2\longrightarrow0$. 
Theorem~\ref{thm:convergence} gives sampling bounds for fixed and shrinking
bandwidths; Corollary~\ref{cor:noforget} compares layouts sharing the same
reference, and Theorem~\ref{thm:grid_consistency} bounds the error under
latent-grid refinement.
Appendix~\ref{app:proofs} provides the proofs and separately analyzes
coordinate reassignment.

\section{Methodology}
\label{sec:method}

\model{} transfers forecasting knowledge through a shared latent field, as illustrated in Figure~\ref{fig:framework}.
The pipeline comprises four stages: \CFE aggregates observations into
spatial features; \SRE summarizes their spectral regime; \DFO evolves
the conditioned field; and \CQD decodes forecasts at sensor locations.
We use a fixed Cartesian grid $\Xi_{h_g}=\{\bm\xi_g\}_{g=1}^{G}$ with
$n_g$ points per axis, spacing $h_g=1/(n_g-1)$, and $G=n_g^2$.
Each period uses a $G\times d$ latent field, where $d$ is the hidden width. Sensor expansion changes the input observation set, while all temporal, spatial, and output maps remain shared across sensors.

\begin{figure}
    \centering
    \includegraphics[width=\linewidth]{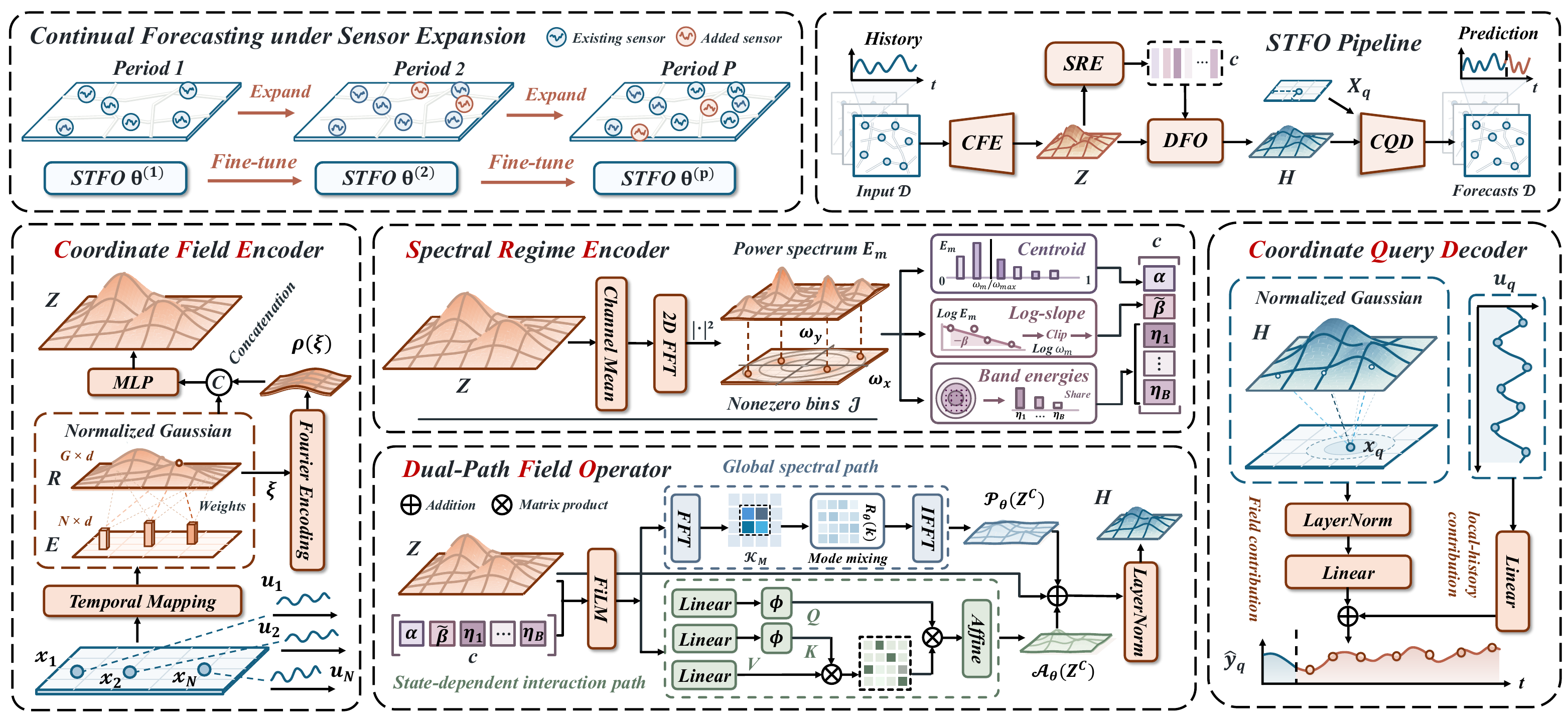}
    \vspace{-0.6cm}
    \caption{Overview of \model. The upper panels illustrate continual forecasting under sensor expansion and the overall pipeline; the lower panels detail \CFE, \SRE, \DFO, and \CQD.}
    \label{fig:framework}
    \vspace{-0.4cm}
\end{figure}

\subsection{Coordinate Field Encoder}
\label{sec:cfe}

The encoder maps observations from irregular sensor layouts onto a fixed latent grid. A shared temporal map $\tau_\theta:\R^{T_h}\to\R^d$
encodes each sensor's history as $\bm e_i=\tau_\theta(\bm u_i)$.
To lift these codes, define the Gaussian kernel $\kappa_\sigma(\bm x,\bm x')=\exp[-\|\bm x-\bm x'\|_2^2/(2\sigma^2)]$. At each grid location, the normalized weighted sum is
\begin{equation}
\bm r_g=\sum_{i=1}^{N}a_{gi}\bm e_i,
\qquad
 a_{gi}=\frac{\kappa_{\sigma_{\mathrm{enc}}}(\bm\xi_g,\bm x_i)}{\sum_{j=1}^{N}\kappa_{\sigma_{\mathrm{enc}}}(\bm\xi_g,\bm x_j)}.
\label{eq:cfe_lift}
\end{equation}
The learned bandwidth $\sigma_{\mathrm{enc}}>0$ controls the spatial
extent of aggregation. The normalized weights give greater emphasis
to nearby sensors and keep the lift a weighted average as the sensor
count changes. The aggregation is invariant to sensor ordering.
Lemma~\ref{lem:fixed_bandwidth} bounds the resulting lift error by the distance between sampling measures at a fixed bandwidth, and Lemma~\ref{lem:encoder_interp} shows that the error decays linearly in the fill distance when the bandwidth shrinks with it.

The field retains incorporates position information through $\bm z_g=\psi_\theta([\bm r_g\,\|\,\rho(\bm\xi_g)])$, where $\rho$ is a fixed Fourier coordinate encoding \citep{tancik2020fourierfeatures} and $\psi_\theta:\R^{d+d_\rho}\to\R^d$ is shared across the grid.\label{eq:cfe_post}
Position features allow grid locations with similar aggregated features to receive different latent representations.
In matrix form, the lift is $\mathbf{R}=\mathbf{A}\mathbf{E}$, with $[\mathbf{A}]_{gi}=a_{gi}$ and $[\mathbf{E}]_{i,:}=\bm e_i^\top$.  
Since $\mathbf{A}$ has nonnegative entries and each row sums to one,
each $\bm r_g$ is a convex combination of sensor codes, and the induced matrix norm satisfies $\|\mathbf{A}\|_\infty=1$ for any sensor count.
Stacking the resulting representations gives
$\mathbf{Z}=[\bm z_1,\ldots,\bm z_G]^\top\in\R^{G\times d}$.
Thus, the sensor count determines the input dimension of aggregation,
while the latent forecasting state retains a fixed shape $G\times d$.

\subsection{Spectral Regime Encoder}
\label{sec:sre}

The spectral regime describes how variation is distributed over spatial scales. Broad spatial patterns concentrate energy at low frequencies, while fine-scale variation contributes to higher frequencies. 
\SRE captures this distribution by averaging latent channels, $\bar z_g=d^{-1}\mathbf{1}_d^\top\bm z_g$, and transforming the resulting scalar grid $\bar z=\{\bar z_{\bm j}\}$, where $\bm j\in\{0,\ldots,n_g-1\}^2$ indexes grid entries. 
We use a forward-normalized discrete Fourier transform with the corresponding inverse:
\begin{equation}
\widehat{\bar z}(\bm k)=\frac1G\sum_{\bm j}\bar z_{\bm j}e^{-2\pi\mathrm{i}\bm k\cdot\bm j/n_g},
\qquad
\bar z_{\bm j}=\sum_{\bm k}\widehat{\bar z}(\bm k)e^{2\pi\mathrm{i}\bm k\cdot\bm j/n_g}.
\label{eq:dft_convention}
\end{equation}
Both transforms use phase nodes $\bm j/n_g$, whereas the latent-grid
coordinates are $\bm j/(n_g-1)$.
Appendix~\ref{app:grid_proof} accounts for this difference in the
grid-consistency proof.

For a set $\mathcal J$ of nonzero Fourier bins with wavevectors
$\bm\omega_m$, define
$E_m=|\widehat{\bar z}(\bm\omega_m)|^2+\epsilon_s$ and
$\omega_m=\|\bm\omega_m\|_2+\epsilon_f$,
where $\epsilon_s,\epsilon_f>0$ are numerical stabilizers.
Let $\lambda_m=\log\omega_m$, $\mu_m=\log E_m$, and use an overbar
to denote the average over $\mathcal J$, e.g.,
$\bar\lambda=|\mathcal J|^{-1}\sum_{m\in\mathcal J}\lambda_m$.
We partition $\mathcal J$ into $B$ logarithmically spaced radial
frequency bands $\mathcal B_b$.
Three complementary summaries form the regime descriptor:
\begin{equation}
\begin{aligned}
\alpha&=\frac{\sum_{m\in\mathcal J}E_m\omega_m}{\omega_{\max}\sum_{m\in\mathcal J}E_m+\epsilon_s},
&\eta_b&=\frac{\sum_{m\in\mathcal B_b}E_m}{\sum_{m\in\mathcal J}E_m+\epsilon_s},\\
\beta&=-\frac{\sum_{m\in\mathcal J}(\lambda_m-\bar\lambda)(\mu_m-\bar\mu)}{\sum_{m\in\mathcal J}(\lambda_m-\bar\lambda)^2+\epsilon_s},
&\bm c&=[\alpha,\tilde\beta,\eta_1,\ldots,\eta_B]^\top\in\R^{B+2}.
\end{aligned}
\label{eq:sre_statistics_main}
\end{equation}
Here $\omega_{\max}=\max_{m\in\mathcal J}\omega_m$ and
$\tilde\beta=\mathrm{clip}(\beta,-10,10)$.
The centroid $\alpha$ summarizes the relative frequency location of
energy; $\beta$ is a regularized negative log--log slope describing
radial spectral decay; and the band ratios $\eta_b$, $b=1,\ldots,B$,
capture energy contributions across spatial scales.
The positive stabilizers ensure continuity under latent-field
perturbations, while Theorem~\ref{thm:spectral} quantifies stability
under spectral-power perturbations.
For the grid-consistency analysis, we keep $\mathcal J$ and its band partition fixed across grid resolutions.

\subsection{Dual-Path Field Operator}
\label{sec:dfo}

Each operator block uses the spectral descriptor $\bm c$ to modulate
latent channels before spatial propagation.
A FiLM-style map \citep{perez2018film} produces
$(\bm\gamma,\bm\delta)
=0.1\tanh(\mathrm{MLP}_\theta(\bm c))
\in\R^d\times\R^d$.
The conditioned field is
$\mathbf{Z}^{\mathrm c}
=\mathbf{Z}\odot\big(\mathbf{1}_G(\mathbf{1}_d+\bm\gamma)^\top\big)
+\mathbf{1}_G\bm\delta^\top$,
where $\odot$ denotes elementwise multiplication.
Equivalently, each grid feature is updated as
$\bm z_g^{\mathrm c}
=\bm z_g\odot(\mathbf{1}_d+\bm\gamma)+\bm\delta$,
using the same channel gains and offsets at all grid locations.
The bounded gain and offset preserve a direct connection to the original field while adapting its channel emphasis. Stacked blocks have separate parameters and share the descriptor extracted from the initial field.

\paragraph{\textit{Global spectral propagation.}}
Following Fourier neural operators \citep{li2020fourier}, the Fourier path communicates through learned spatial modes. 
For a retained low-frequency set $\mathcal K_M$, its update is
\begin{equation}
\mathcal P_\theta(\mathbf{Z}^{\mathrm c})
=\mathcal F_{2D}^{-1}\!\left[
\mathbf 1\{\bm k\in\mathcal K_M\}\,\mathbf{R}_\theta(\bm k)\,
\mathcal F_{2D}[\mathbf{Z}^{\mathrm c}](\bm k)\right].
\label{eq:dfo_spectral_main}
\end{equation}
Equivalently, at grid location $g$,
\begin{equation}
\big(\mathcal P_\theta(\mathbf{Z}^{\mathrm c})\big)_g
=\sum_{\bm k\in\mathcal K_M}\mathbf{R}_\theta(\bm k)\,
\widehat{\mathbf{Z}}{}^{\mathrm c}(\bm k)\,
e^{2\pi\mathrm{i}\bm k\cdot\bm t_g},
\qquad
\widehat{\mathbf{Z}}{}^{\mathrm c}(\bm k)
=\frac1G\sum_{g=1}^{G}\bm z_g^{\mathrm c}
e^{-2\pi\mathrm{i}\bm k\cdot\bm t_g},
\label{eq:dfo_spectral_modes}
\end{equation}
where $\bm t_g$ is the phase node of grid entry $g$.
In general, $\mathbf{R}_\theta(\bm k)\in\mathbb C^{d\times d}$
mixes channels at each retained mode.
Our implementation uses the depthwise form
$\mathbf{R}_\theta(\bm k)=\operatorname{diag}(\bm r_\theta(\bm k))$,
which applies a learned complex multiplier to each channel.
Retaining modes in conjugate pairs with
$\mathbf{R}_\theta(-\bm k)=\overline{\mathbf{R}_\theta(\bm k)}$
and using real multipliers on self-conjugate modes ensure real outputs
for real inputs.
The cutoff $M$ determines the retained spatial frequencies.
Because each Fourier mode spans the domain, this branch propagates
broad spatial structure across the entire grid in one update.
Lemma~\ref{lem:grid_consistency} shows that this finite-mode branch stays within $O(h_g)$ of its continuum counterpart as the grid is refined.

\paragraph{\textit{State-dependent interactions.}}
The second path adapts spatial interactions to the current field features.
Define $\bm q_g=\phi(Q_\theta(\bm z_g^{\mathrm c}))$,
$\bm k_g=\phi(K_\theta(\bm z_g^{\mathrm c}))$, and
$\bm v_g=V_\theta(\bm z_g^{\mathrm c})$,
where $\phi(x)=\mathrm{ELU}(x)+1$,
$\bm q_g,\bm k_g\in\R^{d_a}$, and $\bm v_g\in\R^d$.
Normalized linear attention \citep{katharopoulos2020transformers} gives
\begin{equation}
\begin{aligned}
\mathcal A_\theta(\mathbf{Z}^{\mathrm c})_g
&=O_\theta\!\left(\frac{\sum_{j=1}^{G}(\bm q_g^\top\bm k_j)\bm v_j}{\sum_{j=1}^{G}\bm q_g^\top\bm k_j+\epsilon_a}\right)=O_\theta\!\left(\frac{(\mathbf{V}^\top\mathbf{K})\bm q_g}{\bm q_g^\top\mathbf{K}^\top\mathbf{1}_G+\epsilon_a}\right),
\end{aligned}
\label{eq:dfo_attention_main}
\end{equation}
where $\epsilon_a>0$ is a numerical stabilizer and $O_\theta$ is an
affine output map.
The rows of $\mathbf{Q},\mathbf{K}\in\R^{G\times d_a}$ and
$\mathbf{V}\in\R^{G\times d}$ contain the queries, keys, and values.
The second expression uses two aggregates shared by all queries:
$\mathbf{V}^\top\mathbf{K}
=\sum_{j=1}^{G}\bm v_j\bm k_j^\top\in\R^{d\times d_a}$
and
$\mathbf{K}^\top\mathbf{1}_G
=\sum_{j=1}^{G}\bm k_j\in\R^{d_a}$.
Multiplying these aggregates by $\bm q_g$ recovers the weighted value
sum and the normalization term in the first expression.
Forming the aggregates costs $O(Gdd_a)$ and supplies all queries. 
The positive feature map produces positive interaction scores, whose
scale is controlled by normalization.
For bounded features and fixed neural maps,
Lemma~\ref{lem:attention_lipschitz} establishes a Lipschitz bound
independent of $G$.

The block output is
\begin{equation}
\mathbf{H}=\LN\!\big(\mathbf{Z}+\mathcal P_\theta(\mathbf{Z}^{\mathrm c})+\mathcal A_\theta(\mathbf{Z}^{\mathrm c})\big)+\mathbf{1}_G\bm b_{\mathrm{res}}^\top,
\label{eq:dfo_residual}
\end{equation}
where $\LN$ normalizes channels independently at each grid location
and $\bm b_{\mathrm{res}}\in\R^d$ is a learned bias shared across
the grid.
Equivalently,
$\bm h_g=\LN(\bm z_g
+\mathcal P_\theta(\mathbf{Z}^{\mathrm c})_g
+\mathcal A_\theta(\mathbf{Z}^{\mathrm c})_g)
+\bm b_{\mathrm{res}}$.
This update combines a direct feature-preserving path, global spectral propagation, and content-dependent spatial mixing. Their roles are complementary: spectral weights encode shared propagation patterns, and attention changes its affinities with the observed field. The unmodulated residual provides a reference for both updates. 
Lemma~\ref{lem:lipschitz_modules} combines the branchwise bounds,
accounting for shared spectral conditioning, to establish stability
of the full operator stack.

\subsection{Coordinate Query Decoder}
\label{sec:cqd}

The decoder maps the evolved field to predictions at observed query
sensors.
Using the Gaussian kernel, define
$b_{qg}=\kappa_{\sigma_{\mathrm{dec}}}(\bm x_q,\bm\xi_g)/
\sum_{r=1}^{G}\kappa_{\sigma_{\mathrm{dec}}}(\bm x_q,\bm\xi_r)$
and gather
$\tilde{\bm h}_q=\sum_{g=1}^{G}b_{qg}\bm h_g
=\mathbf{H}^\top\bm b_q$,
where $\bm b_q=[b_{q1},\ldots,b_{qG}]^\top$ satisfies
$b_{qg}\ge0$ and $\mathbf{1}_G^\top\bm b_q=1$.
The learned decoder bandwidth $\sigma_{\mathrm{dec}}>0$ controls the readout scale.
Shared prediction maps combine field and local-history contributions:
\begin{equation}
\hat{\bm y}_q=
\underbrace{\mathbf{W}_{\mathrm{out}}\LN(\tilde{\bm h}_q)+\bm b_{\mathrm{out}}}_{\text{\textit{field contribution }}\bm f_q}
+\underbrace{\mathbf{W}_{\mathrm{skip}}\bm u_q+\bm b_{\mathrm{skip}}}_{\text{\textit{local-history contribution}}}.
\label{eq:cqd_readout}
\end{equation}
Here $\mathbf{W}_{\mathrm{out}}\in\R^{T_f\times d}$ and $\mathbf{W}_{\mathrm{skip}}\in\R^{T_f\times T_h}$. 
We initialize the local-history map with
$\mathbf{W}_{\mathrm{skip}}=\mathbf{1}_{T_f}\bm e_{T_h}^{\top}$
and zero bias, where $\bm e_{T_h}$ is the last standard basis vector
in $\R^{T_h}$. This initialization repeats the latest observation across all forecast horizons.
This decomposition lets the local-history path express sensor-specific temporal persistence and the field path supply spatially informed corrections. 
Lemma~\ref{lem:decoder_interp} bounds the gather error of the normalized weights by $O(h_g)$.

\subsection{Continual Update and Consistency}
\label{sec:continual}

We train \model{} from scratch in the first period.
For each subsequent period $p>1$, we initialize the model from
$\theta^{(p-1)}$ and fine-tune the same parameter set on current
training samples.
New sensors enter through their histories and period-specific coordinates.
The latent grid and all parameter shapes, including those of the Fourier
multipliers, remain fixed, allowing the learned operator to be reused and fine-tuned across periods.
Algorithms~\ref{alg:forward} and~\ref{alg:continual} in
Appendix~\ref{app:algorithm} describe the per-period forward pass and
continual training protocol, respectively.
Appendix~\ref{app:setup} provides the masked MAE objective and
optimization settings.

The analysis follows this computational structure. Kernel-lift errors propagate through a Lipschitz post-network (Lemmas~\ref{lem:fixed_bandwidth} and~\ref{lem:encoder_interp}), a stable spectral descriptor (Theorem~\ref{thm:spectral}), a finite stack of operator blocks (Lemmas~\ref{lem:attention_lipschitz} and~\ref{lem:lipschitz_modules}), and a normalized decoder (Lemma~\ref{lem:decoder_interp}). With fixed bandwidth and a common limiting sampling measure, prediction discrepancy from the reference is bounded by the distance between sampling measures; with local bandwidth proportional to fill distance, it is $O(h_{\mathcal S})$.  For a fixed spectral design and decoder bandwidth proportional to grid spacing, the grid error is $O(h_g)$, where the continuum comparison rests on endpoint-grid quadrature (Lemma~\ref{lem:riemann}) and on the grid stack (Lemma~\ref{lem:grid_consistency}). Theorems~\ref{thm:convergence} and~\ref{thm:grid_consistency} state the resulting bounds, Corollary~\ref{cor:noforget} compares two layouts under a shared reference, and Appendix~\ref{app:remapping} separates the effect of coordinate reassignment.

\section{Experiments}
\label{sec:exp}

We investigate the following research questions:
\begin{itemize}[leftmargin=1.5em]
\item \textbf{\textit{RQ1:}} How accurate is the proposed method across different datasets and horizons?
\item \textbf{\textit{RQ2:}} How much does each model component contribute to the overall forecasting performance?
\item \textbf{\textit{RQ3:}} How does the proposed method compare in terms of forecasting accuracy, training efficiency, and memory usage?
\item \textbf{\textit{RQ4:}} What spatial and channel-wise structures characterize the learned field updates, and how do spectral conditioning and channel modulation relate to forecasting improvements?
\item \textbf{\textit{RQ5:}} How well does the proposed method forecast at newly introduced sensors while retaining performance on existing sensors under different adaptation settings?
\item \textbf{\textit{RQ6:}} How sensitive is the proposed method to operator depth, grid resolution, and the number of retained modes? We report this sensitivity study in Appendix~\ref{app:sensitivity}.
\end{itemize}

\subsection{Experimental Setup}
\label{sec:exp:setup}
\textbf{\textit{Datasets.}}
We use \textsc{PEMS-Stream}~\citep{chen2021trafficstream},
\textsc{CA-Stream}~\citep{liu2023largest}, and
\textsc{AIR-Stream}~\citep{chen2025eac}.
\textsc{PEMS-Stream} grows from $655$ to $871$ traffic sensors over seven
yearly periods; \textsc{CA-Stream} grows from $480$ to $1{,}698$ sensors
over four monthly periods; \textsc{AIR-Stream} grows from $1{,}087$ to
$1{,}202$ air-quality stations over four yearly periods.

\textbf{\textit{Baselines.}}
We group the baselines by modeling and adaptation paradigm:
\ding{182} General spatio-temporal forecasting methods include
\textsc{GWNet}~\citep{wu2019graph},
\textsc{STID}~\citep{shao2022stid}, and
\textsc{iTransformer}~\citep{liu2023itransformer};
\ding{183} Neural-operator baselines include
\textsc{DeepONet}~\citep{lu2021learning} and
\textsc{FNO}~\citep{li2020fourier};
\ding{184} Continual-learning methods for evolving spatio-temporal
systems include
\textsc{TrafficStream}~\citep{chen2021trafficstream},
\textsc{STKEC}~\citep{wang2023knowledge},
\textsc{PECPM}~\citep{wang2023pattern},
\textsc{STRAP}~\citep{zhang2026strap},
\textsc{EAC}~\citep{chen2025eac}, and
\textsc{EVOLVE}~\citep{yang2027evolve};
\ding{185} Online adaptation methods include
\textsc{DOL}~\citep{wang2025dol} and
\textsc{ST-TTC}~\citep{chen2025stttc}, which update or calibrate
predictions using streaming observations at deployment time.
Appendix~\ref{app:baselines} describes each baseline and its role
in the comparison.

\textbf{\textit{Protocol and metrics.}}
For subsequent periods, we initialize the model from the previous
checkpoint and fine-tune on current training data.
Sensor histories and coordinates reflect the current sensor set, while
the latent grid and parameter shapes remain fixed.
Within each dataset, Small, Medium, and Large differ only in hidden width.
We report MAE, RMSE, and MAPE at horizons $3$, $6$, and $12$, and their
averages over $12$ horizons.
Main results are averaged over three runs with random seeds
$42$, $43$, and $44$.
Appendices~\ref{app:setup}, \ref{app:algorithm}, and \ref{app:notation}
provide experimental details, algorithms, and notation.

\subsection{All-Horizon Forecasting Results (RQ1)}
\label{sec:exp:main}
\begin{table*}[t]
\centering

\providecommand{\stbpstd}[1]{%
  {\normalfont\fontsize{4.6}{5.0}\selectfont\ensuremath{\pm#1}}%
}
\caption{Forecasting results across datasets and horizons, with methods grouped by forecasting paradigm. Results are mean $\pm$ standard deviation over three runs. Lower is better; \best{bold} and \second{underlined} denote the best and second-best methods within each dataset, metric, and horizon.}
\label{tab:main_horizon}

\setlength{\tabcolsep}{1.25pt}
\renewcommand{\arraystretch}{1.04}
\fontsize{6.4}{7.4}\selectfont

\resizebox{\linewidth}{!}{%
%
}
\vspace{-0.2cm}
\end{table*}

Table~\ref{tab:main_horizon} shows that \model-Large achieves the lowest
errors across all reported horizons and metrics on both traffic streams,
reducing average MAE and RMSE relative to \textsc{DOL} by about $8\%$
on PEMS-Stream and $5\%$ on CA-Stream.
Relative gains over \textsc{DOL} increase with the prediction horizon
and are largest at horizon $12$.
Scaling from Small to Large consistently improves performance on the
two traffic streams, whereas improvements on AIR-Stream are not monotonic.
Wilcoxon signed-rank tests with Holm correction in
Appendix~\ref{app:statistical_significance} support the overall
forecasting advantage of \model-Large over the baselines.

\subsection{Ablation Study (RQ2)}
\label{sec:exp:ablation}

Figure~\ref{fig:stfo_ablation} compares \model-Large with four ablated
variants on PEMS-Stream and AIR-Stream:
w/o SRE removes spectral conditioning,
w/o DFO-S removes the spectral path,
w/o DFO-A removes the attention path, and
w/o CFE-C removes coordinate-feature injection.
\begin{wrapfigure}{r}{0.6\linewidth}
\vspace{-0.4cm}
\includegraphics[width=\linewidth]{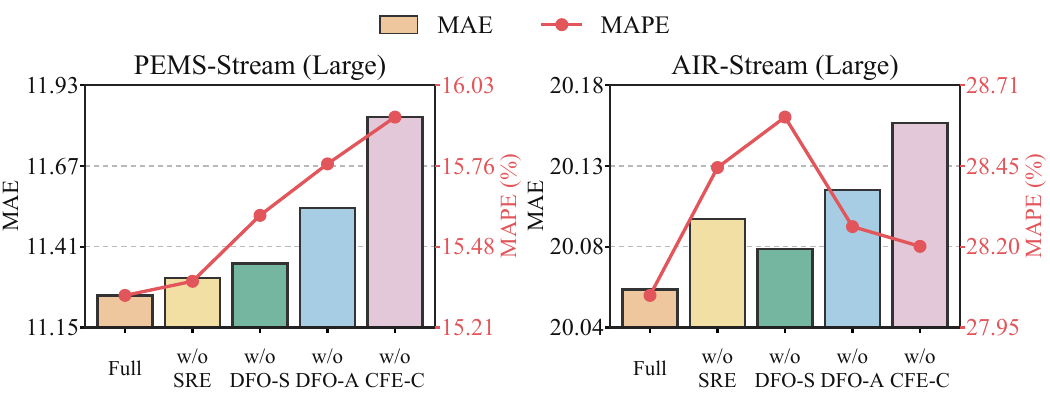}
\vspace{-0.8cm}
\caption{Component ablation of \model-Large on PEMS-Stream and AIR-Stream. Results are from single-seed runs.}
\vspace{-0.4cm}
\label{fig:stfo_ablation}
\end{wrapfigure}
The full model has the lowest MAE and MAPE in both panels. Removing coordinate features produces the largest MAE increase on both streams and the largest MAPE increase on PEMS-Stream, highlighting the role of explicit location information in the lifted field. 
On AIR-Stream, removing the spectral path causes the largest MAPE
increase but the smallest MAE increase among the ablations.
These differing rankings suggest that component contributions depend
on the dataset and evaluation metric.
The performance degradation after removing either DFO path further
suggests that both spectral propagation and state-dependent interactions contribute to forecasting accuracy.
\subsection{Efficiency Analysis (RQ3)}
\label{sec:exp:efficiency}
\begin{wrapfigure}{r}{0.5\linewidth}
\vspace{-0.9cm}
\includegraphics[width=\linewidth]{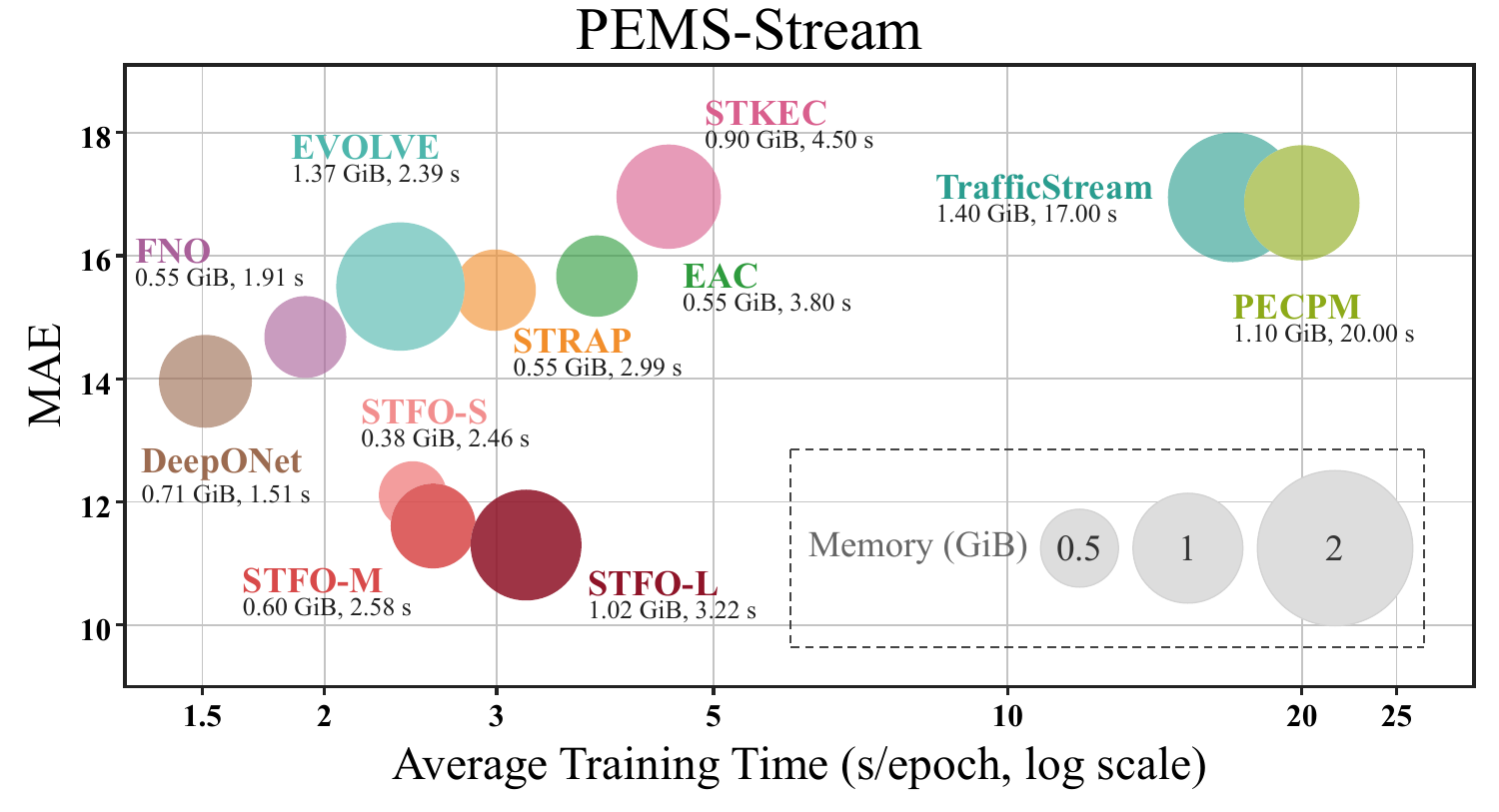}
\vspace{-0.8cm}
\caption{Forecasting accuracy, training time per epoch, and GPU memory
usage on PEMS-Stream.}
\label{fig:efficiency}
\vspace{-0.4cm}
\end{wrapfigure}
Figure~\ref{fig:efficiency} compares forecasting accuracy, training
time per epoch, and GPU memory usage on PEMS-Stream.
All three \model{} configurations achieve lower MAE than the plotted
baselines, offering favorable accuracy--cost trade-offs across model sizes.
\model-Small achieves strong forecasting performance with
$2.46$ seconds per epoch and $0.38$ GiB of GPU memory.
Increasing model capacity further improves accuracy, with \model-Large
achieving the lowest MAE at $3.22$ seconds per epoch and $1.02$ GiB
of memory.
The larger configurations therefore improve accuracy while retaining
competitive computational efficiency.
Across these configurations, capacity increases through hidden width
without enlarging the latent grid, while linear attention avoids
constructing and storing a full grid-to-grid attention matrix.
Combining a fixed latent grid with linear attention supports increased
model capacity through wider hidden representations while keeping
spatial computation efficient.
\subsection{Mechanism Diagnostics (RQ4)}
\label{sec:exp:mechanism}
\label{sec:exp:operator}
\label{sec:exp:spectral}
\begin{figure}
\centering
\includegraphics[width=\linewidth]{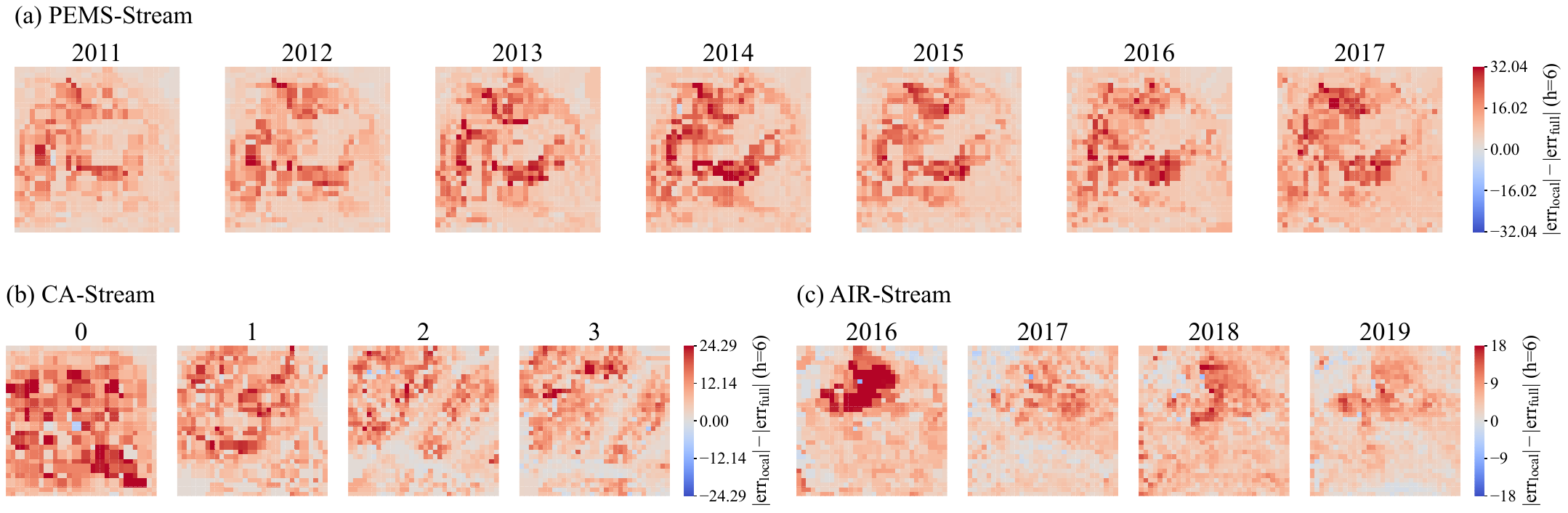}
\vspace{-0.7cm}
\caption{Sensor-wise error reduction on the assignment grid across three streams and all periods at horizon $6$: $|\mathrm{err}_{\mathrm{local}}|-|\mathrm{err}_{\mathrm{full}}|$, where $\mathrm{err}_{\mathrm{local}}$ denotes the error of the local-history-only prediction. Positive values indicate improvements from adding the field contribution.}
\label{fig:spatial_error_reduction_h6}
\end{figure}

\begin{figure}
\centering
\includegraphics[width=\linewidth]{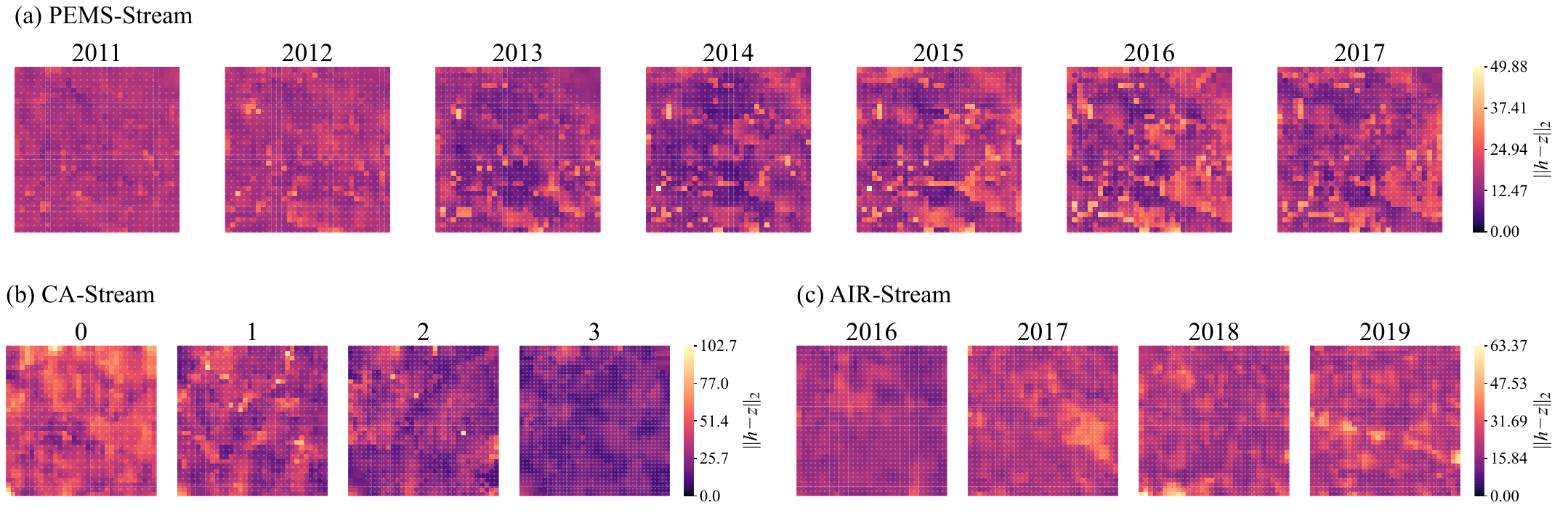}
\vspace{-0.5cm}
\caption{DFO update magnitudes on the latent grid across three streams and all periods. Each location $g$ shows $\|h_g-z_g\|_2$, where $z_g$ and $h_g$ denote the input and output latent features, respectively.}
\label{fig:operator_update_energy}
\vspace{-0.5cm}
\end{figure}

\ding{182} \textbf{\textit{At the spatial level,}}
forecasting gains and latent-field updates exhibit spatially structured patterns that evolve differently across datasets.
Figures~\ref{fig:spatial_error_reduction_h6}
and~\ref{fig:operator_update_energy} show error reductions relative
to the local-history skip at horizon $6$ and latent-field update
magnitudes, respectively.
On PEMS-Stream, forecasting gains remain broadly distributed across
the coordinate domain, while field updates exhibit more pronounced
local peaks in later periods.
CA-Stream shows more visible changes in the spatial distribution
of forecasting gains, while field-update magnitudes tend to decrease
across periods.
On AIR-Stream, both forecasting gains and field updates display
spatial structure, with their locations and intensities varying
across years.
These observations suggest that the shared field operator accommodates both persistent spatial structure and changes across periods, with the field contribution improving predictions at many sensor locations.

\ding{183} \textbf{\textit{At the channel level,}}
spectral conditioning produces modulation patterns that differ across
channels and periods.
Figure~\ref{fig:sre_film_correlations} shows positive and negative
correlations between SRE descriptor components and the FiLM coefficients
$\bm\gamma$ and $\bm\delta$, indicating distinct channel responses
to the spectral regime.
Figure~\ref{fig:film_gain_regions} further relates these modulation
patterns to forecasting gains.
PEMS-Stream and CA-Stream exhibit dispersed channel configurations
with mixed gain correlations, whereas AIR-Stream shows tighter
clusters with similar correlations within each period.
Together, these results suggest that spectral conditioning differentiates
channel responses, while their associations with forecasting gains
depend on the dataset and period.

\begin{figure}
\centering
\includegraphics[width=\linewidth]{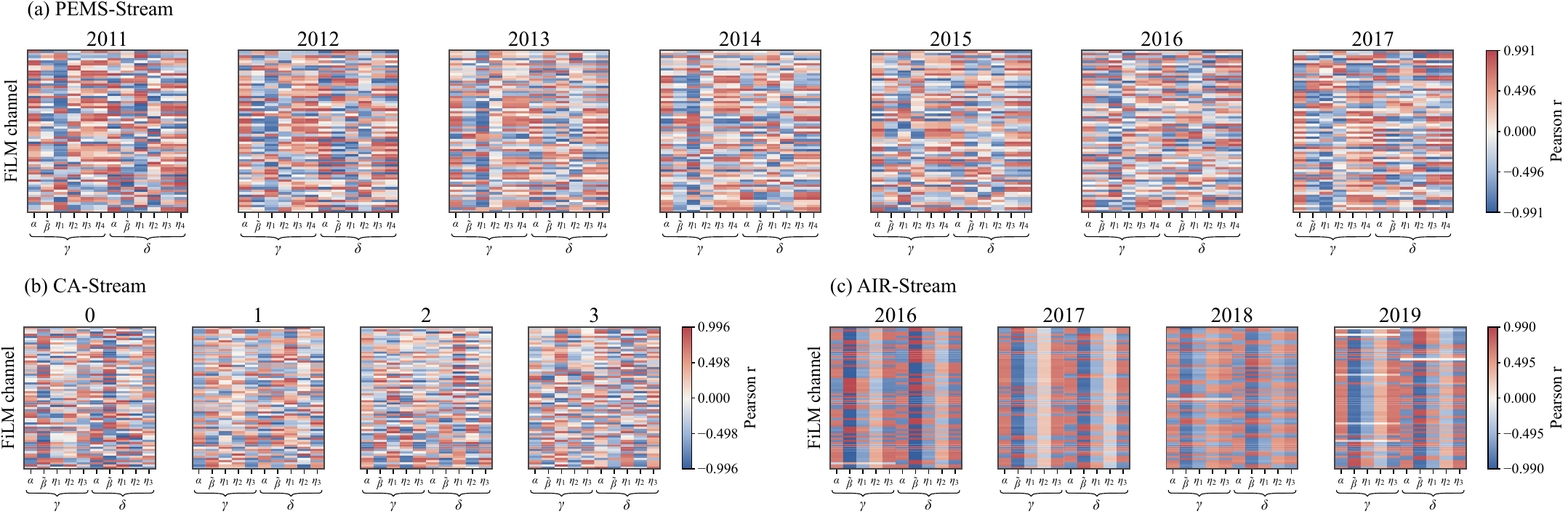}
\vspace{-0.5cm}
\caption{SRE-to-FiLM channel correlations across three streams. The heatmaps show correlations between SRE descriptor components and channel-wise FiLM parameters $\bm{\gamma}$ and $\bm{\delta}$.}
\label{fig:sre_film_correlations}
\end{figure}

\begin{figure}
\centering
\includegraphics[width=\linewidth]{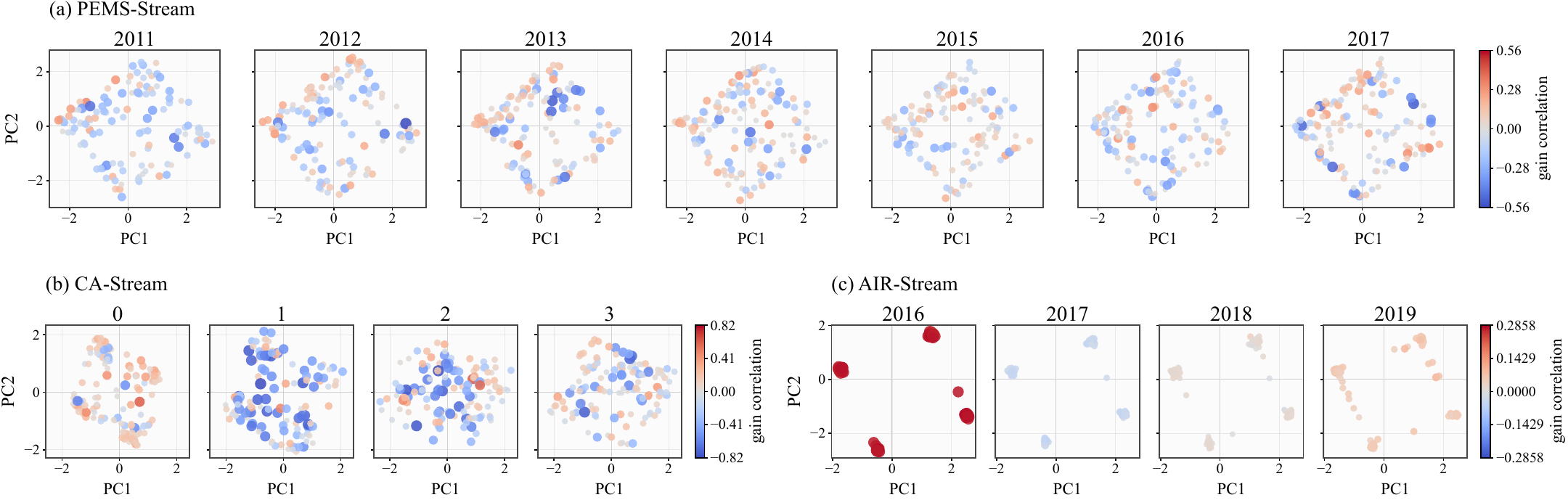}
\vspace{-0.5cm}
\caption{FiLM channel projections across three streams and periods. Channels are projected by PCA from concatenated $\gamma_j$--SRE and $\delta_j$--SRE correlations using a shared basis within each stream. Color and size respectively encode the signed and absolute correlations of modulation strength $|\gamma_j|+|\delta_j|$ with forecasting gain $\mathrm{MAE}_{\mathrm{local}}-\mathrm{MAE}_{\mathrm{full}}$ at horizon $6$.}
\vspace{-0.5cm}
\label{fig:film_gain_regions}
\end{figure}
\subsection{New-Sensor Adaptation (RQ5)}
\label{sec:exp:new_sensor_adaptation}

\providecommand{\stbpstd}[1]{%
  {\normalfont\fontsize{4.6}{5.0}\selectfont\ensuremath{\pm#1}}%
}
\begin{table*}[t]
\centering
\caption{Period-wise MAE on existing and newly introduced sensors of AIR-Stream. Results are mean $\pm$ standard deviation over three runs. Lower is better; \best{bold} and \second{underlined} denote the best and second-best methods within each period, sensor group, adaptation budget, and horizon.}
\label{tab:new_sensor_adaptation}
\setlength{\tabcolsep}{2.5pt}
\renewcommand{\arraystretch}{0.99}
\fontsize{6.2}{7.1}\selectfont
\resizebox{\linewidth}{!}{%
\begin{tabular}{@{}ll cccc cccc cccc@{}}
\toprule
\multirow{2}{*}{\textbf{Budget}} & \multirow{2}{*}{\textbf{Sensor group}}
& \multicolumn{4}{c}{FNO}
& \multicolumn{4}{c}{EVOLVE}
& \multicolumn{4}{c}{\textbf{\model-Large}}
\\[-0.15em]
\cmidrule(lr){3-6}\cmidrule(lr){7-10}\cmidrule(lr){11-14}
& & $h=3$ & $h=6$ & $h=12$ & \textit{Avg.} & $h=3$ & $h=6$ & $h=12$ & \textit{Avg.} & $h=3$ & $h=6$ & $h=12$ & \textit{Avg.} \\ 
\midrule
\rowcolor{datasetgray}
\multicolumn{14}{@{}l}{\hspace{0.25em}\textbf{2017}: 1,087 existing sensors; 67 newly introduced sensors} \\[-0.1em]
\multirow{2}{*}{Zero-shot} & Existing & \hval{17.57}{16.93}{17.57}\stbpstd{0.26} & \hsecond{23.15}{22.54}{23.92}\stbpstd{0.19} & \hsecond{27.09}{26.91}{28.38}\stbpstd{0.14} & \hsecond{21.71}{21.26}{22.39}\stbpstd{0.15} & \hsecond{17.37}{16.93}{17.57}\stbpstd{0.05} & \hval{23.92}{22.54}{23.92}\stbpstd{0.03} & \hval{28.38}{26.91}{28.38}\stbpstd{0.06} & \hval{22.39}{21.26}{22.39}\stbpstd{0.04} & \hbest{16.93}{16.93}{17.57}\stbpstd{0.14} & \hbest{22.54}{22.54}{23.92}\stbpstd{0.26} & \hbest{26.91}{26.91}{28.38}\stbpstd{0.39} & \hbest{21.26}{21.26}{22.39}\stbpstd{0.26} \\ 
 & Newly introduced & \hsecond{18.12}{17.36}{18.12}\stbpstd{0.32} & \hsecond{24.11}{23.41}{25.53}\stbpstd{0.15} & \hsecond{29.13}{28.82}{31.46}\stbpstd{0.34} & \hsecond{22.80}{22.25}{24.10}\stbpstd{0.24} & \hsecond{18.12}{17.36}{18.12}\stbpstd{0.07} & \hval{25.53}{23.41}{25.53}\stbpstd{0.05} & \hval{31.46}{28.82}{31.46}\stbpstd{0.08} & \hval{24.10}{22.25}{24.10}\stbpstd{0.06} & \hbest{17.36}{17.36}{18.12}\stbpstd{0.18} & \hbest{23.41}{23.41}{25.53}\stbpstd{0.34} & \hbest{28.82}{28.82}{31.46}\stbpstd{0.45} & \hbest{22.25}{22.25}{24.10}\stbpstd{0.33} \\ 
\cmidrule(lr){1-14}
\multirow{2}{*}{Full} & Existing & \hsecond{16.93}{16.61}{17.32}\stbpstd{0.11} & \hsecond{22.53}{22.06}{23.86}\stbpstd{0.17} & \hsecond{26.56}{26.24}{28.36}\stbpstd{0.21} & \hsecond{21.15}{20.80}{22.36}\stbpstd{0.14} & \hval{17.32}{16.61}{17.32}\stbpstd{0.21} & \hval{23.86}{22.06}{23.86}\stbpstd{0.12} & \hval{28.36}{26.24}{28.36}\stbpstd{0.12} & \hval{22.36}{20.80}{22.36}\stbpstd{0.17} & \hbest{16.61}{16.61}{17.32}\stbpstd{0.05} & \hbest{22.06}{22.06}{23.86}\stbpstd{0.04} & \hbest{26.24}{26.24}{28.36}\stbpstd{0.03} & \hbest{20.80}{20.80}{22.36}\stbpstd{0.02} \\ 
 & Newly introduced & \hsecond{17.32}{16.96}{17.96}\stbpstd{0.14} & \hsecond{23.40}{22.79}{25.36}\stbpstd{0.21} & \hsecond{28.61}{28.07}{31.44}\stbpstd{0.30} & \hsecond{22.17}{21.68}{24.01}\stbpstd{0.19} & \hval{17.96}{16.96}{17.96}\stbpstd{0.19} & \hval{25.36}{22.79}{25.36}\stbpstd{0.12} & \hval{31.44}{28.07}{31.44}\stbpstd{0.14} & \hval{24.01}{21.68}{24.01}\stbpstd{0.18} & \hbest{16.96}{16.96}{17.96}\stbpstd{0.07} & \hbest{22.79}{22.79}{25.36}\stbpstd{0.08} & \hbest{28.07}{28.07}{31.44}\stbpstd{0.09} & \hbest{21.68}{21.68}{24.01}\stbpstd{0.07} \\ 
\midrule
\rowcolor{datasetgray}
\multicolumn{14}{@{}l}{\hspace{0.25em}\textbf{2018}: 1,154 existing sensors; 39 newly introduced sensors} \\[-0.1em]
\multirow{2}{*}{Zero-shot} & Existing & \hsecond{14.56}{14.41}{15.16}\stbpstd{0.09} & \hsecond{19.56}{19.26}{20.41}\stbpstd{0.21} & \hsecond{23.10}{22.90}{23.99}\stbpstd{0.19} & \hsecond{18.42}{18.17}{19.22}\stbpstd{0.17} & \hval{15.16}{14.41}{15.16}\stbpstd{0.39} & \hval{20.41}{19.26}{20.41}\stbpstd{0.19} & \hval{23.99}{22.90}{23.99}\stbpstd{0.07} & \hval{19.22}{18.17}{19.22}\stbpstd{0.25} & \hbest{14.41}{14.41}{15.16}\stbpstd{0.06} & \hbest{19.26}{19.26}{20.41}\stbpstd{0.07} & \hbest{22.90}{22.90}{23.99}\stbpstd{0.09} & \hbest{18.17}{18.17}{19.22}\stbpstd{0.05} \\ 
 & Newly introduced & \hsecond{15.12}{14.67}{15.61}\stbpstd{0.14} & \hsecond{20.78}{19.71}{21.06}\stbpstd{0.35} & \hsecond{24.61}{23.52}{24.86}\stbpstd{0.30} & \hsecond{19.54}{18.60}{19.87}\stbpstd{0.32} & \hval{15.61}{14.67}{15.61}\stbpstd{0.43} & \hval{21.06}{19.71}{21.06}\stbpstd{0.22} & \hval{24.86}{23.52}{24.86}\stbpstd{0.10} & \hval{19.87}{18.60}{19.87}\stbpstd{0.28} & \hbest{14.67}{14.67}{15.61}\stbpstd{0.08} & \hbest{19.71}{19.71}{21.06}\stbpstd{0.14} & \hbest{23.52}{23.52}{24.86}\stbpstd{0.07} & \hbest{18.60}{18.60}{19.87}\stbpstd{0.06} \\ 
\cmidrule(lr){1-14}
\multirow{2}{*}{Full} & Existing & \hsecond{14.39}{14.17}{14.78}\stbpstd{0.07} & \hsecond{19.23}{19.03}{20.47}\stbpstd{0.15} & \hsecond{22.81}{22.59}{24.35}\stbpstd{0.08} & \hsecond{18.09}{17.87}{19.18}\stbpstd{0.06} & \hval{14.78}{14.17}{14.78}\stbpstd{0.04} & \hval{20.47}{19.03}{20.47}\stbpstd{0.13} & \hval{24.35}{22.59}{24.35}\stbpstd{0.19} & \hval{19.18}{17.87}{19.18}\stbpstd{0.10} & \hbest{14.17}{14.17}{14.78}\stbpstd{0.00} & \hbest{19.03}{19.03}{20.47}\stbpstd{0.01} & \hbest{22.59}{22.59}{24.35}\stbpstd{0.05} & \hbest{17.87}{17.87}{19.18}\stbpstd{0.02} \\ 
 & Newly introduced & \hsecond{14.95}{14.48}{15.16}\stbpstd{0.06} & \hsecond{20.45}{19.56}{21.09}\stbpstd{0.06} & \hsecond{24.48}{23.32}{25.25}\stbpstd{0.32} & \hsecond{19.22}{18.39}{19.79}\stbpstd{0.10} & \hval{15.16}{14.48}{15.16}\stbpstd{0.03} & \hval{21.09}{19.56}{21.09}\stbpstd{0.13} & \hval{25.25}{23.32}{25.25}\stbpstd{0.23} & \hval{19.79}{18.39}{19.79}\stbpstd{0.11} & \hbest{14.48}{14.48}{15.16}\stbpstd{0.03} & \hbest{19.56}{19.56}{21.09}\stbpstd{0.06} & \hbest{23.32}{23.32}{25.25}\stbpstd{0.05} & \hbest{18.39}{18.39}{19.79}\stbpstd{0.03} \\ 
\midrule
\rowcolor{datasetgray}
\multicolumn{14}{@{}l}{\hspace{0.25em}\textbf{2019}: 1,193 existing sensors; 9 newly introduced sensors} \\[-0.1em]
\multirow{2}{*}{Zero-shot} & Existing & \hval{15.38}{15.11}{15.38}\stbpstd{0.18} & \hsecond{19.83}{19.57}{20.71}\stbpstd{0.30} & \hsecond{23.18}{23.09}{24.38}\stbpstd{0.12} & \hsecond{18.85}{18.66}{19.53}\stbpstd{0.12} & \hsecond{15.36}{15.11}{15.38}\stbpstd{0.49} & \hval{20.71}{19.57}{20.71}\stbpstd{0.29} & \hval{24.38}{23.09}{24.38}\stbpstd{0.23} & \hval{19.53}{18.66}{19.53}\stbpstd{0.35} & \hbest{15.11}{15.11}{15.38}\stbpstd{0.06} & \hbest{19.57}{19.57}{20.71}\stbpstd{0.13} & \hbest{23.09}{23.09}{24.38}\stbpstd{0.11} & \hbest{18.66}{18.66}{19.53}\stbpstd{0.08} \\ 
 & Newly introduced & \hsecond{21.14}{21.07}{21.22}\stbpstd{0.46} & \hbest{26.97}{26.97}{27.98}\stbpstd{0.58} & \hbest{30.46}{30.46}{31.51}\stbpstd{0.48} & \hbest{25.29}{25.29}{26.12}\stbpstd{0.40} & \hval{21.22}{21.07}{21.22}\stbpstd{0.68} & \hval{27.98}{26.97}{27.98}\stbpstd{0.31} & \hval{31.51}{30.46}{31.51}\stbpstd{0.19} & \hval{26.12}{25.29}{26.12}\stbpstd{0.34} & \hbest{21.07}{21.07}{21.22}\stbpstd{0.08} & \hsecond{27.08}{26.97}{27.98}\stbpstd{0.29} & \hsecond{31.24}{30.46}{31.51}\stbpstd{0.53} & \hsecond{25.61}{25.29}{26.12}\stbpstd{0.29} \\ 
\cmidrule(lr){1-14}
\multirow{2}{*}{Full} & Existing & \hsecond{14.13}{14.08}{14.55}\stbpstd{0.13} & \hsecond{18.98}{18.78}{20.17}\stbpstd{0.26} & \hsecond{22.58}{22.37}{24.06}\stbpstd{0.12} & \hsecond{17.85}{17.69}{18.89}\stbpstd{0.16} & \hval{14.55}{14.08}{14.55}\stbpstd{0.08} & \hval{20.17}{18.78}{20.17}\stbpstd{0.03} & \hval{24.06}{22.37}{24.06}\stbpstd{0.04} & \hval{18.89}{17.69}{18.89}\stbpstd{0.06} & \hbest{14.08}{14.08}{14.55}\stbpstd{0.09} & \hbest{18.78}{18.78}{20.17}\stbpstd{0.10} & \hbest{22.37}{22.37}{24.06}\stbpstd{0.06} & \hbest{17.69}{17.69}{18.89}\stbpstd{0.05} \\ 
 & Newly introduced & \hsecond{20.18}{20.10}{20.47}\stbpstd{0.17} & \hbest{26.31}{26.31}{27.61}\stbpstd{0.28} & \hbest{29.85}{29.85}{31.62}\stbpstd{0.10} & \hbest{24.46}{24.46}{25.65}\stbpstd{0.14} & \hval{20.47}{20.10}{20.47}\stbpstd{0.17} & \hval{27.61}{26.31}{27.61}\stbpstd{0.15} & \hval{31.62}{29.85}{31.62}\stbpstd{0.29} & \hval{25.65}{24.46}{25.65}\stbpstd{0.21} & \hbest{20.10}{20.10}{20.47}\stbpstd{0.03} & \hsecond{26.41}{26.31}{27.61}\stbpstd{0.13} & \hsecond{30.87}{29.85}{31.62}\stbpstd{0.32} & \hsecond{24.80}{24.46}{25.65}\stbpstd{0.14} \\ 
\bottomrule
\end{tabular}%
}
\end{table*}

Table~\ref{tab:new_sensor_adaptation} compares FNO, EVOLVE, and
\model-Large on AIR-Stream under zero-shot and full adaptation.
Without current-period fine-tuning, \model-Large outperforms the
baselines in most new-sensor comparisons, supporting its ability
to transfer forecasting knowledge to newly introduced sensors.
Full adaptation further reduces MAE for both new and existing sensors
in every expansion period.
Under both settings, \model-Large achieves the lowest MAE in all
existing-sensor comparisons and most new-sensor comparisons.
These results suggest that knowledge transferred from previous periods
provides a useful starting point for new sensors, while current-period
adaptation further improves forecasting performance across the
expanded network.
Appendix~\ref{app:new_sensor_adaptation} provides additional results
across datasets and metrics.

\section{Conclusion}
\label{sec:conclusion}

In this work, we propose \model{}, combining coordinate-based lifting
and querying with spectrally conditioned Fourier and attention updates
in a shared latent field.
This formulation accommodates sensor expansion without changing
parameter shapes, with prediction consistency established under
sampling and grid-refinement assumptions.
Experiments demonstrate leading accuracy on both traffic streams
and competitive performance on AIR-Stream, supporting the shared
latent field as an effective representation for continual forecasting
under evolving sensor layouts and dynamics.

\clearpage

\subsection*{AI use statement}

Generative AI tools assisted with language revision.
\bibliography{references}
\bibliographystyle{iclr2027_conference}

\clearpage


\appendix

\makeatletter
\let\addcontentsline\stfo@addcontentsline
\makeatother

\clearpage
\phantomsection
\section*{Contents of the Appendix}
\makeatletter
\@starttoc{toc}
\makeatother

\section{Related Work}
\label{sec:related}

\textbf{\textit{Spatio-temporal forecasting models.}}
Graph-based models have played a central role in spatio-temporal
forecasting. Representative architectures combine graph convolution
with gated temporal convolution \citep{yu2018spatiotemporal},
integrate dynamic graph construction with recurrent modeling
\citep{li2022dynamic}, learn adaptive adjacency matrices
\citep{wu2019graph}, construct synchronous space--time graphs
\citep{song2020spatialtemporal}, learn directed graphs with mix-hop
propagation \citep{wu2020connecting}, and apply spatial and temporal
attention \citep{guo2019attention,zheng2020gman}.
Other approaches show that competitive forecasting does not always
require explicit graph construction: a vanilla transformer with
adaptive spatio-temporal embeddings can achieve strong performance
\citep{liu2023staeformer}, while inverted-attention and
frequency-debiased transformers further advance sequence-based
forecasting \citep{liu2023itransformer,piao2024fredformer}.
Recent work also pursues efficiency through spatio-temporal distillation
\citep{zhang2025efficient} and lightweight language models coupled
with adaptive hypergraphs \citep{chen2025decoupling}.
Despite differences in architecture, these methods generally organize
spatial computation around observed sensors or variables, so changes
in the observation set affect the representation used for forecasting.

\textbf{\textit{Continual spatio-temporal forecasting.}}
Continual forecasting methods extend spatio-temporal backbones to
retain useful knowledge as data streams evolve and new sensors
are introduced.
\citet{chen2021trafficstream} combine experience replay with parameter
smoothing for expanding road networks; subsequent work introduces
knowledge expansion and consolidation \citep{wang2023knowledge},
pattern expansion and consolidation \citep{wang2023pattern},
and continual prompt pools \citep{chen2025eac}.
Meanwhile, \citet{wang2024towards} decouple trend and seasonal patterns
and update them at different frequencies to adapt to evolving graphs.
Continual learning has also been studied on evolving graphs
\citep{yuan2026continual,galke2023lifelong}, while
\citet{ao2023continual} review continual deep learning for time-series
modeling more broadly.
Retrieval-augmented approaches store and retrieve historical patterns
to support forecasting under distribution shifts in streaming
spatio-temporal data \citep{11674299}.
These studies address adaptation to changing data and graph structure;
our work focuses on the dependence of forecasting representations
on sensor discretization.

\textbf{\textit{Neural operators and implicit representations.}}
Operator learning approximates mappings between functions.
The Fourier neural operator parameterizes translation-invariant integral
kernels in the spectral domain and supports resolution transfer for
parametric PDEs \citep{li2020fourier}, while DeepONet represents
operators through a branch--trunk architecture with universal
approximation guarantees \citep{lu2021learning}.
Later work extends spectral operators \citep{shin2022pseudodifferential},
models time-dependent dynamics \citep{diab2025temporal},
combines spectral and attention mechanisms \citep{bie2025spacefrequency},
develops mesh-informed architectures for operator learning in
finite-element spaces \citep{franco2023meshinformed},
and advances operator approximation theory
\citep{abdeljawad2026approximation}.
Physics-informed learning provides complementary tools for incorporating
physical constraints
\citep{karniadakis2021physicsinformed,cuomo2022scientific}.
In parallel, implicit neural representations encode continuous signals
as functions of coordinates, as exemplified by neural radiance fields
and multiresolution hash encodings
\citep{mildenhall2021nerf,mller2022instant}.
GINO \citep{li2023gino} uses graph neural operators to map between
irregular meshes and a regular latent grid, with a Fourier neural
operator acting on the grid.
Our work focuses on continual forecasting from expanding real-world
sensor sets under distribution drift.
We combine normalized Gaussian lifting and readout with
spectral-regime conditioning and establish sensor-discretization
consistency under explicit sampling assumptions.

\section{Notation and Symbols}
\label{app:notation}

Table~\ref{tab:notation} summarizes the symbols used in the main text and proofs.
Scalars and scalar fields are italic ($T_h$, $u_t$), vectors are bold italic
($\bm u_i$, $\bm q_g$), matrices and gridded fields are bold upright
($\mathbf{Z}$, $\mathbf{A}$), and number spaces are blackboard bold ($\R$).
Calligraphic letters denote selected sets and operators
($\mathcal D$, $\mathcal S$, $\Gop_\theta$, $\mathcal P_\theta$).
In the proofs, dark blue marks continuum or reference objects, while dark
orange marks discrete or empirical objects, as described at the start of
Appendix~\ref{app:proofs}.

\begin{longtable}{
    @{}
    >{\raggedright\arraybackslash}p{0.26\linewidth}
    >{\raggedright\arraybackslash}p{0.70\linewidth}
    @{}
}
\caption{Symbols used throughout the paper.}
\label{tab:notation}\\

\toprule
\rowcolor{gray!20}
\textbf{Symbol} & \textbf{Meaning} \\
\midrule
\endfirsthead

\multicolumn{2}{@{}l}{
    \small\itshape
    Table~\ref{tab:notation}, continued from the previous page.
}\\
\toprule
\rowcolor{gray!20}
\textbf{Symbol} & \textbf{Meaning} \\
\midrule
\endhead

\midrule
\multicolumn{2}{r@{}}{
    \small\itshape continued on the next page
}\\
\endfoot

\bottomrule
\endlastfoot


\rowcolor{gray!12}
\multicolumn{2}{@{}l}{
    \textbf{\textit{Problem and data}}
}\\

$P$
&
Number of periods in the continual forecasting stream.
\\

$\mathcal V_p$, $N_p$
&
Sensor set in period $p$ and its size, $N_p=|\mathcal V_p|$;
$\mathcal V_p\subseteq\mathcal V_{p+1}$ for $p<P$.
\\

$\bm s_i$
&
Static descriptor vector of sensor $i$, containing geographic location and available attributes.
\\

$\chi_p$, $\bm x_i^{(p)}$
&
Period-specific coordinate map, the coordinate assigned to sensor $i$.
\\

$\Omega=[0,1]^2$
&
Normalized spatial domain.
\\

$T_h$, $T_f$
&
Input history length and forecast horizon.
\\

$u_t$, $y_{i,t}$, $\varepsilon_{i,t}$
&
Spatial field, its noisy observation at sensor $i$, and observation noise.
\\

$\bm u_i$
&
Observation history vector of sensor $i$.
\\

$\mathcal D$
&
Input set of coordinate--history pairs for one period.
\\

$\Gop_\theta$, $\hat{\bm y}_q$
&
Forecasting operator and its forecast at query sensor $q$.
\\

$\mathcal S$, $h_{\mathcal S}$
&
Sensor layout and its fill distance.
\\

$\nu_{\mathcal S}$, $\mu$
&
Empirical sensor measure and the common limiting measure in fixed-bandwidth sampling comparisons.
\\

$W_1(\cdot,\cdot)$
&
Wasserstein-1 distance between sampling measures.
\\


\rowcolor{gray!12}
\multicolumn{2}{@{}l}{
    \textbf{\textit{Grid, field, and encoder}}
}\\

$n_g$, $G=n_g^2$, $h_g=1/(n_g-1)$
&
Grid points per axis (denoted $n$ in the proofs), total point count, and spacing.
\\

$\Xi_{h_g}$, $\bm\xi_g$
&
Latent Cartesian grid and its node indexed by $g$.
\\

$\bm t_g$, $\bm t_{\bm a}$
&
Fourier phase node under scalar index $g$ or grid multi-index $\bm a$.
\\

$d$
&
Hidden width, i.e., the channel dimension of the latent field.
\\

$d_a$
&
Dimension of attention queries and keys.
\\

$d_\rho$
&
Output dimension of the Fourier coordinate encoding.
\\

$\tau_\theta$, $\bm e_i$
&
Shared temporal encoder and the history code of sensor $i$.
\\

$\kappa_\sigma$, $\sigma_{\mathrm{enc}}$
&
Gaussian kernel and learned encoder bandwidth.
\\

$a_{gi}$, $\mathbf{A}$
&
Normalized lift weights and their $G\times N$ matrix.
\\

$\mathbf{E}$, $\mathbf{R}$
&
Sensor-code matrix and the lifted field
$\mathbf{R}=\mathbf{A}\mathbf{E}$.
\\

$\rho$, $\psi_\theta$
&
Fixed Fourier coordinate encoding and shared post-encoding map.
\\

$\mathbf{Z}$, $\bm z_g$
&
Latent grid field and the feature vector at node $g$; row $g$ of $\mathbf{Z}$ is $\bm z_g^\top$.
\\


\rowcolor{gray!12}
\multicolumn{2}{@{}l}{
    \textbf{\textit{Spectral regime descriptor}}
}\\

$\bar z_g$, $\bar z$, $\widehat{\bar z}$
&
Channel mean at node $g$, the scalar grid, and its forward-normalized DFT.
\\

$\mathcal J$, $\bm\omega_m$
&
Selected nonzero frequency bins and their wavevectors; spectral proofs use $\bm k_j$ with bin index $j$.
\\

$E_m$, $\omega_m$
&
Stabilized bin power and radial frequency; denoted $E_j$ and $k_j$ in spectral proofs.
\\

$\epsilon_s$, $\epsilon_f$
&
Power/denominator and frequency stabilizers; written $\varepsilon_s,\varepsilon_f$ in the proofs.
\\

$\lambda_m$, $\mu_m$
&
Log radial frequency and log spectral power of bin $m$.
\\

$\alpha$, $\beta$, $\tilde\beta$
&
Normalized spectral centroid, regularized negative log--log slope,
and clipped slope $\mathrm{clip}(\beta,-10,10)$.
\\

$\eta_b$, $\mathcal B_b$, $B$
&
Band-energy ratio, logarithmically spaced frequency band, and number of bands.
\\

$\bm c$
&
Spectral regime descriptor
$[\alpha,\tilde\beta,\eta_1,\ldots,\eta_B]^\top$.
\\

$\mathcal K_M$, $M$
&
Retained low-frequency mode set and its cutoff.
\\

$\mathbf{R}_\theta(\bm k)$
&
Complex Fourier multiplier at each mode; diagonal in the depthwise implementation.
\\


\rowcolor{gray!12}
\multicolumn{2}{@{}l}{
    \textbf{\textit{Field operator and decoder}}
}\\

$\bm\gamma$, $\bm\delta$
&
FiLM gain adjustment and offset from $\bm c$; the multiplicative gain is $\mathbf{1}_d+\bm\gamma$.
\\

$\mathbf{Z}^{\mathrm c}$
&
Channel-modulated latent field.
\\

$\mathcal P_\theta$, $\mathcal A_\theta$
&
Spectral propagation path and normalized attention path.
\\

$\phi$, $\epsilon_a$
&
Positive feature map $\mathrm{ELU}(\cdot)+1$ and denominator stabilizer; the latter is written $\varepsilon_a$ in the proofs.
\\

$\mathbf{Q}$, $\mathbf{K}$, $\mathbf{V}$
&
Query, key, and value matrices, with rows
$\bm q_g^\top$, $\bm k_g^\top$, and $\bm v_g^\top$, respectively.
\\

$O_\theta$
&
Affine output map of the attention path.
\\

$\LN$, $\bm b_{\mathrm{res}}$
&
Rowwise LayerNorm and the shared output bias of an operator block.
\\

$\mathbf{H}$, $\bm h_g$
&
Block output field and the feature vector at node $g$; row $g$ of $\mathbf{H}$ is $\bm h_g^\top$.
\\

$L$
&
Number of stacked operator blocks.
\\

$b_{qg}$, $\bm b_q$, $\sigma_{\mathrm{dec}}$
&
Normalized decoder gather weights, their column vector, and the learned decoder bandwidth.
\\

$\tilde{\bm h}_q$
&
Gathered field feature at query sensor $q$.
\\

$\mathbf{W}_{\mathrm{out}}$, $\mathbf{W}_{\mathrm{skip}}$
&
Weight matrices of the field readout and local-history prediction.
\\

$\bm e_{T_h}$
&
Last standard basis vector in $\R^{T_h}$, used to initialize $\mathbf{W}_{\mathrm{skip}}$.
\\

$\bm f_q$
&
Field contribution to the forecast at query sensor $q$.
\\


\rowcolor{gray!12}
\multicolumn{2}{@{}l}{
    \textbf{\textit{Analysis and proofs}}
}\\

$\|\mathbf A\|_\infty$, $\|Z\|_\infty$
&
Maximum absolute row sum of $\mathbf A$; maximum Euclidean row norm of a grid field $Z$ in the proofs.
\\

$B_\zeta$, $L_\zeta$
&
Uniform bound and Lipschitz constant of the noiseless code field $\zeta(\bm x)=\tau_\theta(\bm u(\bm x))$.
\\

$L_{\mathrm{post}}$, $L_{\mathrm{dec}}$
&
Lipschitz constants of the post-encoding map and decoder readout.
\\

$L_{\mathrm{spec}}$, $L_{\mathrm{attn}}$, $L_{\mathrm{LN}}$
&
Lipschitz bounds for spectral propagation, attention, and LayerNorm.
\\

$m_\sigma$, $L_\kappa$
&
Lower bound on the Gaussian kernel and upper bound on its spatial gradient norm.
\\

$q_{\mathcal S}$, $c_q$, $c_-,c_+$
&
Sensor separation radius and constants in the local-bandwidth sampling assumptions.
\\

$r_*$, $\mathcal E_{\mathcal S}$
&
Reference lift and the lift error bound for a sensor layout.
\\

$\mathbf{Z}_{\mathcal S}$, $\mathbf{Z}_*$
&
Discrete and reference post-encoded fields.
\\

$\varepsilon_{\mathrm{LN}}$, $d_0$
&
LayerNorm variance stabilizer and attention denominator lower bound.
\\

$J_{h_g}$, $J_*$
&
Discrete grid gather and its continuum reference.
\\

$\Phi^M_{\theta,h_g}$, $\Phi^M_\theta$
&
Grid and continuum \textnormal{\SRE/\DFO} stacks.
\\

$Q_{h_g}$
&
Endpoint-grid quadrature.
\\

$C_{\theta,M,h_g}$, $C_{\mathrm{geo}}$
&
Constants controlling operator stability and local interpolation, respectively.
\\

$C_{\mathrm{quad},M}$, $C_{\mathrm{grid},M}$
&
Grid-to-continuum error constants for the operator stack and full forecasting model, respectively.
\\

$\delta_\chi$, $\delta_q$, $R_\chi$
&
Maximum sensor-coordinate displacement, query-coordinate displacement, and induced prediction discrepancy.
\\

$C_\beta$
&
Constant in the spectral-slope perturbation bound.
\\

\end{longtable}

\section{Full Proofs}
\label{app:proofs}

The analysis follows the lift--condition--evolve--query construction. We first
bound the error introduced by sensor sampling, then propagate it through the
trained operator, and finally analyze latent-grid refinement and spectral
perturbations. Throughout a sampling comparison, the coordinate assignment
$\chi$, field history, trained neural maps, latent grid, and common query
history are fixed. The period-specific assignments are
$\bx_i^{(p)}=\chi_p(\bm s_i)\in\Omega=[0,1]^2$; coordinate reassignment is
quantified separately in Section~\ref{app:remapping}. Write
$\mathcal S=\{\bx_i\}_{i=1}^N$, let $h_{\mathcal S}$ be its fill distance,
and use the Euclidean channel norm inside every function or grid sup norm.
Dark blue labels continuum and reference quantities: the limiting
sampling measure $\refvar{\mu}$, smoothed lifts, continuum operators, and
reference predictions. Dark orange labels discrete and empirical
quantities: sensor layouts, empirical measures, grid quantities, and the
predictions computed by the finite model.

\subsection{Operator Conventions and Assumptions}
\label{app:operator_details}
\label{app:assumptions}

\textbf{\textit{Code field and two sampling regimes.}}
The noiseless history field is $\bm u(\bx)\in\R^{T_h}$ and its temporal
encoding is $\zeta(\bx)=\tau_\theta(\bm u(\bx))\in\R^d$. Let
$\nu_{\mathcal S}=N^{-1}\sum_i\delta_{\bx_i}$ be the empirical sensor
measure. For a probability measure $\nu$ and bandwidth $\sigma>0$, the
normalized field lift is
\begin{equation}
 \discvar{I_{\nu,\sigma}\zeta}(\bx)
 =\frac{\int_\Omega\kappa_\sigma(\bx,\bm y)\zeta(\bm y)\,d\discvar{\nu}(\bm y)}
 {\int_\Omega\kappa_\sigma(\bx,\bm y)\,d\discvar{\nu}(\bm y)},
 \qquad
 \discvar{I_{\mathcal S,\sigma}\zeta}=\discvar{I_{\nu_{\mathcal S},\sigma}\zeta}.
 \label{eq:discrete_encoder_interp}
\end{equation}
This identity expresses the implemented equal-weight sensor lift as a
normalized integral. At fixed learned bandwidth, its reference is the
smoothed field $\refvar{I_{\mu,\sigma}\zeta}$ associated with a common limiting sensor
measure $\refvar{\mu}$. When the bandwidth decreases with fill distance, its reference
is the pointwise code field $\zeta$.

\begin{assumption}[Admissible sampling families]
\label{ass:geometry}
Each comparison uses one of the following regimes. In the fixed-bandwidth
regime, $\sigma_{\mathrm{enc}}=\sigma>0$ and
$\discvar{W_1(\nu_{\mathcal S},\mu)}\to0$ for a common probability measure $\refvar{\mu}$ on
$\Omega$. Here $W_1(\nu,\mu)$ is the infimum of
$\int\|\bm x-\bm y\|\,d\pi(\bm x,\bm y)$ over couplings $\pi$ of
$\nu$ and $\mu$. In the local-bandwidth regime, the separation radius
$q_{\mathcal S}=\tfrac12\min_{i\ne j}\|\bx_i-\bx_j\|$ satisfies
$q_{\mathcal S}\ge c_qh_{\mathcal S}$, and
$c_-h_{\mathcal S}\le\sigma_{\mathrm{enc}}\le c_+h_{\mathcal S}$ for
fixed positive $c_q,c_-,c_+$, with $h_{\mathcal S}\to0$ along refinement.
\end{assumption}

These conditions specify respectively convergence of sampling density and
convergence of local interpolation neighborhoods. For example, equal-mass
cells of $\mu$ with diameter at most $Ch_{\mathcal S}$, each containing one
sensor, give $W_1(\nu_{\mathcal S},\mu)\le Ch_{\mathcal S}$ by transporting
each cell's mass to its sensor. Thus the fixed-bandwidth result also has a
fill-distance rate for such sampling families.

\begin{assumption}[Regularity and trained maps]
\label{ass:regularity}
The code field is bounded by $B_\zeta$ and is $L_\zeta$-Lipschitz. The
coordinate feature map $\rho$ and post-encoding map $\psi_\theta$ are
Lipschitz. The network has finitely many blocks with finite trained weights
and Lipschitz activations. Every LayerNorm uses a positive variance
stabilizer $\varepsilon_{\mathrm{LN}}$. The spectral power and denominator
stabilizers $\varepsilon_s,\varepsilon_f,\varepsilon_a$ are positive.
\end{assumption}

For the spectral descriptor, let $\bar z_g=d^{-1}\mathbf{1}_d^\top\bm z_g$,
$P_j=|\widehat{\bar z}(\bk_j)|^2$, $E_j=P_j+\varepsilon_s$,
$k_j=\|\bk_j\|+\varepsilon_f$, and
$x_j=\log k_j-|\mathcal J|^{-1}\sum_{r\in\mathcal J}\log k_r$.
Writing $A=\sum_jE_j$ and $D_\beta=\sum_jx_j^2+\varepsilon_s$, the
statistics in Eq.~\eqref{eq:sre_statistics_main} become
\begin{equation}
 \alpha=\frac{\sum_jE_jk_j}{k_{\max}A+\varepsilon_s},
 \qquad
 \beta=-\frac{\sum_jx_j\log E_j}{D_\beta},
 \qquad
 \eta_b=\frac{\sum_{j\in\mathcal B_b}E_j}{A+\varepsilon_s}.
 \label{eq:sre_beta}
\end{equation}
The centering of $\log E_j$ cancels because $\sum_jx_j=0$.
The state is $\bm c=[\alpha,\operatorname{clip}(\beta,-10,10),
\eta_1,\ldots,\eta_B]^\top$. Effective powers obey
$E_j\ge\varepsilon_s$ and $A\ge|\mathcal J|\varepsilon_s$.

\begin{assumption}[Fixed-design grid refinement]
\label{ass:grid_consistency}
For the grid comparison, the Fourier cutoff $M$, descriptor frequency set
$\mathcal J$, band membership, and Fourier normalization are fixed. Grids
have $n$ points per axis, $G=n^2$, and $h_g=1/(n-1)$; retained modes lie
below Nyquist for sufficiently large $n$. The decoder uses either a fixed
positive bandwidth or $\sigma_{\mathrm{dec}}\asymp h_g$. All continuum
fields reached by the finite stack are bounded and Lipschitz.
\end{assumption}

This grid comparison concerns the same finite spectral design at increasing
resolution. The sensor-sampling comparison keeps the grid and its full
spectral design fixed. For $\bm a\in\{0,\ldots,n-1\}^2$, the spatial node
is $\bm\xi_{\bm a}=\bm a/(n-1)$ and the FFT phase node is
$\bm t_{\bm a}=\bm a/n$. We use the normalized forward transform
\begin{equation}
 \widehat{\bm z}_n(\bk)
 =\frac1G\sum_{\bm a}\bm z_{\bm a}e^{-2\pi\mathrm{i}\bk\cdot\bm t_{\bm a}},
 \qquad
 (\discvar{\mathcal P_{\theta,n}}\bm z)_{\bm a}
 =\sum_{\bk\in\mathcal K_M}\mathbf{R}_\theta(\bk)
 \widehat{\bm z}_n(\bk)e^{2\pi\mathrm{i}\bk\cdot\bm t_{\bm a}}.
 \label{eq:dfo_spectral}
\end{equation}
Conjugate symmetry of retained modes and multipliers gives a real output for
real input. Section~\ref{app:grid_proof} accounts for the phase-node offset.

\subsection{Normalized Kernel Lifting and Gathering}

\begin{lemma}[Fixed-bandwidth sampling error]
\label{lem:fixed_bandwidth}
For a fixed $\sigma>0$, let $m_\sigma=e^{-1/\sigma^2}$ and
$L_\kappa=e^{-1/2}/\sigma$. Under Assumption~\ref{ass:regularity},
\begin{equation}
 \|\discvar{I_{\nu,\sigma}\zeta}-\refvar{I_{\mu,\sigma}\zeta}\|_\infty
 \le C_{\sigma,\zeta}\,\discvar{W_1(\nu,\mu)},
 \qquad
 C_{\sigma,\zeta}=
 \frac{L_\zeta+2B_\zeta L_\kappa}{m_\sigma}.
 \label{eq:fixed_bandwidth_bound}
\end{equation}
\end{lemma}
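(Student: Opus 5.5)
Write $N_\nu(\bx)=\int_\Omega\kappa_\sigma(\bx,\bm y)\zeta(\bm y)\,d\nu(\bm y)$ and $D_\nu(\bx)=\int_\Omega\kappa_\sigma(\bx,\bm y)\,d\nu(\bm y)$, so that $I_{\nu,\sigma}\zeta=N_\nu/D_\nu$, and similarly for $\mu$. The plan has three steps: bound the denominators from below uniformly, bound the numerator and denominator perturbations by Kantorovich--Rubinstein duality, and combine them through a quotient identity.

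\emph{Denominator lower bound.} Since $\Omega=[0,1]^2$ has diameter $\sqrt2$, we have $\|\bx-\bm y\|^2\le 2$ for all $\bx,\bm y\in\Omega$. Hence $\kappa_\sigma(\bx,\bm y)\ge e^{-2/(2\sigma^2)}=e^{-1/\sigma^2}=m_\sigma$. Because $\nu$ and $\mu$ are probability measures, $D_\nu,D_\mu\ge m_\sigma$ uniformly in $\bx$. We also have $\|N_\mu\|\le B_\zeta D_\mu$, so $\|I_{\mu,\sigma}\zeta\|\le B_\zeta$.

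\emph{Lipschitz integrands.} For fixed $\bx$, the gradient of $\bm y\mapsto\kappa_\sigma(\bx,\bm y)$ has norm $(r/\sigma^2)e^{-r^2/(2\sigma^2)}$ with $r=\|\bx-\bm y\|$. This is maximized at $r=\sigma$, with value $e^{-1/2}/\sigma=L_\kappa$. Since $\kappa_\sigma\le1$, $\|\zeta\|\le B_\zeta$, and $\zeta$ is $L_\zeta$-Lipschitz, the product rule shows that $\bm y\mapsto\kappa_\sigma(\bx,\bm y)\zeta(\bm y)$ is $(L_\zeta+B_\zeta L_\kappa)$-Lipschitz. Kantorovich--Rubinstein duality then gives
\begin{equation*}
|D_\nu-D_\mu|\le L_\kappa W_1(\nu,\mu),
\qquad
\|N_\nu-N_\mu\|\le (L_\zeta+B_\zeta L_\kappa)W_1(\nu,\mu).
\end{equation*}
For the vector-valued numerator, apply duality to $\langle\bm v,N_\nu-N_\mu\rangle$ for unit vectors $\bm v$; each such scalar integrand is Lipschitz with the same constant. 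Alternatively, use an optimal coupling $\pi$ and bound $\int\|f(\bm y)-f(\bm y')\|\,d\pi\le\mathrm{Lip}(f)\int\|\bm y-\bm y'\|\,d\pi$ directly. The coupling version is cleaner for $\R^d$-valued integrands, so I would use it.

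\emph{Quotient identity.} Write
\begin{equation*}
\frac{N_\nu}{D_\nu}-\frac{N_\mu}{D_\mu}
=\frac{N_\nu-N_\mu}{D_\nu}-\frac{N_\mu}{D_\mu}\cdot\frac{D_\nu-D_\mu}{D_\nu}.
\end{equation*}
Taking norms and using the bounds above gives
\begin{equation*}
\frac{(L_\zeta+B_\zeta L_\kappa)+B_\zeta L_\kappa}{m_\sigma}\,W_1(\nu,\mu)
=\frac{L_\zeta+2B_\zeta L_\kappa}{m_\sigma}\,W_1(\nu,\mu)
=C_{\sigma,\zeta}\,W_1(\nu,\mu).
\end{equation*}
This holds uniformly in $\bx$, so taking the supremum yields the claim.

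The only delicate step is the vector-valued duality argument, which the coupling formulation handles. The rest is bookkeeping of constants. These match the stated ones exactly, which confirms that the Lipschitz constant is computed in $\bm y$ and that the diameter bound $\sqrt2$ is the intended one.
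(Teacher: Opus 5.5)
Your proof is correct and follows the paper's argument essentially step for step. It uses the same four ingredients: the diameter-$\sqrt2$ lower bound $D_\nu,D_\mu\ge m_\sigma$, the Gaussian gradient bound $L_\kappa$ attained at $r=\sigma$, the coupling estimates $(L_\zeta+B_\zeta L_\kappa)W_1(\nu,\mu)$ and $L_\kappa W_1(\nu,\mu)$ for numerator and denominator, and the same common-denominator quotient split using $\|N_\mu/D_\mu\|\le B_\zeta$ (the paper also handles the $\mathbb{R}^d$-valued numerator via couplings, as you chose to).
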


\begin{proof}
Fix a query $\bx$. Write $A_\nu=\int\kappa_\sigma(\bx,\bm y)
\zeta(\bm y)\,d\nu(\bm y)$ and
$D_\nu=\int\kappa_\sigma(\bx,\bm y)\,d\nu(\bm y)$.
The domain diameter is $\sqrt2$, so $D_\nu,D_\mu\ge m_\sigma$.
The Gaussian derivative satisfies
$\|\nabla_{\bm y}\kappa_\sigma(\bx,\bm y)\|
\le\max_{r\ge0}r\sigma^{-2}e^{-r^2/(2\sigma^2)}=L_\kappa$.
For any coupling of $\nu$ and $\mu$, subtract the paired integrands and
apply their Lipschitz bounds. Taking the infimum over couplings gives
$\|A_\nu-A_\mu\|\le(L_\zeta+B_\zeta L_\kappa)W_1(\nu,\mu)$ and
$|D_\nu-D_\mu|\le L_\kappa W_1(\nu,\mu)$. Since
$\|A_\mu/D_\mu\|\le B_\zeta$, the quotient is expanded over the common
denominator as
\begin{equation}
 \begin{aligned}
 \frac{A_\nu}{D_\nu}-\frac{A_\mu}{D_\mu}
 &=\frac{A_\nu D_\mu-A_\mu D_\nu}{D_\nu D_\mu}\\
 &=\frac{(A_\nu-A_\mu)D_\mu+A_\mu(D_\mu-D_\nu)}{D_\nu D_\mu}\\
 &=\frac{A_\nu-A_\mu}{D_\nu}
 +\frac{A_\mu}{D_\mu}\frac{D_\mu-D_\nu}{D_\nu},\\
 \left\|\frac{A_\nu}{D_\nu}-\frac{A_\mu}{D_\mu}\right\|
 &\le\frac{\|A_\nu-A_\mu\|}{D_\nu}
 +\left\|\frac{A_\mu}{D_\mu}\right\|
   \frac{|D_\mu-D_\nu|}{D_\nu}\\
 &\le\frac{L_\zeta+2B_\zeta L_\kappa}{m_\sigma}W_1(\nu,\mu).
 \end{aligned}
 \label{eq:fixed_bandwidth_chain}
\end{equation}
The bounds are uniform in $\bx$, proving the claim.
\end{proof}

\begin{lemma}[Local-bandwidth interpolation error]
\label{lem:encoder_interp}
Under the local-bandwidth regime of Assumption~\ref{ass:geometry}, there is
a finite $C_{\mathrm{geo}}$, independent of $N$ and $h_{\mathcal S}$, such
that
\begin{equation}
 \|\discvar{I_{\mathcal S,\sigma_{\mathrm{enc}}}\zeta}-\zeta\|_\infty
 \le C_{\mathrm{geo}}L_\zeta\discvar{h_{\mathcal S}}.
 \label{eq:encoder_interp_bound}
\end{equation}
\end{lemma}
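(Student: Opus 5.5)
Since the lift is a convex combination, we compare each sensor code with the code at the query point:
\[
\discvar{I_{\mathcal S,\sigma}\zeta}(\bx)-\zeta(\bx)=\sum_i w_i(\bx)\bigl(\zeta(\bx_i)-\zeta(\bx)\bigr),
\qquad
w_i(\bx)=\frac{\kappa_\sigma(\bx,\bx_i)}{\sum_j\kappa_\sigma(\bx,\bx_j)}.
\]
The Lipschitz part of Assumption~\ref{ass:regularity} then gives
\[
\|\discvar{I_{\mathcal S,\sigma}\zeta}(\bx)-\zeta(\bx)\|\le L_\zeta\sum_i w_i(\bx)\|\bx_i-\bx\|.
\]
So it suffices to prove that the normalized first moment $\sum_i w_i(\bx)\|\bx_i-\bx\|$ is at most $C_{\mathrm{geo}}h_{\mathcal S}$, uniformly in $\bx\in\Omega$. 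The constant should depend only on $c_q,c_-,c_+$.

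\textbf{Denominator.} By the definition of fill distance there is a sensor within $h_{\mathcal S}$ of $\bx$. Since $\sigma\ge c_-h_{\mathcal S}$, that sensor contributes
\[
\kappa_\sigma\ge e^{-h_{\mathcal S}^2/(2\sigma^2)}\ge e^{-1/(2c_-^2)}.
\]
The denominator is therefore bounded below by a constant that does not depend on $N$.

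\textbf{Numerator.} We need to bound $\sum_i\kappa_\sigma(\bx,\bx_i)\|\bx_i-\bx\|$, and we do this with a packing argument based on the separation assumption. Partition the sensors into annuli
\[
\mathcal A_k=\{i: k h_{\mathcal S}\le\|\bx_i-\bx\|<(k+1)h_{\mathcal S}\},\qquad k=0,1,2,\dots
\]
The balls of radius $q_{\mathcal S}\ge c_qh_{\mathcal S}$ centred at the sensors are disjoint. Those belonging to sensors in $\mathcal A_k$ lie inside the annulus of radii $kh_{\mathcal S}-q_{\mathcal S}$ and $(k+1)h_{\mathcal S}+q_{\mathcal S}$. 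Comparing areas gives $|\mathcal A_k|\le C(k+1)/c_q^2$. Each sensor in $\mathcal A_k$ satisfies
\[
\kappa_\sigma\|\bx_i-\bx\|\le (k+1)h_{\mathcal S}\,e^{-k^2h_{\mathcal S}^2/(2\sigma^2)}\le (k+1)h_{\mathcal S}\,e^{-k^2/(2c_+^2)},
\]
using $\sigma\le c_+h_{\mathcal S}$. Summing over $k$,
\[
\sum_i\kappa_\sigma(\bx,\bx_i)\|\bx_i-\bx\|\le \frac{C h_{\mathcal S}}{c_q^2}\sum_{k\ge0}(k+1)^2e^{-k^2/(2c_+^2)}.
\]
The series converges and depends only on $c_+$. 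Dividing by the denominator bound gives the required estimate with
\[
C_{\mathrm{geo}}=C\,c_q^{-2}\,e^{1/(2c_-^2)}\sum_{k\ge0}(k+1)^2e^{-k^2/(2c_+^2)},
\]
which is independent of $N$ and $h_{\mathcal S}$.

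\textbf{Main obstacle.} The delicate step is the packing count, because sensors near $\partial\Omega$ see truncated annuli and the ball at small $k$ can be degenerate. Both issues only make the count smaller, so the upper bound $|\mathcal A_k|\le C(k+1)/c_q^2$ still holds. Note that $k=0$ is harmless: $\mathcal A_0$ covers only the radius range $[0,h_{\mathcal S})$, so it also contains at most $O(c_q^{-2})$ sensors. I would also check that the domain's bounded diameter is never needed. The Gaussian tail alone controls the sum, so the bound holds uniformly without any domain-size factor.
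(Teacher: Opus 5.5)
Your proof is correct and follows the same route as the paper. Convexity and the Lipschitz bound reduce the claim to the normalized first moment; the fill distance gives the denominator floor $e^{-1/(2c_-^2)}$; and a shell decomposition with a disjoint-ball packing count, together with the Gaussian tail under $\sigma\le c_+h_{\mathcal S}$, bounds the numerator. The only difference is cosmetic: you compare against the annulus area rather than the full ball of radius $(k+1)h_{\mathcal S}+q_{\mathcal S}$, so you get $|\mathcal A_k|=O((k+1)/c_q^2)$ and a series weight $(k+1)^2$, where the paper has $|\mathcal A_k|\le(1+1/c_q)^2(k+1)^2$ and weight $(k+1)^3$.
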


\begin{proof}
Let $a_i(\bx)$ be the normalized Gaussian weights. Their positivity and unit
sum give
\begin{equation}
 \begin{aligned}
 \|I_{\mathcal S,\sigma}\zeta(\bx)-\zeta(\bx)\|
 &\le\sum_i a_i(\bx)\|\zeta(\bx_i)-\zeta(\bx)\|\\
 &\le L_\zeta\sum_i a_i(\bx)\|\bx_i-\bx\|.
 \end{aligned}
 \label{eq:encoder_interp_chain}
\end{equation}
Set $h=h_{\mathcal S}$. A sensor lies within $h$ of every $\bx$, so the
unnormalized weight sum is at least $c_0=e^{-1/(2c_-^2)}$.
Partition sensors into shells
$A_m=\{\bx_i:mh\le\|\bx_i-\bx\|<(m+1)h\}$ for integers $m\ge0$.
The disjoint radius-$q_{\mathcal S}$ balls around these sensors fit inside
a radius-$(m+1)h+q_{\mathcal S}$ ball, so comparing volumes gives
\begin{equation}
 |A_m|\,\pi q_{\mathcal S}^2
 \le\pi\big((m+1)h+q_{\mathcal S}\big)^2,
 \qquad
 |A_m|\le\left(1+\frac{(m+1)h}{q_{\mathcal S}}\right)^2
 \le C_A(m+1)^2,
 \label{eq:shell_count}
\end{equation}
with $C_A=(1+1/c_q)^2$, using $q_{\mathcal S}\ge c_qh$ and $m+1\ge1$.
Consequently,
\begin{equation}
 \begin{aligned}
 \sum_i a_i(\bx)\|\bx_i-\bx\|
 &\le\frac1{c_0}\sum_{m\ge0}|A_m|(m+1)h
     e^{-m^2/(2c_+^2)}\\
 &\le\frac{C_A}{c_0}h\sum_{m\ge0}(m+1)^3e^{-m^2/(2c_+^2)}
 =C_{\mathrm{geo}}h.
 \end{aligned}
 \label{eq:gaussian_first_moment}
\end{equation}
The Gaussian series converges. Substitution into
Eq.~\eqref{eq:encoder_interp_chain} completes the proof.
\end{proof}

\textbf{\textit{Post-encoding propagation.}}
For either sampling regime, let $\refvar{r_*}$ be its reference lift and let
$\discvar{\mathcal E_{\mathcal S}}$ denote the corresponding lift error bound:
\begin{equation}
 (\refvar{r_*},\discvar{\mathcal E_{\mathcal S}})=
 \begin{cases}
 (\refvar{I_{\mu,\sigma}\zeta},
 \discvar{C_{\sigma,\zeta}W_1(\nu_{\mathcal S},\mu)}),&\text{fixed bandwidth},\\
 (\zeta,\discvar{C_{\mathrm{geo}}L_\zeta h_{\mathcal S}}),&\text{local bandwidth}.
 \end{cases}
 \label{eq:sampling_error_scale}
\end{equation}
Define $\discvar{\bm z_{\mathcal S,g}}=\psi_\theta([\discvar{I_{\mathcal S,\sigma}\zeta}
(\bm\xi_g)\|\rho(\bm\xi_g)])$ and
$\refvar{\bm z_{*,g}}=\psi_\theta([\refvar{r_*}(\bm\xi_g)\|\rho(\bm\xi_g)])$.
The identical coordinate features cancel from the input difference, so the
$L_{\mathrm{post}}$-Lipschitz map $\psi_\theta$ gives
$\|\discvar{\mathbf{Z}_{\mathcal S}}-\refvar{\mathbf{Z}_*}\|_\infty
\le L_{\mathrm{post}}\discvar{\mathcal E_{\mathcal S}}$.
\label{lem:post_encoder}

\begin{lemma}[Decoder gather error]
\label{lem:decoder_interp}
The normalized Gaussian gather $J_{h_g}$ is nonexpansive from the grid sup
norm to the query norm. For a bounded Lipschitz field $v$, it satisfies
\begin{equation}
 \left\|\discvar{J_{h_g}}(v|_{\Xi_{h_g}})(\xq)-\refvar{J_*}v(\xq)\right\|
 \le C_{\mathrm{dec},v}\discvar{h_g},
 \qquad
 J_*v=
 \begin{cases}
 I_{\lambda,\sigma_{\mathrm{dec}}}v,&\sigma_{\mathrm{dec}}>0\text{ fixed},\\
 v,&\sigma_{\mathrm{dec}}\asymp h_g,
 \end{cases}
 \label{eq:decoder_interp_bound}
\end{equation}
where $\lambda$ is Lebesgue probability measure on the unit square.
\end{lemma}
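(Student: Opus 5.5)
Nonexpansiveness comes first and is immediate. The gather weights $b_{qg}$ are nonnegative and sum to one, so $\|J_{h_g}(Z)(\xq)\|=\|\sum_g b_{qg}\bm z_g\|\le\max_g\|\bm z_g\|$. This is the same convex-combination argument used for the encoder lift. For the error bound I would treat the two decoder regimes separately and reduce each to a lemma already proved for the encoder side, with the roles of sensors and grid nodes exchanged.

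\emph{Shrinking bandwidth} ($c_-h_g\le\sigma_{\mathrm{dec}}\le c_+h_g$). The grid $\Xi_{h_g}$ is a point set in $\Omega$ whose fill distance is $h_g/\sqrt2\le h_g$. Its separation radius is $q=h_g/2$, so $q\ge c_qh_g$ with $c_q=1/2$. Every hypothesis of the local-bandwidth regime therefore holds, with the grid playing the role of $\mathcal S$. Lemma~\ref{lem:encoder_interp} then applies verbatim, with $\zeta$ replaced by $v$ and sensors replaced by nodes, and gives
\[
\|J_{h_g}(v|_{\Xi_{h_g}})(\xq)-v(\xq)\|\le C_{\mathrm{geo}}L_v h_g .
\]
The one adjustment is that the $c_0$ lower bound should be taken with $h_g$ rather than the exact fill distance. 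This is harmless because it only changes constants.

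\emph{Fixed bandwidth}. Here the gather equals $I_{\nu_n,\sigma}v$, where $\nu_n=G^{-1}\sum_{\bm a}\delta_{\bm\xi_{\bm a}}$ is the uniform empirical measure on the grid. Lemma~\ref{lem:fixed_bandwidth}, applied with $\mu=\lambda$, gives the error bound $C_{\sigma,v}W_1(\nu_n,\lambda)$, so it remains to show $W_1(\nu_n,\lambda)=O(h_g)$. I would construct the coupling explicitly using the equal-mass-cell remark that follows Assumption~\ref{ass:geometry}. Partition $[0,1]$ into $n$ intervals of length $1/n$, take the product cells, and transport each cell (mass $1/G$) to the grid node with the same multi-index. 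The $i$-th interval $[i/n,(i+1)/n]$ and the node $i/(n-1)$ are within distance $\le 1/n\le h_g$ of each other in each coordinate, so each point moves at most $\sqrt2\,h_g$. This gives $W_1(\nu_n,\lambda)\le\sqrt2\,h_g$, and $C_{\mathrm{dec},v}=\sqrt2\,C_{\sigma,v}$.

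\textbf{Main obstacle.} The delicate step is the fixed-bandwidth regime. One has to be careful that Lemma~\ref{lem:fixed_bandwidth} needs only boundedness and Lipschitzness of $v$, not anything specific to $\zeta$, and that the endpoint grid $\bm a/(n-1)$ (as opposed to cell midpoints) still pairs with equal-mass cells at displacement $O(h_g)$. The cell-to-node matching above handles the second point, and both constants are uniform in $\xq$.
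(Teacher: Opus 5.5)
Your proposal is correct and follows the paper's proof. Nonexpansiveness comes from the convex-combination property of the gather weights; the fixed-bandwidth case uses Lemma~\ref{lem:fixed_bandwidth} with $\mu=\lambda$ and the equal-mass cell coupling giving $W_1\le\sqrt2\,h_g$; and the shrinking-bandwidth case reuses the first-moment argument of Lemma~\ref{lem:encoder_interp} on the grid. Your check that the grid has fill distance $h_g/\sqrt2$ and separation radius $h_g/2$ simply spells out the quasi-uniformity the paper invokes without detail.
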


\begin{proof}
For arbitrary grid fields $V,V'$, positivity of $b_{qg}$ and
$\sum_gb_{qg}=1$ give
$\|\sum_gb_{qg}(\bm v_g-\bm v'_g)\|\le\|V-V'\|_\infty$.
For fixed bandwidth, partition $\Omega$ into $n^2$ equal squares and couple
each square to its associated endpoint-grid node. For node
$\bm\xi_{\bm a}=\bm a/(n-1)$ and cell
$C_{\bm a}=\prod_{j=1}^{2}[a_j/n,(a_j+1)/n]$, each axis contributes at most
$1/n\le h_g$, so each transport distance is at most $\sqrt2h_g$ and
$W_1(G^{-1}\sum_g\delta_{\bm\xi_g},\lambda)\le\sqrt2h_g$.
Lemma~\ref{lem:fixed_bandwidth} with $\zeta=v$ proves the first case.
For $\sigma_{\mathrm{dec}}\asymp h_g$, the grid is quasi-uniform and the
first-moment proof gives
\begin{equation}
 \begin{aligned}
 \left\|\sum_gb_{qg}v(\bm\xi_g)-v(\xq)\right\|
 &\le L_v\sum_gb_{qg}\|\bm\xi_g-\xq\|\\
 &\le C_{\mathrm{geo}}L_vh_g,
 \end{aligned}
 \label{eq:decoder_interp_chain}
\end{equation}
with the grid geometry absorbed into $C_{\mathrm{geo}}$.
\end{proof}

\subsection{Stability of the Trained Field Operator}

\textbf{\textit{Spectral descriptor and conditioning.}}
\label{lem:sre_film_lipschitz}
A normalized Fourier coefficient is a bounded linear map with sup-norm gain
at most one. The power difference satisfies
$\big||a|^2-|b|^2\big|\le(|a|+|b|)|a-b|$.
On bounded fields, the stabilized logarithms and ratios in
Eq.~\eqref{eq:sre_beta} therefore have finite Lipschitz constants; their
denominators are bounded below by positive stabilizers. Clipping is
nonexpansive. Denote the resulting descriptor constant by $L_c$.
In block $\ell$, $\|\bm\gamma_\ell\|_\infty\le0.1$ and
$(\bm\gamma_\ell,\bm\delta_\ell)$ is Lipschitz in $\bm c$.
For fields bounded by $B_\ell$, the modulation
$\mathbf{X}_\ell^{\mathrm c}
=\mathbf{X}_\ell\odot(\mathbf{1}_G(\mathbf{1}_d+\bm\gamma_\ell)^\top)
+\mathbf{1}_G\bm\delta_\ell^\top$ expands into three terms,
\begin{equation}
 \begin{aligned}
 \mathbf{X}_\ell^{\mathrm c}-\mathbf{X}_\ell'{}^{\mathrm c}
 &=(\mathbf{X}_\ell-\mathbf{X}'_\ell)\odot(\mathbf{1}_G(\mathbf{1}_d+\bm\gamma_\ell)^\top)
+\mathbf{X}'_\ell\odot(\mathbf{1}_G(\bm\gamma_\ell-\bm\gamma'_\ell)^\top)
 +\mathbf{1}_G(\bm\delta_\ell-\bm\delta'_\ell)^\top,\\\\
 \|X_\ell^{\mathrm c}-X_\ell'{}^{\mathrm c}\|_\infty
 &\le1.1\|X_\ell-X'_\ell\|_\infty
 +B_\ell\|\bm\gamma_\ell(\bm c)-\bm\gamma_\ell(\bm c')\|+\|\bm\delta_\ell(\bm c)-\bm\delta_\ell(\bm c')\|\\
 &\le1.1\|X_\ell-X'_\ell\|_\infty
 +(B_\ell L_{\gamma,\ell}+L_{\delta,\ell})\|\bm c-\bm c'\|,
 \end{aligned}
 \label{eq:film_lipschitz_bound}
\end{equation}
where the first line uses $\|\mathbf{1}_d+\bm\gamma_\ell\|_\infty\le1.1$
and the field bound.

\begin{lemma}[Normalized linear attention]
\label{lem:attention_lipschitz}
On bounded feature sets, the attention map in
Eq.~\eqref{eq:dfo_attention_main} has a Lipschitz constant independent of $G$.
\end{lemma}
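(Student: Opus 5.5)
The plan is to view the attention map as a composition of a pointwise feature stage and a normalized aggregation stage, and to use the grid sup norm $\|Z\|_\infty=\max_g\|\bm z_g\|$ throughout. The source of $G$-independence is that every global quantity enters through averages rather than sums. Dividing numerator and denominator of Eq.~\eqref{eq:dfo_attention_main} by $G$ leaves the output unchanged except that the stabilizer becomes $\varepsilon_a/G$. I will therefore work directly with the form $\sum_j w_{gj}\bm v_j$, where
\[
w_{gj}=\frac{\bm q_g^\top\bm k_j}{\sum_{l}\bm q_g^\top\bm k_l+\varepsilon_a}.
\]
These weights are nonnegative, and their row sum is at most one.

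First, on the bounded feature set $\|Z\|_\infty\le B$, the rowwise maps $Q_\theta,K_\theta,V_\theta$ composed with $\phi$ are Lipschitz with constants independent of $G$, because $\phi$ is 1-Lipschitz. They produce bounded $\bm q_g,\bm k_g,\bm v_g$. Positivity also gives a lower bound: on a bounded set, $\phi(x)\ge \phi(-c)=e^{-c}>0$ componentwise. Hence $\bm q_g^\top\bm k_j\ge d_0>0$ uniformly, where $d_0$ depends only on $B$ and the weights. As a consequence the denominator $S_g=\sum_l\bm q_g^\top\bm k_l+\varepsilon_a$ satisfies $S_g\ge Gd_0$, while each summand of the numerator is bounded by a constant $C_{qk}$.

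Second, I perturb $Z\to Z'$ and write $N_g=\sum_j(\bm q_g^\top\bm k_j)\bm v_j$. Mirroring Eq.~\eqref{eq:fixed_bandwidth_chain}, I expand the quotient difference as
\[
\frac{N_g-N_g'}{S_g}+\frac{N_g'}{S_g'}\,\frac{S_g'-S_g}{S_g}.
\]
Each summand of $N_g-N_g'$ changes by at most $C\|Z-Z'\|_\infty$, obtained by a product rule over three bounded Lipschitz factors. Hence $\|N_g-N_g'\|\le CG\|Z-Z'\|_\infty$, and the same bound holds for $|S_g-S_g'|$. Dividing by $S_g\ge Gd_0$ cancels the factor $G$. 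Meanwhile $\|N_g'/S_g'\|\le\max_j\|\bm v_j'\|$ by the convexity-type bound. This gives a rowwise bound $C'\|Z-Z'\|_\infty$ uniform in $g$ and $G$. The affine map $O_\theta$ contributes its operator norm as a factor. Composed with the FiLM bound Eq.~\eqref{eq:film_lipschitz_bound}, if needed, this controls the conditioned input.

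The main obstacle is the uniform lower bound on the denominator that scales like $G$. Without it, the $O(G)$ growth of the numerator differences would not cancel. The bound rests on strict positivity of $\phi=\mathrm{ELU}+1$ over bounded preactivations; $\varepsilon_a$ alone would give only a $G$-dependent constant. I will make this dependence explicit by defining $d_0$ as the product of the minimal coordinates of $\bm q$ and $\bm k$ times $d_a$, all taken over the bounded feature set.
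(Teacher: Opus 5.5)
Your proposal is correct and follows the paper's proof. Both rest on the componentwise lower bound of $\phi=\mathrm{ELU}+1$ on bounded preactivations, which gives $\bm q_g^\top\bm k_j\ge d_0=d_a m_\phi^2$, so the $O(G)$ growth of the numerator and denominator differences cancels in the quotient expansion. The paper instead divides by $G$ up front and works with the averaged aggregates $\overline{\mathbf{B}},\overline{\bm k}$ and the stabilizer $\varepsilon_a/G$. It also bounds the second term by $B_N/d_0^2$, where you use the slightly sharper convex-combination bound $\|N'_g/S'_g\|\le\max_j\|\bm v'_j\|$; these differences are cosmetic.
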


\begin{proof}
Use column vectors $\bm q_g,\bm k_g\in\R^{d_a}$ and
$\bm v_g\in\R^{d}$, and form the matrices
$\mathbf{Q}=[\bm q_1,\ldots,\bm q_G]^\top\in\R^{G\times d_a}$,
$\mathbf{K}=[\bm k_1,\ldots,\bm k_G]^\top\in\R^{G\times d_a}$, and
$\mathbf{V}=[\bm v_1,\ldots,\bm v_G]^\top\in\R^{G\times d}$.
The attention features before the output projection have the exact matrix
form
\begin{equation}
 \begin{aligned}
 \mathbf{T}&=\operatorname{diag}(\mathbf{Q}\mathbf{K}^\top\mathbf{1}_G+\varepsilon_a\mathbf{1}_G)^{-1}
        \mathbf{Q}(\mathbf{K}^\top\mathbf{V}),\\
 \bm t_g^\top&=\frac{\bm q_g^\top\overline{\mathbf{B}}^\top}
 {\bm q_g^\top\overline{\bm k}+\varepsilon_a/G},
 \qquad
 \overline{\mathbf{B}}=\frac1G\mathbf{V}^\top\mathbf{K}
 =\frac1G\sum_j\bm v_j\bm k_j^\top,
 \quad
 \overline{\bm k}=\frac1G\mathbf{K}^\top\mathbf{1}_G
 =\frac1G\sum_j\bm k_j,
 \end{aligned}
 \label{eq:attention_fraction}
\end{equation}
where $\bm t_g^\top$ is row $g$ of $\mathbf{T}$.
The first line is an algebraic identity that reads entrywise. Its diagonal
factor collects the row sums
$[\mathbf{Q}\mathbf{K}^\top\mathbf{1}_G]_g=\sum_j\bm q_g^\top\bm k_j$,
while $\mathbf{Q}(\mathbf{K}^\top\mathbf{V})$ has row $g$
\begin{equation}
 \bm q_g^\top\Big(\sum_j\bm k_j\bm v_j^\top\Big)
 =\sum_j(\bm q_g^\top\bm k_j)\bm v_j^\top,
 \label{eq:attention_row_expansion}
\end{equation}
so row $g$ of $\mathbf{T}$ reproduces the fraction in
Eq.~\eqref{eq:dfo_attention_main}. The second line evaluates the same row
through the two shared aggregates, because
\begin{equation}
 \overline{\mathbf{B}}\bm q_g
 =\frac1G\sum_j(\bm q_g^\top\bm k_j)\bm v_j,
 \qquad
 \bm q_g^\top\overline{\bm k}+\frac{\varepsilon_a}{G}
 =\frac1G\Big(\sum_j\bm q_g^\top\bm k_j+\varepsilon_a\Big)
 \label{eq:attention_aggregate_identity}
\end{equation}
factor a common $G^{-1}$ from numerator and denominator.
Bounded affine preactivations and
$\phi(t)=\operatorname{ELU}(t)+1>0$ give a common componentwise interval
$[m_\phi,B_\phi]$ with $m_\phi>0$, so every entry of $\bm q_g$ and $\bm k_j$
is at least $m_\phi$ and
\begin{equation}
 \bm q_g^\top\bm k_j=\sum_{a=1}^{d_a}q_{g,a}k_{j,a}\ge d_am_\phi^2.
 \label{eq:attention_denominator_bound}
\end{equation}
Thus every denominator is at least
$d_0=d_am_\phi^2$, independently of $G$.

Let $B_q,B_k,B_v$ bound the projected vector norms and let
$L_q,L_k,L_v$ be their Lipschitz constants. If
$e=\|X-X'\|_\infty$, one outer product splits as
$\bm v_j\bm k_j^\top-\bm v'_j\bm k'_j{}^\top
=(\bm v_j-\bm v'_j)\bm k_j^\top+\bm v'_j(\bm k_j-\bm k'_j)^\top$, so
\begin{equation}
 \begin{aligned}
 \|\overline{\mathbf{B}}-\overline{\mathbf{B}}'\|
 &\le\frac1G\sum_j
 \|(\bm v_j-\bm v'_j)\bm k_j^\top+
       \bm v'_j(\bm k_j-\bm k'_j)^\top\|\\
 &\le(B_kL_v+B_vL_k)e,
 \qquad
 \|\overline{\bm k}-\overline{\bm k}'\|\le L_ke.
 \end{aligned}
 \label{eq:attention_aggregate_difference}
\end{equation}
For $N_g=\overline{\mathbf{B}}\bm q_g$ and
$D_g=\bm q_g^\top\overline{\bm k}+\varepsilon_a/G$, this implies
$\|N_g\|\le B_N=B_vB_kB_q$,
$\|N_g-N'_g\|\le L_Ne$, and $|D_g-D'_g|\le L_De$, with
$L_N=B_q(B_kL_v+B_vL_k)+B_vB_kL_q$ and
$L_D=B_kL_q+B_qL_k$. The quotient expansion yields
\begin{equation}
 \begin{aligned}
 \left\|\frac{N_g}{D_g}-\frac{N'_g}{D'_g}\right\|
 &\le\frac{\|N_g-N'_g\|}{D_g}
 +\frac{\|N'_g\|\,|D'_g-D_g|}{D_gD'_g}\\
 &\le\left(\frac{L_N}{d_0}+\frac{B_NL_D}{d_0^2}\right)e.
 \end{aligned}
 \label{eq:attention_lipschitz_chain}
\end{equation}
The affine output map contributes the norm $L_O$ of its linear part. Hence
$L_{\mathrm{attn}}=L_O(L_N/d_0+B_NL_D/d_0^2)$ is independent of $G$.
\end{proof}

\textbf{\textit{Residual normalization.}}
The retained Fourier path has gain
$L_{\mathrm{spec}}\le\sum_{\bk\in\mathcal K_M}\|\mathbf{R}_\theta(\bk)\|$
by the triangle inequality over modes.
For LayerNorm, write the centering projection
$\mathbf{C}=\mathbf{I}-\mathbf{1}_d\mathbf{1}_d^\top/d$,
$s=(\|\mathbf{C}\bm v\|^2/d+\varepsilon_{\mathrm{LN}})^{1/2}$, and let
$\bm a,\bm b$ be its learned scale and bias. Since $\mathbf{C}$ is symmetric
and idempotent, $\partial s/\partial\bm v=\mathbf{C}\bm v/(ds)$, and
differentiating the quotient gives
\begin{equation}
 \begin{aligned}
 \LN(\bm v)&=\operatorname{diag}(\bm a)\frac{\mathbf{C}\bm v}{s}
 +\bm b,\\
 D\LN(\bm v)
 &=\operatorname{diag}(\bm a)
 \left(\frac{\mathbf{C}}s-\frac{(\mathbf{C}\bm v)(\mathbf{C}\bm v)^\top}{ds^3}\right),\\
 \|D\LN(\bm v)\|
 &\le\frac{\|\bm a\|_\infty}{\sqrt{\varepsilon_{\mathrm{LN}}}}.
 \end{aligned}
 \label{eq:layernorm_jacobian}
\end{equation}
The bracket is symmetric with the eigenpairs
$\mathbf{C}\bm v\mapsto\varepsilon_{\mathrm{LN}}/s^3$ and
$\bm w\mapsto1/s$ for $\bm w\perp\{\mathbf{1}_d,\mathbf{C}\bm v\}$, and it
annihilates $\mathbf{1}_d$. The first eigenvalue follows from
$\|\mathbf{C}\bm v\|^2=d(s^2-\varepsilon_{\mathrm{LN}})$, and
$s\ge\sqrt{\varepsilon_{\mathrm{LN}}}$ shows that $1/s$ is the larger one,
so the spectral norm of the bracket is $1/s$. This establishes the stated
bound and also bounds every block output on bounded inputs.

\begin{lemma}[Finite stack with shared spectral state]
\label{lem:lipschitz_modules}
For a fixed query, let $\Psi_{\theta,h_g,M}$ map the post-encoded field to
the field-path forecast through all $L$ operator blocks and the decoder.
Under Assumption~\ref{ass:regularity},
\begin{equation}
 \|\Psi_{\theta,h_g,M}(Z)-\Psi_{\theta,h_g,M}(Z')\|
 \le C_{\theta,M,h_g}\|Z-Z'\|_\infty.
 \label{eq:fixed_operator_lipschitz}
\end{equation}
\end{lemma}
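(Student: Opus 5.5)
The plan is to prove the bound by composing per-stage Lipschitz estimates. Because the spectral descriptor $\bm c$ is extracted once from the initial field and then shared by every block, the stack cannot be treated as a simple composition of independent maps. Instead I will track two quantities jointly through the blocks: the field discrepancy $e_\ell=\|X_\ell-X'_\ell\|_\infty$ after block $\ell$, and the fixed descriptor discrepancy $\|\bm c-\bm c'\|$.

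The first step bounds the descriptor. The argument preceding Eq.~\eqref{eq:film_lipschitz_bound} gives $\|\bm c-\bm c'\|\le L_c\|Z-Z'\|_\infty$. This needs a bound $B_0$ on $\|Z\|_\infty$. I will take it from Assumption~\ref{ass:regularity}: the post-encoder $\psi_\theta$ is applied to convex combinations of bounded codes and to bounded Fourier features, so $Z$ lies in a bounded set.

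The second step handles the block recursion. For block $\ell$ with input bound $B_\ell$, Eq.~\eqref{eq:film_lipschitz_bound} gives
\[
\|X_\ell^{\mathrm c}-X_\ell'^{\mathrm c}\|_\infty\le 1.1e_\ell+(B_\ell L_{\gamma,\ell}+L_{\delta,\ell})L_c\|Z-Z'\|_\infty .
\]
The modulated fields are bounded by $1.1B_\ell+0.1$ plus the offset bound. I then feed this estimate into three pieces:
\begin{itemize}
\item the residual term, with gain $1$;
\item the spectral path, with gain $L_{\mathrm{spec}}\le\sum_{\bk\in\mathcal K_M}\|\mathbf R_\theta(\bk)\|$;
\item the attention path, using Lemma~\ref{lem:attention_lipschitz} on the bounded modulated set, which gives $L_{\mathrm{attn},\ell}$.
\end{itemize}
Finally I apply the LayerNorm Jacobian bound Eq.~\eqref{eq:layernorm_jacobian}, with constant $\|\bm a\|_\infty/\sqrt{\varepsilon_{\mathrm{LN}}}$. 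The result is
\[
e_{\ell+1}\le L_{\mathrm{LN}}\big[(1+1.1(L_{\mathrm{spec}}+L_{\mathrm{attn},\ell}))e_\ell+(L_{\mathrm{spec}}+L_{\mathrm{attn},\ell})K_\ell\|Z-Z'\|_\infty\big],
\]
where $K_\ell=(B_\ell L_{\gamma,\ell}+L_{\delta,\ell})L_c$. The bound $B_{\ell+1}$ on the next input comes from LayerNorm: its output is bounded by $\sqrt d\|\bm a\|_\infty+\|\bm b\|$ plus $\|\bm b_{\mathrm{res}}\|$, independently of the input. Unrolling over the finitely many blocks $L$ gives $e_L\le C\|Z-Z'\|_\infty$.

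The third step is the decoder. The gather is nonexpansive by Lemma~\ref{lem:decoder_interp}. It is followed by another LayerNorm with the same Jacobian bound and by $\mathbf W_{\mathrm{out}}$, which has operator norm $\|\mathbf W_{\mathrm{out}}\|$. The skip path does not depend on $Z$, so it cancels from the difference.

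The main obstacle is making sure the bounds $B_\ell$, $m_\phi$ and $B_q,B_k,B_v$ needed by Lemma~\ref{lem:attention_lipschitz} and by the FiLM estimate are uniform over the admissible inputs. They must hold for both $Z$ and $Z'$, and the attention denominator bound must not depend on $G$. I will handle this by restricting $Z$ and $Z'$ to a fixed bounded set. Every later block input is then bounded automatically by the LayerNorm output bound. The constant $C_{\theta,M,h_g}$ may depend on $G$ only through $L_{\mathrm{spec}}$. It is finite for each fixed grid, and since the normalized DFT has unit sup-norm gain per mode, $L_{\mathrm{spec}}$ is in fact uniform in $h_g$.
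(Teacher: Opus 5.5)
Your proposal is correct and follows the same route as the paper. Both arguments use the shared descriptor bound $\|\bm c-\bm c'\|\le L_c e_0$, push the FiLM estimate through the residual, spectral and attention branches and the LayerNorm Jacobian bound to get the recursion $e_{\ell+1}\le a_\ell e_\ell+b_\ell L_c e_0$, unroll it over the $L$ blocks, and finish with the nonexpansive gather and the readout constant. Your explicit treatment of the uniform bounds (restricting $Z,Z'$ to a bounded set, and using the input-independent LayerNorm output bound for later blocks) only spells out what the paper leaves implicit in its constants $B_\ell$.
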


\begin{proof}
Set $X_0=Z$, $X'_0=Z'$, and $e_\ell=\|X_\ell-X'_\ell\|_\infty$.
Every block uses the initial descriptor, so
$\|\bm c(Z)-\bm c(Z')\|\le L_ce_0$ throughout the stack.
Combining Eq.~\eqref{eq:film_lipschitz_bound} with the unconditioned residual
connection gives the scalar recursion
\begin{equation}
 \begin{aligned}
 e_{\ell+1}
 &\le L_{\mathrm{LN},\ell}\left[
 e_\ell+(L_{\mathrm{spec},\ell}+L_{\mathrm{attn},\ell})
 \{1.1e_\ell+(B_\ell L_{\gamma,\ell}+L_{\delta,\ell})L_ce_0\}\right]\\
 &=a_\ell e_\ell+b_\ell L_ce_0.
 \end{aligned}
 \label{eq:residual_operator_recursion}
\end{equation}
Unrolling it gives
$e_1\le a_0e_0+b_0L_ce_0$,
$e_2\le a_1a_0e_0+(a_1b_0+b_1)L_ce_0$, and in general
\begin{equation}
 \begin{aligned}
 e_L&\le\left[\prod_{\ell=0}^{L-1}a_\ell+
 L_c\sum_{j=0}^{L-1}b_j\prod_{\ell=j+1}^{L-1}a_\ell\right]e_0.
 \end{aligned}
 \label{eq:residual_operator_lipschitz}
\end{equation}
Here $a_\ell=L_{\mathrm{LN},\ell}[1+1.1(L_{\mathrm{spec},\ell}+
L_{\mathrm{attn},\ell})]$,
$b_\ell=L_{\mathrm{LN},\ell}(L_{\mathrm{spec},\ell}+
L_{\mathrm{attn},\ell})(B_\ell L_{\gamma,\ell}+L_{\delta,\ell})$,
and an empty product equals one. The decoder gather is nonexpansive.
Multiplying the bracket by the final normalization/readout constant
$L_{\mathrm{dec}}$ defines $C_{\theta,M,h_g}$. For fixed spectral design,
these constants can also be chosen uniformly over sufficiently fine grids.
\end{proof}

\subsection{Sensor-Discretization Consistency}

\begin{theorem}[Sensor-discretization consistency]
\label{thm:convergence}
Fix the coordinate representation, field history, trained neural maps, latent
grid, spectral design, and common query. Under
Assumptions~\ref{ass:geometry}--\ref{ass:regularity}, the reference lift
$r_*$ in Eq.~\eqref{eq:sampling_error_scale} defines a sampling-independent
prediction $\refvar{\overline{\Gop}_{\theta}^{h_g,M}}(\xq)$ satisfying
\begin{equation}
 \left\|\discvar{\Gop_{\theta}^{\mathcal S,h_g,M}}(\xq)
 -\refvar{\overline{\Gop}_{\theta}^{h_g,M}}(\xq)\right\|
 \le C_{\theta,M,h_g}L_{\mathrm{post}}\discvar{\mathcal E_{\mathcal S}}.
 \label{eq:conv_bound}
\end{equation}
At fixed bandwidth the error is $O(W_1(\nu_{\mathcal S},\mu))$; under
local-bandwidth refinement it is $O(h_{\mathcal S})$.
\end{theorem}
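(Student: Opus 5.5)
The proof composes the lift bound, the post-encoding propagation, and the stack Lipschitz estimate; the only genuinely new step is defining the reference prediction so that the comparison is between two inputs to one fixed map.

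\textbf{Defining the reference.} Let $\refvar{\mathbf{Z}_*}$ be the post-encoded grid field with rows $\refvar{\bm z_{*,g}}=\psi_\theta([\refvar{r_*}(\bm\xi_g)\|\rho(\bm\xi_g)])$. Here $\refvar{r_*}$ is the reference lift from Eq.~\eqref{eq:sampling_error_scale}: it is $\refvar{I_{\mu,\sigma}\zeta}$ at fixed bandwidth and $\zeta$ under local bandwidth. We set
\[
\refvar{\overline{\Gop}_{\theta}^{h_g,M}}(\xq)=\Psi_{\theta,h_g,M}(\refvar{\mathbf{Z}_*})+\mathbf{W}_{\mathrm{skip}}\bm u_q+\bm b_{\mathrm{skip}},
\]
where $\Psi_{\theta,h_g,M}$ is the map of Lemma~\ref{lem:lipschitz_modules}.

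This object depends only on $\refvar{\mu}$ (or on $\zeta$), the grid, the trained maps and the query. It does not depend on $\mathcal S$, so it is sampling-independent. By the same decomposition of Eq.~\eqref{eq:cqd_readout}, the discrete prediction is $\Psi_{\theta,h_g,M}(\discvar{\mathbf{Z}_{\mathcal S}})$ plus the identical local-history term, because the query history is common to both. In the difference of the two predictions the skip contribution therefore cancels exactly, leaving $\Psi(\discvar{\mathbf{Z}_{\mathcal S}})-\Psi(\refvar{\mathbf{Z}_*})$.

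\textbf{Chaining the bounds.} First, Lemma~\ref{lem:fixed_bandwidth} (fixed bandwidth, with $\nu=\nu_{\mathcal S}$) or Lemma~\ref{lem:encoder_interp} (local bandwidth) gives $\|\discvar{I_{\mathcal S,\sigma}\zeta}-\refvar{r_*}\|_\infty\le\discvar{\mathcal E_{\mathcal S}}$. Evaluating this uniform bound at the grid nodes gives a grid sup-norm bound. Second, the post-encoding propagation remark then yields $\|\discvar{\mathbf{Z}_{\mathcal S}}-\refvar{\mathbf{Z}_*}\|_\infty\le L_{\mathrm{post}}\discvar{\mathcal E_{\mathcal S}}$. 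Third, Lemma~\ref{lem:lipschitz_modules} multiplies this by $C_{\theta,M,h_g}$, which gives Eq.~\eqref{eq:conv_bound}. The two rates follow by substituting $\discvar{\mathcal E_{\mathcal S}}=C_{\sigma,\zeta}W_1(\nu_{\mathcal S},\mu)$ or $\discvar{\mathcal E_{\mathcal S}}=C_{\mathrm{geo}}L_\zeta h_{\mathcal S}$, since the constants do not depend on $N$.

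\textbf{Main obstacle: the boundedness hypotheses.} Lemma~\ref{lem:lipschitz_modules} and Lemma~\ref{lem:attention_lipschitz} are stated on bounded feature sets, so we must show that both $\discvar{\mathbf{Z}_{\mathcal S}}$ and $\refvar{\mathbf{Z}_*}$ lie in one bounded set, uniformly over $\mathcal S$.

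\emph{Encoder level.} Both lifts are convex averages of $\zeta$, so they are bounded by $B_\zeta$. Since $\psi_\theta$ is Lipschitz and the grid and $\rho$ are fixed, both post-encoded fields lie in a fixed ball.

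\emph{Block level.} The LayerNorm analysis following Eq.~\eqref{eq:layernorm_jacobian} bounds every block output on bounded inputs. Inducting over the finitely many blocks, the fields $B_\ell$ stay uniformly bounded. This in turn makes the attention feature bounds $B_q,B_k,B_v$ and the positive floor $m_\phi$ valid, and makes the spectral-descriptor constant $L_c$ finite.

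Once this uniform invariant set is fixed, $C_{\theta,M,h_g}$ is a single constant valid for all layouts, and the chain in the previous paragraph goes through unchanged.
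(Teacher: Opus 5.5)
Your proposal is correct and follows the paper's proof. Both predictions are factored as $\Psi_{\theta,h_g,M}(\cdot)+\bm s_q$, the common skip term cancels, and the lift bound, the post-encoding Lipschitz step, and Lemma~\ref{lem:lipschitz_modules} are chained to give $C_{\theta,M,h_g}L_{\mathrm{post}}\mathcal E_{\mathcal S}$; your explicit check that $\mathbf{Z}_{\mathcal S}$ and $\mathbf{Z}_*$ lie in one bounded set uniformly in $\mathcal S$, so that $C_{\theta,M,h_g}$ does not depend on the layout, usefully spells out a point the paper leaves implicit in its LayerNorm remark and in Lemma~\ref{lem:lipschitz_modules}.
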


\begin{proof}
The common local skip is
$\bm s_q=\mathbf{W}_{\mathrm{skip}}\bm u_q+\bm b_{\mathrm{skip}}$.
With the post-encoded fields defined after
Eq.~\eqref{eq:sampling_error_scale}, the full predictions factor as
\begin{equation}
 \begin{aligned}
 \discvar{\Gop_{\theta}^{\mathcal S,h_g,M}}(\xq)
 &=\Psi_{\theta,h_g,M}(\discvar{\mathbf{Z}_{\mathcal S}})+\bm s_q,\\
 \refvar{\overline{\Gop}_{\theta}^{h_g,M}}(\xq)
 &=\Psi_{\theta,h_g,M}(\refvar{\mathbf{Z}_*})+\bm s_q,\\
 \|\discvar{\Gop_{\theta}^{\mathcal S,h_g,M}}(\xq)
 -\refvar{\overline{\Gop}_{\theta}^{h_g,M}}(\xq)\|
 &\le C_{\theta,M,h_g}\|\discvar{\mathbf{Z}_{\mathcal S}}-\refvar{\mathbf{Z}_*}\|_\infty\\
 &\le C_{\theta,M,h_g}L_{\mathrm{post}}\discvar{\mathcal E_{\mathcal S}}.
 \end{aligned}
 \label{eq:sampling_reference_factorization}
\end{equation}
The reference uses the same trained maps, common query, and reference field
for every layout in its sampling family. The respective sampling assumptions
make $\mathcal E_{\mathcal S}$ converge to zero.
\end{proof}

\begin{corollary}[Expansion stability]
\label{cor:noforget}
Let two layouts $\mathcal S,\mathcal S'$ share a sampling regime and its
reference field, as well as the fixed quantities of
Theorem~\ref{thm:convergence}. Then
\begin{equation}
 \begin{aligned}
 \|\discvar{\Gop_{\theta}^{\mathcal S',h_g,M}}(\xq)
 -\discvar{\Gop_{\theta}^{\mathcal S,h_g,M}}(\xq)\|
 &\le\|\discvar{\Gop_{\theta}^{\mathcal S',h_g,M}}(\xq)
 -\refvar{\overline{\Gop}_{\theta}^{h_g,M}}(\xq)\|+\|\refvar{\overline{\Gop}_{\theta}^{h_g,M}}(\xq)
 -\discvar{\Gop_{\theta}^{\mathcal S,h_g,M}}(\xq)\|\\
 &\le C_{\theta,M,h_g}L_{\mathrm{post}}
 (\discvar{\mathcal E_{\mathcal S'}}+\discvar{\mathcal E_{\mathcal S}}).
 \end{aligned}
 \label{eq:expansion_bound}
\end{equation}
In particular, local-bandwidth refinement gives
$C_{\mathrm{samp}}(h_{\mathcal S'}+h_{\mathcal S})$, where
$C_{\mathrm{samp}}=C_{\theta,M,h_g}L_{\mathrm{post}}
C_{\mathrm{geo}}L_\zeta$. For nested layouts,
$h_{\mathcal S'}\le h_{\mathcal S}$.
\end{corollary}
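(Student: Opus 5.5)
The plan is to derive Corollary~\ref{cor:noforget} directly from Theorem~\ref{thm:convergence} by a triangle inequality through the shared reference prediction. The key observation is that Theorem~\ref{thm:convergence} builds $\refvar{\overline{\Gop}_{\theta}^{h_g,M}}(\xq)$ from the reference lift $\refvar{r_*}$, the trained maps, the fixed grid and spectral design, and the common query history. In the fixed-bandwidth regime, $\refvar{r_*}=\refvar{I_{\mu,\sigma}\zeta}$ depends only on the limiting measure $\refvar{\mu}$. In the local-bandwidth regime, $\refvar{r_*}=\zeta$. In either case the reference does not depend on which layout in the family is used. Because $\mathcal S$ and $\mathcal S'$ share the regime and the reference field, the same $\refvar{\overline{\Gop}_{\theta}^{h_g,M}}(\xq)$ serves as a pivot for both layouts.

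The first step is to insert and subtract this common reference and apply the triangle inequality; this gives the first line of Eq.~\eqref{eq:expansion_bound}. The second step is to bound each of the two terms by Eq.~\eqref{eq:conv_bound}, applied once with $\mathcal S'$ and once with $\mathcal S$. The constant $C_{\theta,M,h_g}L_{\mathrm{post}}$ is the same in both applications, since it depends only on the trained maps, the grid, and the spectral design (Lemma~\ref{lem:lipschitz_modules} and the post-encoding bound). Factoring it out gives the second line.

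For the local-bandwidth specialization, substitute $\discvar{\mathcal E_{\mathcal S}}=C_{\mathrm{geo}}L_\zeta h_{\mathcal S}$ from Eq.~\eqref{eq:sampling_error_scale}. This uses the point from Lemma~\ref{lem:encoder_interp} that $C_{\mathrm{geo}}$ is independent of $N$ and of the layout, because it depends only on $c_q,c_-,c_+$. That independence is what lets the bound collect into $C_{\mathrm{samp}}(h_{\mathcal S'}+h_{\mathcal S})$.

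For the nested claim: if $\mathcal S\subseteq\mathcal S'$, then for every $\bx$ we have $\min_{\bx_i\in\mathcal S'}\|\bx-\bx_i\|\le\min_{\bx_i\in\mathcal S}\|\bx-\bx_i\|$. Taking the supremum over $\bx$ gives $h_{\mathcal S'}\le h_{\mathcal S}$, so the bound is at most $2C_{\mathrm{samp}}h_{\mathcal S}$.

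The only delicate point is a bookkeeping check rather than a real obstacle: the two layouts must fit the same assumptions with the same constants. In particular, the separation condition $q_{\mathcal S}\ge c_qh_{\mathcal S}$ must hold for both layouts with a common $c_q$, which is a hypothesis and does not follow from nestedness. Likewise, the coordinate map $\chi$ must be fixed, since reassignment is handled separately in Section~\ref{app:remapping}. I would state these explicitly as part of the phrase ``share a sampling regime.''
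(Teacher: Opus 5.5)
Your proposal is correct and follows the paper's own argument. You pivot through the common reference prediction $\overline{\Gop}_{\theta}^{h_g,M}(\xq)$, apply Theorem~\ref{thm:convergence} once to each layout with the shared constant $C_{\theta,M,h_g}L_{\mathrm{post}}$, substitute $\mathcal E_{\mathcal S}=C_{\mathrm{geo}}L_\zeta h_{\mathcal S}$ in the local-bandwidth regime, and use that fill distance cannot increase under inclusion. Your remark is also a fair clarification: a common separation constant $c_q$ and bandwidth window $c_\pm$ are part of the hypothesis ``share a sampling regime'' and do not follow from nesting.
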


\textbf{\textit{Coordinate reassignment.}}
\label{app:remapping}
Fix histories, bandwidths, neural maps, and the grid for the same sensor
identities $V$. Let $\chi,\chi'$ be two assignments and set
$\discvar{\delta_\chi}=\max_{i\in V}\|\chi'(\bm s_i)-\chi(\bm s_i)\|$ and
$\discvar{\delta_q}=\|\chi'(\bm s_q)-\chi(\bm s_q)\|$.
Interpolate the coordinates linearly between the assignments. For encoder
logits $\ell_i=-\|\bm\xi_g-\bx_i\|^2/(2\sigma_{\mathrm{enc}}^2)$,
$|\dot\ell_i|\le\sqrt2\delta_\chi/\sigma_{\mathrm{enc}}^2$.
Differentiating the normalized weights gives
$\dot a_i=a_i(\dot\ell_i-\sum_j a_j\dot\ell_j)$ and thus
$\sum_i|\dot a_i|\le2\sqrt2\delta_\chi/\sigma_{\mathrm{enc}}^2$.
Applying the same calculation to decoder weights and integrating along the
coordinate path yields the reassignment bound
\begin{equation}
 \discvar{R_\chi}\le
 \frac{2\sqrt2C_{\theta,M,h_g}L_{\mathrm{post}}B_\zeta}
 {\sigma_{\mathrm{enc}}^2}\discvar{\delta_\chi}
 +\frac{2\sqrt2L_{\mathrm{dec}}B_H}{\sigma_{\mathrm{dec}}^2}\discvar{\delta_q},
 \label{eq:coordinate_remap_discrepancy}
\end{equation}
where $B_H$ bounds the evolved field along the comparison and $R_\chi$ is
the norm of the difference between the two full predictions. For an expansion $V\subseteq V'$, suppose the layouts under $\chi'$ share the sampling regime and reference field of Theorem~\ref{thm:convergence}. Insert the prediction on $V$ under $\chi'$ to obtain
\begin{equation}
 \begin{aligned}
 &\|\Gop_\theta(\mathcal D_{\chi'}(V'))(\chi'(\bm s_q))
 -\Gop_\theta(\mathcal D_\chi(V))(\chi(\bm s_q))\|\le C_{\theta,M,h_g}L_{\mathrm{post}}
 (\discvar{\mathcal E_{\mathcal S_{\chi'}(V')}}+
  \discvar{\mathcal E_{\mathcal S_{\chi'}(V)}})+\discvar{R_\chi}.
 \end{aligned}
 \label{eq:expansion_with_remapping}
\end{equation}
Here $\mathcal D_\chi(V)=\{(\chi(\bm s_i),\bm u_i):i\in V\}$ and
$\mathcal S_\chi(V)=\{\chi(\bm s_i):i\in V\}$. The two sampling errors
are evaluated in the new representation. This decomposition separates the
contributions of sensor sampling and coordinate motion.

\subsection{Fixed-Design Grid-to-Continuum Consistency}
\label{app:grid_proof}

\begin{lemma}[Endpoint-grid quadrature]
\label{lem:riemann}
For an $L_f$-Lipschitz function $f:\Omega\to\R^m$, the endpoint-grid
quadrature $\discvar{Q_{h_g}}f=G^{-1}\sum_gf(\bm\xi_g)$ satisfies
$\|\discvar{Q_{h_g}}f-\int_\Omega f(\bx)\,d\bx\|\le\sqrt2L_f\discvar{h_g}$.
\end{lemma}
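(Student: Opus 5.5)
Our plan is to treat the endpoint-grid quadrature as integration against the discrete measure $G^{-1}\sum_g\delta_{\bm\xi_g}$. We then compare it with Lebesgue measure through the same cell coupling already used in the proof of Lemma~\ref{lem:decoder_interp}. Write $n$ for the number of points per axis, so $G=n^2$ and $h_g=1/(n-1)$. Partition $\Omega$ into the $n^2$ congruent squares
\[
C_{\bm a}=\prod_{j=1}^{2}\left[\frac{a_j}{n},\frac{a_j+1}{n}\right],\qquad \bm a\in\{0,\ldots,n-1\}^2 .
\]
Each square has Lebesgue measure $1/n^2=1/G$. Pair $C_{\bm a}$ with the node $\bm\xi_{\bm a}=\bm a/(n-1)$.

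The key identity is
\[
Q_{h_g}f-\int_\Omega f(\bx)\,d\bx=\sum_{\bm a}\int_{C_{\bm a}}\big(f(\bm\xi_{\bm a})-f(\bx)\big)\,d\bx .
\]
It follows because $|C_{\bm a}|=G^{-1}$ and the cells cover $\Omega$ up to null sets. We take the Euclidean norm inside, apply the triangle inequality for vector-valued integrals, and use the Lipschitz bound $\|f(\bm\xi_{\bm a})-f(\bx)\|\le L_f\|\bm\xi_{\bm a}-\bx\|$. This reduces the claim to a uniform geometric estimate: $\sup_{\bx\in C_{\bm a}}\|\bm\xi_{\bm a}-\bx\|\le\sqrt2 h_g$ for every $\bm a$. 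Summing $G$ cells of mass $G^{-1}$ then gives the bound $\sqrt2L_fh_g$.

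The only real step, and the place where care is needed, is this geometric estimate, because the node does not sit at a fixed relative position inside its cell. Per axis, $x_j\in[a_j/n,(a_j+1)/n]$, so we need to bound $|a_j/(n-1)-x_j|$. The node lies to the right of the left endpoint, since $a_j/(n-1)\ge a_j/n$. We check that $a_j/(n-1)-a_j/n=a_j/(n(n-1))\le 1/n$, using $a_j\le n-1$. We also check that $(a_j+1)/n-a_j/(n-1)\le 1/n$. Hence both endpoints of the cell lie within $1/n$ of the node, and so every point of the interval does as well.

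This gives a per-axis bound of $1/n\le 1/(n-1)=h_g$. The Euclidean distance in two dimensions is therefore at most $\sqrt2h_g$, which completes the argument. Equivalently, the identity bounds the quadrature error by $L_f$ times the $W_1$ distance established in the proof of Lemma~\ref{lem:decoder_interp}, via Kantorovich–Rubinstein duality applied componentwise. We prefer the direct cell computation above, since it avoids the duality step for vector-valued $f$.
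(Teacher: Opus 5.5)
Your proposal is correct and takes essentially the same approach as the paper's proof: the same equal-mass cells $C_{\bm a}=\prod_{j=1}^{2}[a_j/n,(a_j+1)/n]$ paired with the nodes $\bm a/(n-1)$, the same cellwise Lipschitz bound, and the same $\sqrt2 h_g$ distance estimate. Your explicit per-axis check that $a_j/(n-1)$ lies in $[a_j/n,(a_j+1)/n]$ within $1/n$ of both endpoints spells out a step the paper only asserts.
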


\begin{proof}
Associate node $\bm\xi_{\bm a}=\bm a/(n-1)$ with the equal-mass cell
$C_{\bm a}=\prod_{j=1}^2[a_j/n,(a_j+1)/n]$; shared boundaries have zero
measure. Then $|C_{\bm a}|=1/G$ and
$\sup_{\bx\in C_{\bm a}}\|\bx-\bm\xi_{\bm a}\|\le\sqrt2h_g$.
Consequently,
\begin{equation}
 \begin{aligned}
 \left\|\discvar{Q_{h_g}}f-\int_\Omega f(\bx)\,d\bx\right\|
 &\le\sum_{\bm a}\int_{C_{\bm a}}
 \|f(\bm\xi_{\bm a})-f(\bx)\|\,d\bx\\
 &\le\sqrt2L_f\discvar{h_g}\sum_{\bm a}|C_{\bm a}|
 =\sqrt2L_f\discvar{h_g}.
 \end{aligned}
 \label{eq:riemann_sum_bound}
\end{equation}
\end{proof}

\begin{lemma}[Consistency of spectral conditioning and the operator stack]
\label{lem:grid_consistency}
Let $\discvar{\Phi_{\theta,h_g}^M}$ be the grid \textnormal{\SRE/\DFO} stack and let
$\refvar{\Phi_\theta^M}$ be its continuum counterpart using normalized Fourier
integrals, the same finite modes and descriptor bins, and normalized attention
integrals. Under Assumptions~\ref{ass:regularity}--\ref{ass:grid_consistency},
for a bounded Lipschitz input $z$,
\begin{equation}
 \max_g\|\discvar{\Phi_{\theta,h_g}^M}(z|_{\Xi_{h_g}})_g
 -\refvar{\Phi_\theta^M}(z)(\bm\xi_g)\|
 \le C_{\mathrm{quad},M}\discvar{h_g}.
 \label{eq:grid_consistency_bound}
\end{equation}
\end{lemma}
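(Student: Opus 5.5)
The plan is a block-by-block induction that compares the grid stack with the continuum stack at the nodes $\bm\xi_g$. Each grid operation is either pointwise (FiLM, LayerNorm, the shared bias, the query/key/value maps) or a normalized sum over the grid (Fourier coefficients, descriptor powers, attention aggregates). Pointwise operations commute exactly with restriction to $\Xi_{h_g}$. Each normalized sum $G^{-1}\sum_g f(\bm\xi_g)$ is the endpoint quadrature $Q_{h_g}f$ of a continuum integrand, so Lemma~\ref{lem:riemann} bounds the gap to the integral by $\sqrt2L_fh_g$. Write $Z_\ell$ for the grid field after block $\ell$, $z_\ell$ for the continuum field, and $e_\ell=\max_g\|(Z_\ell)_g-z_\ell(\bm\xi_g)\|$, with $e_0=0$ since the input is exactly $z|_{\Xi_{h_g}}$. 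I would split each block's error into two parts. The first is the grid block applied to $Z_\ell$ versus the grid block applied to $z_\ell|_{\Xi_{h_g}}$, bounded by the per-block constants of Lemma~\ref{lem:lipschitz_modules}; these constants are uniform on fine grids because the attention constant of Lemma~\ref{lem:attention_lipschitz} is independent of $G$. The second is the grid block on the exact restriction versus the continuum block, which should be $O(h_g)$. The result is a recursion $e_{\ell+1}\le a_\ell e_\ell+b_\ell L_c\,\delta_c+K_\ell h_g$, where $\delta_c$ is the descriptor error; unrolling it as in Eq.~\eqref{eq:residual_operator_lipschitz} gives $C_{\mathrm{quad},M}h_g$.

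First I would fix the phase convention. The transforms use phase nodes $\bm t_{\bm a}=\bm a/n$, while the field is evaluated at $\bm\xi_{\bm a}=\bm a/(n-1)$. I define the continuum Fourier integrals with phase variable equal to the spatial variable $\bx$. Since $\|\bm t_{\bm a}-\bm\xi_{\bm a}\|\le\sqrt2/n\le\sqrt2h_g$, each exponential factor changes by at most $2\pi\|\bk\|\sqrt2h_g$ when $\bm t_{\bm a}$ is replaced by $\bm\xi_{\bm a}$. Because $\mathcal K_M$ and $\mathcal J$ are fixed finite sets, summing these changes over modes costs only $O(Mh_g)$, both in analysis (weighted by the field bound) and in synthesis (weighted by $\sum_{\bk}\|\mathbf R_\theta(\bk)\|$ times the coefficient bound). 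After this replacement, each discrete coefficient $\widehat{\bm z}_n(\bk)$ is $Q_{h_g}$ applied to $\bx\mapsto \bm z(\bx)e^{-2\pi\mathrm i\bk\cdot\bx}$. This integrand is Lipschitz with constant at most $L_z+2\pi\|\bk\|B_z$, so Lemma~\ref{lem:riemann} gives an $O(h_g)$ gap to the continuum coefficient.

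Next come the components in order. For the descriptor, each $\widehat{\bar z}(\bk_j)$ is within $O(h_g)$ by the argument above. The descriptor is Lipschitz in these coefficients on bounded sets, with constant $L_c$ from the paragraph after Lemma~\ref{lem:decoder_interp}, so $\delta_c=\|\bm c_{h_g}-\bm c_*\|=O(h_g)$. Eq.~\eqref{eq:film_lipschitz_bound} then transfers this to the modulated field at every node. For the spectral path, the same quadrature bound applied to each coefficient of $z^{\mathrm c}$, summed over the finitely many retained modes, gives an $O(h_g)$ error. For attention, the aggregates $\overline{\mathbf B}$ and $\overline{\bm k}$ are quadratures of $\bm v\bm k^\top$ and $\bm k$, which are Lipschitz because $Q_\theta,K_\theta,V_\theta,\phi$ are Lipschitz and the continuum field is Lipschitz by Assumption~\ref{ass:grid_consistency}. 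The discrete denominator carries an extra $\varepsilon_a/G=O(h_g^2)$. Both denominators are at least $d_0$, so the quotient expansion of Eq.~\eqref{eq:attention_lipschitz_chain} yields $O(h_g)$. The residual, LayerNorm and bias are pointwise, so they add no new error and contribute only the factor $L_{\mathrm{LN}}$.

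The main obstacle is keeping the induction closed. The quadrature step at block $\ell+1$ needs the continuum field $z_{\ell+1}$ and every integrand built from it to be bounded and Lipschitz, with constants independent of $n$; otherwise $K_\ell$ could grow with the resolution. I would take these bounds from the final clause of Assumption~\ref{ass:grid_consistency} and propagate them explicitly. The retained Fourier modes are smooth, so the spectral path maps bounded fields to Lipschitz fields with constant at most $2\pi\max_{\bk\in\mathcal K_M}\|\bk\|$ times its gain. The attention output is a Lipschitz function of the pointwise query at $\bx$ with frozen aggregates, and LayerNorm is Lipschitz by Eq.~\eqref{eq:layernorm_jacobian}. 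The remaining care is that every denominator bound ($m_\phi$, $\varepsilon_s$, $\varepsilon_{\mathrm{LN}}$) holds uniformly on a common bounded set containing both the discrete and the continuum fields. This makes all $a_\ell,b_\ell,K_\ell$ depend only on $\theta$ and $M$, which gives the constant $C_{\mathrm{quad},M}$.
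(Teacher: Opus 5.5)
Your proposal is correct and follows the paper's proof essentially step for step. Both use phase-node replacement plus endpoint quadrature for the finitely many retained Fourier coefficients and descriptor bins, quadrature of the attention aggregates with the $d_0$ denominator bound and the extra $\varepsilon_a/G=O(h_g^2)$ term, and the unrolled recursion $e_{\ell+1}\le a_\ell e_\ell+d_\ell h_g$ starting from $e_0=0$ (one cosmetic slip: since you define $\delta_c$ as the descriptor error itself, your FiLM term should read $b_\ell\delta_c$, not $b_\ell L_c\delta_c$, and your explicit uniform-bound closure argument is a useful elaboration of what the paper simply takes from Assumption~\ref{ass:grid_consistency}).
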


\begin{proof}
We track the Fourier, conditioning, and attention errors before propagating them through the stack.

\textbf{\textit{Fourier coefficients and phases.}}
Let $w$ be bounded by $B_w$ and $L_w$-Lipschitz, and let
$\refvar{\widehat w(\bk)}=\int_\Omega w(\bx)e^{-2\pi\mathrm{i}\bk\cdot\bx}\,d\bx$.
The integrand has Lipschitz constant at most $L_w+2\pi\|\bk\|B_w$, and the
phase factor obeys $|e^{\mathrm{i}x}-e^{\mathrm{i}y}|\le|x-y|$ with
$\|\bm t_{\bm a}-\bm\xi_{\bm a}\|\le\sqrt2h_g$. Splitting the grid sum from
its continuum integral at the spatial nodes therefore gives
\begin{equation}
 \begin{aligned}
 \|\discvar{\widehat w_n(\bk)}-\refvar{\widehat w(\bk)}\|
 &\le\frac1G\sum_{\bm a}\|w(\bm\xi_{\bm a})\|
 \left|e^{-2\pi\mathrm{i}\bk\cdot\bm t_{\bm a}}
       -e^{-2\pi\mathrm{i}\bk\cdot\bm\xi_{\bm a}}\right|\\
 &\quad+\left\|\discvar{Q_{h_g}}(we^{-2\pi\mathrm{i}\bk\cdot(\cdot)})
 -\int_\Omega we^{-2\pi\mathrm{i}\bk\cdot\bx}\,d\bx\right\|\\
 &\le 2\sqrt2\pi\|\bk\|B_w\discvar{h_g}
 +\sqrt2(L_w+2\pi\|\bk\|B_w)\discvar{h_g}\\
 &\le\sqrt2(L_w+4\pi\|\bk\|B_w)\discvar{h_g}.
 \end{aligned}
 \label{eq:fourier_coefficient_quadrature}
\end{equation}
The inverse transform has the corresponding phase correction. Expanding
mode by mode gives
\begin{equation}
 \begin{aligned}
 \|\discvar{(\mathcal P_{\theta,n}w)_g}-\refvar{\mathcal P_\theta w}(\bm\xi_g)\|&\le\sum_{\bk\in\mathcal K_M}\|\mathbf{R}_\theta(\bk)\|
 \left[\|\discvar{\widehat w_n(\bk)}-\refvar{\widehat w(\bk)}\|
 +\|\refvar{\widehat w(\bk)}\|
 |e^{2\pi\mathrm{i}\bk\cdot\bm t_g}-e^{2\pi\mathrm{i}\bk\cdot\bm\xi_g}|\right]\\
 &\le C_{\mathrm{spec},M}\discvar{h_g}.
 \end{aligned}
 \label{eq:retained_fourier_quadrature}
\end{equation}
Both coefficient and evaluation-phase errors therefore have first-order
bounds for the same finite retained modes.

\textbf{\textit{Descriptor and common conditioning state.}}
Equation~\eqref{eq:fourier_coefficient_quadrature} applies to the fixed bins
in $\mathcal J$. The power difference bound, positive stabilizers, and
nonexpansive clipping yield
$\|\discvar{\bm c_{h_g}}-\refvar{\bm c_*}\|\le C_c\discvar{h_g}$.
Every block receives this same descriptor discrepancy. Its contribution to
FiLM is bounded by Eq.~\eqref{eq:film_lipschitz_bound}.

\textbf{\textit{Normalized attention integrals.}}
For a continuum conditioned field, let
$\refvar{\mathbf{B}_*}=\int_\Omega\bm v(\bm y)\bm k(\bm y)^\top\,d\bm y$ and
$\refvar{\bm k_*}=\int_\Omega\bm k(\bm y)\,d\bm y$.
Its attention output before projection is
$\refvar{\mathbf{T}_*}(\bx)=\refvar{\mathbf{B}_*}\bm q(\bx)/(\bm q(\bx)^\top\refvar{\bm k_*})$.
The integrands are bounded and Lipschitz, so endpoint-grid quadrature gives
$\|\overline{\mathbf{B}}-\refvar{\mathbf{B}_*}\|
+\|\overline{\bm k}-\refvar{\bm k_*}\|=O(\discvar{h_g})$.
Both normalized denominators have lower bound $d_0$.
The grid denominator additionally contains $\varepsilon_a/G$, giving
\begin{equation}
 \begin{aligned}
 \left\|\frac{\overline{\mathbf{B}}\bm q_g}
 {\bm q_g^\top\overline{\bm k}+\varepsilon_a/G}
 -\frac{\refvar{\mathbf{B}_*}\bm q_g}
 {\bm q_g^\top\refvar{\bm k_*}}\right\|
 &\le\frac{B_q\|\overline{\mathbf{B}}-\refvar{\mathbf{B}_*}\|}{d_0}
 +\frac{B_N}{d_0^2}
 \left(B_q\|\overline{\bm k}-\refvar{\bm k_*}\|+\frac{\varepsilon_a}{G}\right)
 =O(\discvar{h_g}).
 \end{aligned}
 \label{eq:attention_quadrature}
\end{equation}
Here $G^{-1}=n^{-2}=O(h_g^2)$. The affine output projection preserves the
rate. Changes in the conditioned input contribute another first-order term
by Lemma~\ref{lem:attention_lipschitz}.

\textbf{\textit{Propagation through all blocks.}}
Let $e_\ell$ be the maximum difference between grid and continuum samples
after block $\ell$, with $e_0=0$. The preceding estimates and residual
normalization give $e_{\ell+1}\le a_\ell e_\ell+d_\ell h_g$, where
$d_\ell$ includes the shared descriptor and quadrature errors. Expanding the
recursion gives
$e_L\le h_g\sum_{j=0}^{L-1}d_j\prod_{\ell=j+1}^{L-1}a_\ell$.
The finite sum defines $C_{\mathrm{quad},M}$ and proves the lemma.
\end{proof}

\begin{theorem}[Fixed-design grid consistency]
\label{thm:grid_consistency}
Under Assumptions~\ref{ass:geometry}--\ref{ass:grid_consistency}, let
$\refvar{\Gop_\theta^M}$ use the continuum spectral/attention stack and the decoder
reference $\refvar{J_*}$ in Lemma~\ref{lem:decoder_interp}. For either sampling
reference in Eq.~\eqref{eq:sampling_error_scale},
\begin{equation}
 \begin{aligned}
 \|\refvar{\overline{\Gop}_{\theta}^{h_g,M}}(\xq)-\refvar{\Gop_\theta^M}(\xq)\|
 &\le C_{\mathrm{grid},M}\discvar{h_g},\\
 \|\discvar{\Gop_{\theta}^{\mathcal S,h_g,M}}(\xq)-\refvar{\Gop_\theta^M}(\xq)\|
 &\le C_{\theta,M,h_g}L_{\mathrm{post}}\discvar{\mathcal E_{\mathcal S}}
 +C_{\mathrm{grid},M}\discvar{h_g}.
 \end{aligned}
 \label{eq:grid_operator_bound}
\end{equation}
\end{theorem}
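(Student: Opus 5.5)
The plan is to prove the first inequality by decomposing the reference prediction $\overline{\Gop}_\theta^{h_g,M}(\xq)$ and the continuum prediction $\Gop_\theta^M(\xq)$ into their two stages: the operator stack and the decoder gather. The second inequality then follows from the first by the triangle inequality through $\overline{\Gop}_\theta^{h_g,M}(\xq)$, using Theorem~\ref{thm:convergence}. Both predictions share the local skip $\bm s_q=\mathbf{W}_{\mathrm{skip}}\bm u_q+\bm b_{\mathrm{skip}}$, so only the field contributions need to be compared. The common input is the continuum post-encoded field $z_*(\bx)=\psi_\theta([r_*(\bx)\,\|\,\rho(\bx)])$. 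In the fixed-bandwidth case $r_*=I_{\mu,\sigma}\zeta$; in the local-bandwidth case $r_*=\zeta$. In both cases $z_*$ is bounded and Lipschitz. For $\zeta$ this holds by Assumption~\ref{ass:regularity}. For $I_{\mu,\sigma}\zeta$, the quotient of two kernel integrals has a denominator bounded below by $m_\sigma$ and both integrands are Lipschitz in $\bx$, with constant $L_\kappa$ for the kernel. Composition with the Lipschitz maps $\rho$ and $\psi_\theta$ preserves these properties. The reference grid field $\mathbf{Z}_*$ is exactly $z_*|_{\Xi_{h_g}}$.

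Next, I write $v_*=\Phi_\theta^M(z_*)$ for the continuum evolved field and $H_{h_g}=\Phi_{\theta,h_g}^M(z_*|_{\Xi_{h_g}})$ for the grid evolved field. Before LayerNorm and readout, the gathered features are $J_{h_g}H_{h_g}(\xq)$ on the grid side and $J_*v_*(\xq)$ on the continuum side. I split their difference as
\[
\bigl\|J_{h_g}H_{h_g}(\xq)-J_*v_*(\xq)\bigr\|\le\bigl\|J_{h_g}\bigl(H_{h_g}-v_*|_{\Xi_{h_g}}\bigr)(\xq)\bigr\|+\bigl\|J_{h_g}(v_*|_{\Xi_{h_g}})(\xq)-J_*v_*(\xq)\bigr\|.
\]
The first term is bounded in two steps. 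Lemma~\ref{lem:decoder_interp} shows that $J_{h_g}$ is nonexpansive, which reduces the term to $\max_g\|H_{h_g,g}-v_*(\bm\xi_g)\|$. Lemma~\ref{lem:grid_consistency} then bounds this by $C_{\mathrm{quad},M}h_g$. The second term is bounded by $C_{\mathrm{dec},v_*}h_g$ via Lemma~\ref{lem:decoder_interp}, which applies in both decoder regimes of Assumption~\ref{ass:grid_consistency}, provided $v_*$ is bounded and Lipschitz. Finally, the decoder LayerNorm and affine readout are Lipschitz with constant $L_{\mathrm{dec}}$ by the LayerNorm Jacobian bound. This gives $C_{\mathrm{grid},M}=L_{\mathrm{dec}}(C_{\mathrm{quad},M}+C_{\mathrm{dec},v_*})$.

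For the second inequality, Theorem~\ref{thm:convergence} already bounds $\|\Gop_\theta^{\mathcal S,h_g,M}(\xq)-\overline{\Gop}_\theta^{h_g,M}(\xq)\|$ by $C_{\theta,M,h_g}L_{\mathrm{post}}\mathcal E_{\mathcal S}$. Adding the first inequality gives the result.

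The main obstacle is the regularity requirement on $v_*$: Lemma~\ref{lem:decoder_interp} needs the continuum evolved field to be bounded and Lipschitz, and the constants must stay uniform as $h_g\to0$. Assumption~\ref{ass:grid_consistency} stipulates that all continuum fields reached by the stack are bounded and Lipschitz, so I would invoke it directly. As a sanity check, I would also verify the claim structurally. The finite-mode Fourier output is a trigonometric polynomial. Attention is a fixed matrix applied to a smooth function of the conditioned field, divided by a denominator bounded below by $d_0$. FiLM, the residual connection, and LayerNorm all preserve the Lipschitz property. Hence each block maps bounded Lipschitz fields to bounded Lipschitz fields, with constants independent of $h_g$. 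With that in place, the constant $C_{\mathrm{grid},M}$ depends only on $\theta$, $M$, $z_*$, and the decoder regime, not on $h_g$.
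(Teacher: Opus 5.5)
Your proposal is correct and follows the paper's proof essentially step for step. You show $z_*$ is bounded and Lipschitz in both sampling regimes and cancel the common local skip. You then add and subtract the gather of the restricted continuum output, bounding the first piece by nonexpansiveness with Lemma~\ref{lem:grid_consistency} and the second by Lemma~\ref{lem:decoder_interp}, which yields the same constant $C_{\mathrm{grid},M}=L_{\mathrm{dec}}(C_{\mathrm{quad},M}+C_{\mathrm{dec},H_*})$; the second inequality follows, as in the paper, from the triangle inequality with Theorem~\ref{thm:convergence}.
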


\begin{proof}
The reference post-encoding field
$z_*(\bx)=\psi_\theta([r_*(\bx)\|\rho(\bx)])$ is bounded and Lipschitz.
For fixed encoder bandwidth, this follows by differentiating its positive
Gaussian numerator and denominator; for local bandwidth it follows from the
assumed regularity of $\zeta$. Let
$\discvar{H_{h_g}}=\discvar{\Phi_{\theta,h_g}^M}(\refvar{z_*}|_{\Xi_{h_g}})$ and
$\refvar{H_*}=\refvar{\Phi_\theta^M}(\refvar{z_*})$.
Lemma~\ref{lem:grid_consistency} gives
$\|\discvar{H_{h_g}}-\refvar{H_*}|_{\Xi_{h_g}}\|_\infty
\le C_{\mathrm{quad},M}\discvar{h_g}$.
The common local skip cancels, while adding and subtracting the gather of
$H_*$ yields
\begin{equation}
 \begin{aligned}
 \|\refvar{\overline{\Gop}_{\theta}^{h_g,M}}(\xq)-\refvar{\Gop_\theta^M}(\xq)\|
 &\le L_{\mathrm{dec}}
 \|\discvar{J_{h_g}}(\discvar{H_{h_g}}-\refvar{H_*}|_{\Xi_{h_g}})(\xq)\|+L_{\mathrm{dec}}
 \|\discvar{J_{h_g}}(\refvar{H_*}|_{\Xi_{h_g}})(\xq)
 -\refvar{J_*}\refvar{H_*}(\xq)\|\\
 &\le L_{\mathrm{dec}}
 (C_{\mathrm{quad},M}+C_{\mathrm{dec},H_*})\discvar{h_g}.
 \end{aligned}
 \label{eq:grid_theorem_decomposition}
\end{equation}
This defines $C_{\mathrm{grid},M}$. Adding the sensor bound in
Theorem~\ref{thm:convergence} proves the joint estimate.
\end{proof}

\subsection{Perturbation Stability of the Spectral State}

\begin{theorem}[Stabilized spectral descriptor]
\label{thm:spectral}
Let $P_j,P'_j\ge0$ be spectra on the same frequency set, with
$E_j=P_j+\varepsilon_s$, $E'_j=P'_j+\varepsilon_s$, and
$\Delta_j=E'_j-E_j$. If $\eta=\|\Delta\|_1/\|E\|_1<1$, then
\begin{equation}
 |\alpha(P')-\alpha(P)|\le\frac\eta{1-\eta},
 \qquad
 |\eta_b(P')-\eta_b(P)|\le\frac\eta{1-\eta}.
 \label{eq:spectral_descriptor_perturbation}
\end{equation}
If also $\max_j|\Delta_j|/E_j\le\eta_\infty<1$, the clipped slope obeys
\begin{equation}
 |\widetilde\beta(P')-\widetilde\beta(P)|
 \le C_\beta\frac{\eta_\infty}{1-\eta_\infty},
 \qquad C_\beta=\frac{\sqrt{|\mathcal J|}\,\|x\|_2}{D_\beta}.
 \label{eq:spectral_slope_perturbation}
\end{equation}
\end{theorem}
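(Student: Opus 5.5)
The approach is to treat each descriptor component as a ratio of linear functionals in the effective powers $E_j$, and to control the relative perturbation of numerator and denominator by $\eta=\|\Delta\|_1/\|E\|_1$. Since all $E_j>0$, we have $\|E\|_1=A$. Moreover, $\|E'\|_1\ge A-\|\Delta\|_1=(1-\eta)A>0$, which is where the hypothesis $\eta<1$ enters.

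\textbf{Band ratios.} Write $\eta_b=N_b/(A+\varepsilon_s)$ with $N_b=\sum_{j\in\mathcal B_b}E_j$, and similarly $\eta_b'=N_b'/(A'+\varepsilon_s)$. Use the same common-denominator expansion as in Eq.~\eqref{eq:fixed_bandwidth_chain}:
\[
\eta_b'-\eta_b=\frac{N_b'-N_b}{A'+\varepsilon_s}+\frac{N_b}{A+\varepsilon_s}\cdot\frac{A-A'}{A'+\varepsilon_s}.
\]
Three bounds control this expression: $|N_b'-N_b|\le\|\Delta\|_1$, $|A-A'|\le\|\Delta\|_1$, and $0\le N_b/(A+\varepsilon_s)\le1$. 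The denominator satisfies $A'+\varepsilon_s\ge(1-\eta)A$. This gives a bound of $2\eta/(1-\eta)$. To remove the factor $2$, write the difference as a single fraction,
\[
\frac{N_b'(A+\varepsilon_s)-N_b(A'+\varepsilon_s)}{(A'+\varepsilon_s)(A+\varepsilon_s)},
\]
and split $\Delta$ into its in-band part $\Delta^{\rm in}$ and out-of-band part $\Delta^{\rm out}$. The numerator becomes
\[
\Delta^{\rm in}(A-N_b+\varepsilon_s)-N_b\Delta^{\rm out}.
\]
Here $A-N_b+\varepsilon_s$ and $N_b$ are both at most $A+\varepsilon_s$. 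So the numerator is at most $(|\Delta^{\rm in}|+|\Delta^{\rm out}|)(A+\varepsilon_s)\le\|\Delta\|_1(A+\varepsilon_s)$, and dividing by $(A'+\varepsilon_s)(A+\varepsilon_s)$ yields the stated $\eta/(1-\eta)$.

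\textbf{Centroid.} The centroid is handled identically, with weights $w_j=k_j/k_{\max}\in(0,1]$ replacing the indicator of $\mathcal B_b$. The numerator of the single fraction is then
\[
\sum_j\Delta_j\bigl(w_j(A+\varepsilon_s/k_{\max})-\textstyle\sum_r w_rE_r\bigr),
\]
after normalizing the denominator by $k_{\max}$. Each bracket is a difference of two numbers in $[0,A+\varepsilon_s/k_{\max}]$, so its absolute value is at most that endpoint. The same cancellation then gives $\eta/(1-\eta)$. The frequency set is shared between $P$ and $P'$, so $k_{\max}$ is identical on both sides.

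\textbf{Clipped slope.} Clipping is nonexpansive, so it suffices to bound $|\beta(P')-\beta(P)|$. By Eq.~\eqref{eq:sre_beta}, with $D_\beta$ fixed (it depends only on frequencies),
\[
|\beta'-\beta|=\frac{\bigl|\sum_j x_j\log(E_j'/E_j)\bigr|}{D_\beta}\le\frac{\|x\|_2\,\|\log(E'/E)\|_2}{D_\beta}
\]
by Cauchy--Schwarz. For $|t|\le\eta_\infty<1$, the mean value theorem gives $|\log(1+t)|\le|t|/(1-|t|)\le\eta_\infty/(1-\eta_\infty)$. Hence $\|\log(E'/E)\|_2\le\sqrt{|\mathcal J|}\,\eta_\infty/(1-\eta_\infty)$, which produces exactly $C_\beta$.

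\textbf{Main obstacle.} The delicate step is obtaining the constant $1$ rather than $2$ in the ratio bounds. This requires the single-fraction, in/out split argument above instead of the naive triangle inequality. The centroid case additionally needs care with the $\varepsilon_s$ placed outside the $k_{\max}$ factor. If the sharp constant fails to come out there, I would fall back to the bound $2\eta/(1-\eta)$ and then check whether the bracket bound can be tightened using $w_j\le1$.
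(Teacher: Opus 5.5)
Your proof is correct and follows essentially the paper's argument. Your single-fraction numerator with the in-band/out-of-band split is, after dividing by $A+\tau$, exactly the paper's identity $F(E+\Delta)-F(E)=\sum_j\Delta_j\bigl(s_j-F(E)\bigr)/(A+D+\tau)$, where $s_j=\mathbf{1}\{j\in\mathcal B_b\}$ or $k_j/k_{\max}$ and $\tau=\varepsilon_s$ or $\varepsilon_s/k_{\max}$, bounded using $|s_j-F(E)|\le 1$; your slope bound via Cauchy--Schwarz and $|\log(1+\delta_j)|\le\eta_\infty/(1-\eta_\infty)$ is likewise the paper's, and the naive $2\eta/(1-\eta)$ detour and fallback plan are unnecessary because your cancellation already gives the sharp constant.
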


\begin{proof}
Write $A=\sum_jE_j$ and $D=\sum_j\Delta_j$, so
$A+D\ge(1-\eta)A$. Both the centroid and band ratios have the form
$F(E)=\sum_j E_js_j/(A+\tau)$ with $s_j\in[0,1]$ and $\tau>0$:
for the centroid use $s_j=k_j/k_{\max}$ and
$\tau=\varepsilon_s/k_{\max}$; for band $b$ use
$s_j=\mathbf1\{j\in\mathcal B_b\}$ and $\tau=\varepsilon_s$.
Thus $F(E)\in[0,1]$, and expansion of the ratio gives
\begin{equation}
 \begin{aligned}
 F(E+\Delta)-F(E)
 &=\frac{\sum_j\Delta_js_j-F(E)\sum_j\Delta_j}{A+D+\tau}\\
 &=\frac{\sum_j\Delta_j(s_j-F(E))}{A+D+\tau},\\
 |F(E+\Delta)-F(E)|
 &\le\frac{\|\Delta\|_1}{(1-\eta)A+\tau}
 \le\frac\eta{1-\eta}.
 \end{aligned}
 \label{eq:alpha_signed_identity}
\end{equation}
This proves both ratio bounds for signed power changes.
For the slope, let $\delta_j=\Delta_j/E_j$.
Since $\sum_jx_j=0$, the centered log-power difference reduces to
\begin{equation}
 \begin{aligned}
 \beta(P')-\beta(P)
 &=-\frac1{D_\beta}\sum_jx_j
 \left[\log E'_j-\log E_j
 -\frac1{|\mathcal J|}\sum_r(\log E'_r-\log E_r)\right]\\
 &=-\frac1{D_\beta}\sum_jx_j\log(1+\delta_j),\\
 |\beta(P')-\beta(P)|
 &\le\frac{\|x\|_2\,\|\log(1+\delta)\|_2}{D_\beta}
 \le C_\beta\frac{\eta_\infty}{1-\eta_\infty}.
 \end{aligned}
 \label{eq:beta_difference_identity}
\end{equation}
The last inequality uses
$|\log(1+\delta_j)|\le-\log(1-\eta_\infty)
\le\eta_\infty/(1-\eta_\infty)$. Clipping to $[-10,10]$ is
$1$-Lipschitz, completing the result.
\end{proof}

\textbf{\textit{Amplitude dependence.}}
For positive raw powers, omit the additive power and ratio stabilizers and
write the resulting statistics as $\alpha_0,\eta_{b,0},\beta_0$, keeping
$D_\beta$ fixed. Multiplying the latent field by $a>0$ scales raw power by
$a^2$. Common factors cancel from the idealized centroid and band ratios,
while the centered slope satisfies
\begin{equation}
 \beta_0(a^2P)
 =-\frac{\sum_jx_j(2\log a+\log P_j)}{D_\beta}
 =-\frac{2\log a\sum_jx_j}{D_\beta}
  -\frac{\sum_jx_j\log P_j}{D_\beta}
 =\beta_0(P),
 \label{eq:amp_beta}
\end{equation}
Thus the idealized state measures relative spectral shape. In the stabilized
state, $E'_j=a^2P_j+\varepsilon_s$ gives a continuous amplitude dependence,
which approaches the idealized statistics as power grows relative to the
stabilizers. General additive field changes produce signed per-bin power
perturbations through their Fourier cross terms; Theorem~\ref{thm:spectral}
quantifies the corresponding descriptor change.

\section{Experimental Setup Details}
\label{app:setup}

\textbf{\textit{Architecture.}}
We evaluate three model scales with hidden widths $d=32$ (Small),
$d=64$ (Medium), and $d=128$ (Large). Within each dataset, the
three scales differ only in hidden width. All configurations use
a latent grid with $n_g=32$ points per axis, a Fourier cutoff of $M=4$, attention width $d_a=32$, and depthwise spectral
mixing. The temporal encoder consists of a linear projection
followed by layer normalization and GELU, while \CQD applies
layer normalization followed by a linear readout.
The coordinate encoding uses $16$ random frequencies, whose
sine and cosine components yield $d_\rho=32$ features.
We use $L=2$ operator layers for PEMS-Stream and CA-Stream
and $L=1$ for AIR-Stream, with $B=4$ spectral bands for
PEMS-Stream and $B=3$ for CA-Stream and AIR-Stream.
The field readout is combined with a learnable local-history
skip connection shared across sensors.

\textbf{\textit{Coordinate assignment.}}
We use three types of static descriptors: geographic coordinates,
numerical attributes, and categorical attributes. Numerical attributes
are median-imputed and standardized, while categorical attributes
are one-hot encoded. The processed numerical and categorical
attributes are jointly reduced to two dimensions using PCA.
Preprocessing statistics and the PCA projection are estimated
from the first period and reused in subsequent periods.
Within each period, geographic coordinates and PCA outputs are
separately normalized and then blended to obtain a continuous
two-dimensional position for each sensor; datasets without
additional attributes use geographic coordinates alone.
These positions guide a one-to-one assignment of sensors to fixed,
uniformly spaced points on a square grid in $\Omega=[0,1]^2$.
The assigned grid points serve as sensor coordinates for both
the encoder and the query decoder. The assignment grid and
sensor allocation are reconstructed for each period's sensor set,
while the latent field grid and learned parameter shapes remain fixed.

\textbf{\textit{Optimization.}}
We use AdamW with learning rates of $5\times10^{-3}$ for
PEMS-Stream and $10^{-2}$ for CA-Stream and AIR-Stream,
a batch size of $64$, and gradient norm clipping at $5.0$.
Training runs for at most $200$ epochs per period and stops
early if validation MAE fails to improve for more than
$10$ consecutive epochs. The training objective is masked
MAE, computed in standardized target space:
\begin{equation}
\mathcal{L}_{\mathrm{MAE}}
=
\frac{\sum_{i,t} m_{it}\left|\hat y_{it}-y_{it}\right|}
{\sum_{i,t}m_{it}+\varepsilon},
\label{eq:loss_mae}
\end{equation}
where $m_{it}$ indicates finite target entries after preprocessing,
and the sums run over the mini-batch.

\textbf{\textit{Continual protocol.}}
We train the model from scratch in period $1$.
For each subsequent period, we initialize it from the selected
parameters of the previous period and fine-tune the same shared
parameter set on current training samples. Observation histories
and coordinate assignments are updated for the current sensor set,
while the latent-grid dimensions and learned parameter shapes
remain unchanged.

\textbf{\textit{Metrics and evaluation protocol.}}
We evaluate forecasting performance using MAE, RMSE, and MAPE.
All experiments follow the stream construction and evaluation
protocol of \citet{liu2026stbp}, using the same data splits,
prediction horizons, and evaluation masks across methods.

\section{Algorithms}
\label{app:algorithm}

Algorithm~\ref{alg:forward} summarizes the forward pass described
in Section~\ref{sec:method}, while Algorithm~\ref{alg:continual}
presents the continual training protocol described in
Sections~\ref{sec:continual} and~\ref{sec:exp:setup}.
Both algorithms follow the notation in Appendix~\ref{app:notation}
and use the optimization settings specified in
Appendix~\ref{app:setup}.

\begin{algorithm}[t]
\caption{\model{} forward pass for one period}
\label{alg:forward}
\begin{algorithmic}[1]
\Require histories $\{\bm u_i\}_{i\in\mathcal V_p}$, coordinates $\{\bm x_i\}_{i\in\mathcal V_p}$, query set $\mathcal V_q\subseteq\mathcal V_p$, parameters $\theta$
\Ensure forecasts $\{\hat{\bm y}_q\}_{q\in\mathcal V_q}$
\State $\bm e_i \gets \tau_\theta(\bm u_i)$ for all $i\in\mathcal V_p$ \Comment{temporal encoding}
\For{$g=1$ to $G$}
    \State $a_{gi} \gets \kappa_{\sigma_{\mathrm{enc}}}(\bm\xi_g,\bm x_i)\big/\sum_{j\in\mathcal V_p}\kappa_{\sigma_{\mathrm{enc}}}(\bm\xi_g,\bm x_j)$
    \State $\bm r_g \gets \sum_{i\in\mathcal V_p} a_{gi}\bm e_i$ \Comment{normalized kernel lift}
    \State $\bm z_g \gets \psi_\theta([\bm r_g\,\|\,\rho(\bm\xi_g)])$ \Comment{position-aware latent code}
\EndFor
\State $\bar z_g \gets d^{-1}\mathbf{1}_d^\top\bm z_g$; $\widehat{\bar z} \gets \mathrm{DFT}(\bar z)$
\State $E_m \gets |\widehat{\bar z}(\bm\omega_m)|^2+\epsilon_s$ and $\omega_m \gets \|\bm\omega_m\|_2+\epsilon_f$ for all $m\in\mathcal J$
\State $\bm c \gets [\alpha,\tilde\beta,\eta_1,\ldots,\eta_B]^\top$ \Comment{regime descriptor, Eq.~\eqref{eq:sre_statistics_main}}
\For{$\ell=1$ to $L$}
    \State $(\bm\gamma,\bm\delta) \gets 0.1\tanh(\mathrm{MLP}_{\theta_\ell}(\bm c))$; $\mathbf{Z}^{\mathrm c} \gets \mathbf{Z}\odot(\mathbf{1}_G(\mathbf{1}_d+\bm\gamma)^\top)+\mathbf{1}_G\bm\delta^\top$ \Comment{FiLM conditioning}
    \State $\mathbf{Z} \gets \mathbf{Z}+\mathcal P_{\theta_\ell}(\mathbf{Z}^{\mathrm c})+\mathcal A_{\theta_\ell}(\mathbf{Z}^{\mathrm c})$ \Comment{spectral and attention paths}
    \State $\mathbf{Z} \gets \LN_\ell(\mathbf{Z})+\mathbf{1}_G\bm b_{\mathrm{res},\ell}^\top$ \Comment{residual normalization}
\EndFor
\State $\mathbf{H} \gets \mathbf{Z}$
\For{$q\in\mathcal V_q$}
    \State $b_{qg} \gets \kappa_{\sigma_{\mathrm{dec}}}(\bm x_q,\bm\xi_g)\big/\sum_{r}\kappa_{\sigma_{\mathrm{dec}}}(\bm x_q,\bm\xi_r)$
    \State $\tilde{\bm h}_q \gets \sum_g b_{qg}\bm h_g$
    \State $\hat{\bm y}_q \gets \mathbf{W}_{\mathrm{out}}\LN(\tilde{\bm h}_q)+\bm b_{\mathrm{out}}+\mathbf{W}_{\mathrm{skip}}\bm u_q+\bm b_{\mathrm{skip}}$ \Comment{coordinate query}
\EndFor
\State \Return $\{\hat{\bm y}_q\}_{q\in\mathcal V_q}$
\end{algorithmic}
\end{algorithm}

\begin{algorithm}[t]
\caption{Continual fine-tuning of \model{} across the sensor stream}
\label{alg:continual}
\begin{algorithmic}[1]
\Require period data $\{(\mathcal V_p,\bm s_{\mathcal V_p},\mathcal D_p)\}_{p=1}^{P}$,
learning rate $\eta$, batch size $B_s$, maximum epochs $E_{\max}$,
patience $E_{\mathrm{pat}}$
\Ensure parameters $\theta^{(P)}$
\State initialize $\theta$ from scratch
\For{$p=1$ to $P$}
    \State $\bm x_i^{(p)} \gets \chi_p(\bm s_i)$ for all $i\in\mathcal V_p$
    \If{$p>1$}
        \State $\theta \gets \theta^{(p-1)}$
    \EndIf
    \State reset AdamW state; initialize checkpoint set $\mathcal H_p\gets\varnothing$
    \For{epoch $=1$ to $E_{\max}$}
        \ForAll{mini-batches $\mathcal B$ from the training split of $\mathcal D_p$ with batch size $B_s$}
            \State $\{\hat{\bm y}_q\}\gets
            \textsc{Forward}(\mathcal B;\{\bm x_i^{(p)}\}_{i\in\mathcal V_p},\theta)$
            \Comment{Algorithm~\ref{alg:forward}}
            \State $\mathcal L \gets
            \sum_{i,t}m_{it}|\hat y_{it}-y_{it}|
            \big/(\sum_{i,t}m_{it}+\varepsilon)$
            \Comment{masked MAE, Eq.~\eqref{eq:loss_mae}}
            \State update $\theta$ using AdamW with learning rate $\eta$ and gradient clipping
        \EndFor
        \State evaluate MAE on the validation split of $\mathcal D_p$;
        save $\theta$ in $\mathcal H_p$ if it improves the best value so far
        \If{validation MAE has not improved for $E_{\mathrm{pat}}$ epochs}
            \State \textbf{break}
        \EndIf
    \EndFor
    \State select $\theta^{(p)}$ from $\mathcal H_p$ by validation MAE with optional checkpoint parameter averaging
\EndFor
\State \Return $\theta^{(P)}$
\end{algorithmic}
\end{algorithm}

\section{Baseline Descriptions}
\label{app:baselines}

Section~\ref{sec:exp:setup} compares \model{} with four families
of methods. We describe each baseline and its role in the
comparison. All methods follow the evaluation protocol,
prediction horizons, data splits, and evaluation masks
specified in Appendix~\ref{app:setup}.

\textbf{\textit{\ding{182} General spatio-temporal forecasting.}}
\textsc{GWNet}~\citep{wu2019graph} combines graph convolution
with dilated causal temporal convolution and learns a self-adaptive
adjacency matrix to capture spatial dependencies without a
predefined graph.
\textsc{STID}~\citep{shao2022stid} incorporates spatial and temporal
identity embeddings into a lightweight predictor, demonstrating
that a simple architecture with explicit identity information
can remain competitive with more elaborate designs.
\textsc{iTransformer}~\citep{liu2023itransformer} inverts the
attention axis of a transformer by treating each series as a
token, enabling effective multivariate forecasting without an
explicit spatial prior. Together, these methods represent
graph-based, identity-based, and sequence-based approaches
to forecasting from the same observations.

\textbf{\textit{\ding{183} Neural operators.}}
\textsc{DeepONet}~\citep{lu2021learning} approximates nonlinear
operators through a branch--trunk factorization: the branch
encodes the input function, while the trunk encodes the query
coordinate. This architecture is supported by a universal
approximation theorem for operators.
\textsc{FNO}~\citep{li2020fourier} parameterizes a global
convolution kernel in the spectral domain and retains a small
set of Fourier modes, allowing transfer across resolutions.
Both methods share the operator perspective of \model{},
mapping input functions to output functions. In our experiments,
they are trained on fixed meshes and serve as operator baselines
for comparison with forecasting on irregular, expanding
sensor layouts.

\textbf{\textit{\ding{184} Continual spatio-temporal learning.}}
\textsc{TrafficStream}~\citep{chen2021trafficstream} combines
experience replay with parameter smoothing for growing road networks.
\textsc{STKEC}~\citep{wang2023knowledge} expands and consolidates
task-specific knowledge, enabling new sensors to reuse learned structure.
\textsc{PECPM}~\citep{wang2023pattern} expands and consolidates
a bank of spatio-temporal patterns as the graph evolves.
\textsc{STRAP}~\citep{zhang2026strap} retrieves spatio-temporal
patterns to improve out-of-distribution generalization.
\textsc{EAC}~\citep{chen2025eac} studies continual tuning
by expanding and compressing adapted parameters.
\textsc{EVOLVE}~\citep{yang2027evolve} addresses changing
traffic-network topologies through continual learning.
All six retain the current sensor set in their representations;
\model{} treats expansion as changing sampling locations
on a fixed latent field.

\textbf{\textit{\ding{185} Online adaptation.}}
\textsc{DOL}~\citep{wang2025dol} uses AST-Net, which augments
a frozen Graph WaveNet backbone with location-specific learners
that adapt online.
We implement \textsc{ST-TTC}~\citep{chen2025stttc} on an
\textsc{EAC} backbone and calibrate its forecasts through online
updates to spectral amplitude and phase offsets.

\section{Statistical Significance Analysis}
\label{app:statistical_significance}

\begin{table}[!htbp]
  \centering
  \begin{threeparttable}
  \caption{Wilcoxon signed-rank test results (one-sided,
  H$_1$: STFO-L $<$ Baseline) for the main forecasting results.
  $N=27$ paired entries per comparison (dataset $\times$ horizon $\times$ seed).
  Shading and stars indicate nominal Holm-adjusted levels.}
  \label{tab:table1_wilcoxon}
  \begingroup
  \small
  \definecolor{STFOsigStrong}{RGB}{231,232,250}
  \definecolor{STFOsigMedium}{RGB}{244,244,252}
  \setlength{\tabcolsep}{5.5pt}
  \renewcommand{\arraystretch}{0.95}
  \begin{tabular}{lccccrccc}
    \toprule
    \textbf{Baseline} & $N$ & \textbf{STFO-L Wins}
    & \textbf{Baseline Wins} & \textbf{Ties}
    & \textbf{Mean $\Delta$\%} & $W^{+}$
    & $p$\textbf{-value} & $p_{\mathrm{Holm}}$ \\
    \midrule
    \multicolumn{9}{c}{\textbf{Metric: MAE}\quad H$_1$: STFO-L $<$ Baseline} \\
    \midrule
    \rowcolor{STFOsigStrong} GWNet & 27 & 27 & 0 & 0 & 37.07\% & 0.0 & $<0.0001$ & $<0.0001^{***}$ \\
    \rowcolor{STFOsigStrong} STID & 27 & 27 & 0 & 0 & 48.23\% & 0.0 & $<0.0001$ & $<0.0001^{***}$ \\
    \rowcolor{STFOsigStrong} iTrans & 27 & 27 & 0 & 0 & 36.40\% & 0.0 & $<0.0001$ & $<0.0001^{***}$ \\
    \rowcolor{STFOsigStrong} DeepONet & 27 & 27 & 0 & 0 & 11.84\% & 0.0 & $<0.0001$ & $<0.0001^{***}$ \\
    \rowcolor{STFOsigStrong} FNO & 27 & 26 & 1 & 0 & 13.25\% & 1.0 & $<0.0001$ & $<0.0001^{***}$ \\
    \rowcolor{STFOsigStrong} TrafficStream & 27 & 27 & 0 & 0 & 27.06\% & 0.0 & $<0.0001$ & $<0.0001^{***}$ \\
    \rowcolor{STFOsigStrong} STKEC & 27 & 27 & 0 & 0 & 27.07\% & 0.0 & $<0.0001$ & $<0.0001^{***}$ \\
    \rowcolor{STFOsigStrong} PECPM & 27 & 27 & 0 & 0 & 26.89\% & 0.0 & $<0.0001$ & $<0.0001^{***}$ \\
    \rowcolor{STFOsigStrong} STRAP & 27 & 27 & 0 & 0 & 17.87\% & 0.0 & $<0.0001$ & $<0.0001^{***}$ \\
    \rowcolor{STFOsigStrong} EAC & 27 & 27 & 0 & 0 & 24.00\% & 0.0 & $<0.0001$ & $<0.0001^{***}$ \\
    \rowcolor{STFOsigStrong} EVOLVE & 27 & 27 & 0 & 0 & 18.18\% & 0.0 & $<0.0001$ & $<0.0001^{***}$ \\
    \rowcolor{STFOsigStrong} DOL & 27 & 24 & 3 & 0 & 4.27\% & 18.0 & $<0.0001$ & $<0.0001^{***}$ \\
    \rowcolor{STFOsigStrong} ST-TTC & 27 & 27 & 0 & 0 & 14.79\% & 0.0 & $<0.0001$ & $<0.0001^{***}$ \\
    \midrule
    \multicolumn{9}{c}{\textbf{Metric: RMSE}\quad H$_1$: STFO-L $<$ Baseline} \\
    \midrule
    \rowcolor{STFOsigStrong} GWNet & 27 & 27 & 0 & 0 & 33.28\% & 0.0 & $<0.0001$ & $<0.0001^{***}$ \\
    \rowcolor{STFOsigStrong} STID & 27 & 27 & 0 & 0 & 44.32\% & 0.0 & $<0.0001$ & $<0.0001^{***}$ \\
    \rowcolor{STFOsigStrong} iTrans & 27 & 27 & 0 & 0 & 32.68\% & 0.0 & $<0.0001$ & $<0.0001^{***}$ \\
    \rowcolor{STFOsigStrong} DeepONet & 27 & 27 & 0 & 0 & 10.67\% & 0.0 & $<0.0001$ & $<0.0001^{***}$ \\
    \rowcolor{STFOsigStrong} FNO & 27 & 26 & 1 & 0 & 12.15\% & 1.0 & $<0.0001$ & $<0.0001^{***}$ \\
    \rowcolor{STFOsigStrong} TrafficStream & 27 & 27 & 0 & 0 & 23.58\% & 0.0 & $<0.0001$ & $<0.0001^{***}$ \\
    \rowcolor{STFOsigStrong} STKEC & 27 & 27 & 0 & 0 & 23.84\% & 0.0 & $<0.0001$ & $<0.0001^{***}$ \\
    \rowcolor{STFOsigStrong} PECPM & 27 & 27 & 0 & 0 & 23.43\% & 0.0 & $<0.0001$ & $<0.0001^{***}$ \\
    \rowcolor{STFOsigStrong} STRAP & 27 & 27 & 0 & 0 & 17.27\% & 0.0 & $<0.0001$ & $<0.0001^{***}$ \\
    \rowcolor{STFOsigStrong} EAC & 27 & 27 & 0 & 0 & 19.80\% & 0.0 & $<0.0001$ & $<0.0001^{***}$ \\
    \rowcolor{STFOsigStrong} EVOLVE & 27 & 27 & 0 & 0 & 17.35\% & 0.0 & $<0.0001$ & $<0.0001^{***}$ \\
    \rowcolor{STFOsigStrong} DOL & 27 & 27 & 0 & 0 & 4.93\% & 0.0 & $<0.0001$ & $<0.0001^{***}$ \\
    \rowcolor{STFOsigStrong} ST-TTC & 27 & 27 & 0 & 0 & 11.79\% & 0.0 & $<0.0001$ & $<0.0001^{***}$ \\
    \midrule
    \multicolumn{9}{c}{\textbf{Metric: MAPE}\quad H$_1$: STFO-L $<$ Baseline} \\
    \midrule
    \rowcolor{STFOsigStrong} GWNet & 27 & 27 & 0 & 0 & 36.29\% & 0.0 & $<0.0001$ & $<0.0001^{***}$ \\
    \rowcolor{STFOsigStrong} STID & 27 & 27 & 0 & 0 & 49.18\% & 0.0 & $<0.0001$ & $<0.0001^{***}$ \\
    \rowcolor{STFOsigStrong} iTrans & 27 & 27 & 0 & 0 & 40.43\% & 0.0 & $<0.0001$ & $<0.0001^{***}$ \\
    \rowcolor{STFOsigStrong} DeepONet & 27 & 26 & 1 & 0 & 17.11\% & 2.0 & $<0.0001$ & $<0.0001^{***}$ \\
    \rowcolor{STFOsigStrong} FNO & 27 & 24 & 3 & 0 & 18.16\% & 15.0 & $<0.0001$ & $<0.0001^{***}$ \\
    \rowcolor{STFOsigStrong} TrafficStream & 27 & 27 & 0 & 0 & 20.97\% & 0.0 & $<0.0001$ & $<0.0001^{***}$ \\
    \rowcolor{STFOsigStrong} STKEC & 27 & 27 & 0 & 0 & 20.13\% & 0.0 & $<0.0001$ & $<0.0001^{***}$ \\
    \rowcolor{STFOsigStrong} PECPM & 27 & 27 & 0 & 0 & 21.22\% & 0.0 & $<0.0001$ & $<0.0001^{***}$ \\
    \rowcolor{STFOsigStrong} STRAP & 27 & 27 & 0 & 0 & 15.01\% & 0.0 & $<0.0001$ & $<0.0001^{***}$ \\
    \rowcolor{STFOsigStrong} EAC & 27 & 27 & 0 & 0 & 19.80\% & 0.0 & $<0.0001$ & $<0.0001^{***}$ \\
    \rowcolor{STFOsigStrong} EVOLVE & 27 & 26 & 1 & 0 & 16.01\% & 1.0 & $<0.0001$ & $<0.0001^{***}$ \\
    \rowcolor{STFOsigMedium} DOL & 27 & 20 & 7 & 0 & 2.87\% & 80.0 & $0.0038$ & $0.0038^{**}$ \\
    \rowcolor{STFOsigStrong} ST-TTC & 27 & 27 & 0 & 0 & 20.39\% & 0.0 & $<0.0001$ & $<0.0001^{***}$ \\
    \bottomrule
  \end{tabular}
  \begin{tablenotes}[flushleft]
  \footnotesize
  \setlength{\itemsep}{1pt}
  \item \colorbox{STFOsigStrong}{\strut\hspace{7pt}}
    $^{***}p_{\mathrm{Holm}}<0.001$
    \quad \colorbox{STFOsigMedium}{\strut\hspace{7pt}}
    $^{**}p_{\mathrm{Holm}}<0.01$.
    Holm correction is applied across 13 baselines per metric.
  \item Mean $\Delta$\% averages relative error reductions across nine
    dataset--horizon settings using three-seed means, excluding \texttt{Avg.} rows.
    $W^{+}$ is the positive-rank sum for STFO-L $-$ Baseline.
\end{tablenotes}
  \endgroup
  \end{threeparttable}
\end{table}

We compare \model-Large with 13 baselines using one-sided
Wilcoxon signed-rank tests on 27 paired entries per metric,
with Holm correction across baselines within each metric.
Table~\ref{tab:table1_wilcoxon} reports adjusted $p$-values
below $0.0001$ for all comparisons except MAPE against
\textsc{DOL} ($p_{\mathrm{Holm}}<0.01$). We report these
$p$-values for reference (some paired entries are dependent).

\section{Full New-Sensor Adaptation Results}
\label{app:new_sensor_adaptation}

We evaluate performance on existing and newly introduced
sensors across expansion periods. Zero-shot evaluation
uses the model trained in the preceding period without
adaptation, whereas full adaptation fine-tunes that model using
all available current-period supervision.

\textbf{\textit{PEMS-Stream.}}
Tables~\ref{tab:new_sensor_adaptation_pems_mae},
\ref{tab:new_sensor_adaptation_pems_rmse},
and~\ref{tab:new_sensor_adaptation_pems_mape}
show that \model-Large leads across all metrics, horizons,
sensor groups, and six periods under full adaptation.
Its zero-shot advantage is stronger on existing sensors,
with the lowest average MAE and RMSE in five periods.
On new sensors, it achieves the lowest average RMSE in
2012, 2015, and 2017, and the lowest average MAE and MAPE
in 2015.

\textbf{\textit{CA-Stream.}}
Tables~\ref{tab:new_sensor_adaptation_ca_mae},
\ref{tab:new_sensor_adaptation_ca_rmse},
and~\ref{tab:new_sensor_adaptation_ca_mape}
show that \model-Large ranks first across all three periods,
metrics, horizons, and sensor groups under full adaptation.
Under zero-shot evaluation, it achieves the lowest average
MAE, RMSE, and MAPE on existing sensors in P1 and P2.
On new sensors, it leads in average MAPE in P1 and in all
three average metrics in P2.

\textbf{\textit{AIR-Stream.}}
Tables~\ref{tab:new_sensor_adaptation_air_rmse}
and~\ref{tab:new_sensor_adaptation_air_mape}
complement Table~\ref{tab:new_sensor_adaptation}.
In zero-shot evaluation, \model-Large leads in average RMSE
and MAPE on existing sensors in every period and on new
sensors in 2017 and 2018. With full adaptation, it leads
in RMSE and MAPE at every horizon for both groups in
2017 and 2018; in 2019, it leads in average MAPE for both
groups but not in RMSE at every horizon.

Overall, \model-Large achieves the lowest errors across all periods,
horizons, and sensor groups on both traffic streams under full adaptation.

\providecommand{\stbpstd}[1]{%
  {\normalfont\fontsize{4.6}{5.0}\selectfont\ensuremath{\pm#1}}%
}
\begin{table*}[t]
\centering
\caption{Period-wise MAE on existing and newly introduced sensors of PEMS-Stream. Results are mean $\pm$ standard deviation over three runs. Lower is better; \best{bold} and \second{underlined} denote the best and second-best methods within each period, sensor group, adaptation budget, and horizon.}
\label{tab:new_sensor_adaptation_pems_mae}
\setlength{\tabcolsep}{2.5pt}
\renewcommand{\arraystretch}{0.99}
\fontsize{6.2}{7.1}\selectfont
\resizebox{\linewidth}{!}{%
%
}
\end{table*}

\begin{table*}[t]
\centering
\caption{Period-wise RMSE on existing and newly introduced sensors of PEMS-Stream. Results are mean $\pm$ standard deviation over three runs. Lower is better; \best{bold} and \second{underlined} denote the best and second-best methods within each period, sensor group, adaptation budget, and horizon.}
\label{tab:new_sensor_adaptation_pems_rmse}
\setlength{\tabcolsep}{2.5pt}
\renewcommand{\arraystretch}{0.99}
\fontsize{6.2}{7.1}\selectfont
\resizebox{\linewidth}{!}{%
%
%
}
\end{table*}

\begin{table*}[t]
\centering
\caption{Period-wise MAPE on existing and newly introduced sensors of PEMS-Stream. Results are mean $\pm$ standard deviation over three runs. Lower is better; \best{bold} and \second{underlined} denote the best and second-best methods within each period, sensor group, adaptation budget, and horizon.}
\label{tab:new_sensor_adaptation_pems_mape}
\setlength{\tabcolsep}{2.5pt}
\renewcommand{\arraystretch}{0.99}
\fontsize{6.2}{7.1}\selectfont
\resizebox{\linewidth}{!}{%
%
%
}
\end{table*}

\begin{table*}[t]
\centering
\caption{Period-wise MAE on existing and newly introduced sensors of CA-Stream. Results are mean $\pm$ standard deviation over three runs. Lower is better; \best{bold} and \second{underlined} denote the best and second-best methods within each period, sensor group, adaptation budget, and horizon.}
\label{tab:new_sensor_adaptation_ca_mae}
\setlength{\tabcolsep}{2.5pt}
\renewcommand{\arraystretch}{0.99}
\fontsize{6.2}{7.1}\selectfont
\resizebox{\linewidth}{!}{%
%
%
}
\end{table*}

\begin{table*}[t]
\centering
\caption{Period-wise RMSE on existing and newly introduced sensors of CA-Stream. Results are mean $\pm$ standard deviation over three runs. Lower is better; \best{bold} and \second{underlined} denote the best and second-best methods within each period, sensor group, adaptation budget, and horizon.}
\label{tab:new_sensor_adaptation_ca_rmse}
\setlength{\tabcolsep}{2.5pt}
\renewcommand{\arraystretch}{0.99}
\fontsize{6.2}{7.1}\selectfont
\resizebox{\linewidth}{!}{%
%
%
}
\end{table*}

\begin{table*}[t]
\centering
\caption{Period-wise MAPE on existing and newly introduced sensors of CA-Stream. Results are mean $\pm$ standard deviation over three runs. Lower is better; \best{bold} and \second{underlined} denote the best and second-best methods within each period, sensor group, adaptation budget, and horizon.}
\label{tab:new_sensor_adaptation_ca_mape}
\setlength{\tabcolsep}{2.5pt}
\renewcommand{\arraystretch}{0.99}
\fontsize{6.2}{7.1}\selectfont
\resizebox{\linewidth}{!}{%
%
%
}
\end{table*}

\begin{table*}[t]
\centering
\caption{Period-wise RMSE on existing and newly introduced sensors of AIR-Stream. Results are mean $\pm$ standard deviation over three runs. Lower is better; \best{bold} and \second{underlined} denote the best and second-best methods within each period, sensor group, adaptation budget, and horizon.}
\label{tab:new_sensor_adaptation_air_rmse}
\setlength{\tabcolsep}{2.5pt}
\renewcommand{\arraystretch}{0.99}
\fontsize{6.2}{7.1}\selectfont
\resizebox{\linewidth}{!}{%
%
%
}
\end{table*}

\begin{table*}[t]
\centering
\caption{Period-wise MAPE on existing and newly introduced sensors of AIR-Stream. Results are mean $\pm$ standard deviation over three runs. Lower is better; \best{bold} and \second{underlined} denote the best and second-best methods within each period, sensor group, adaptation budget, and horizon.}
\label{tab:new_sensor_adaptation_air_mape}
\setlength{\tabcolsep}{2.5pt}
\renewcommand{\arraystretch}{0.99}
\fontsize{6.2}{7.1}\selectfont
\resizebox{\linewidth}{!}{%
%
%
}
\end{table*}

\clearpage

\section{Hyperparameter Sensitivity}
\label{app:sensitivity}

We address RQ6 by examining sensitivity to DFO depth, latent grid size,
and the Fourier cutoff $M$. All experiments use the Large
configuration and follow the protocol in Section~\ref{sec:exp:setup}.
Figure~\ref{fig:hyperparameter} illustrates how these choices affect
forecasting accuracy across streams and prediction horizons.

\begin{figure}[t]
    \centering
    \includegraphics[width=\linewidth]{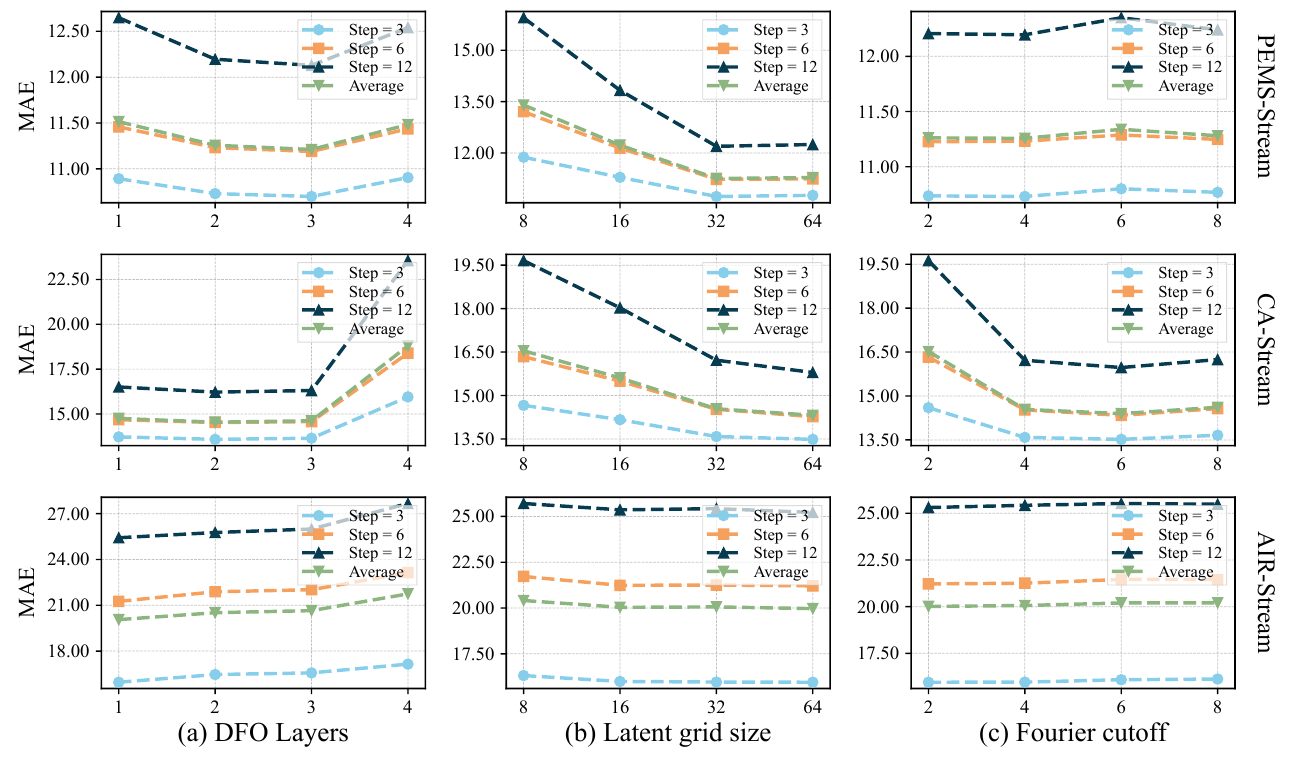}
    \caption{Hyperparameter sensitivity of \model-Large across three streams.
    Columns show variations in DFO depth, latent grid size, and Fourier cutoff.
    Curves report MAE at horizons $3$, $6$, and $12$, together with average MAE.}
    \label{fig:hyperparameter}
\end{figure}

\textbf{\textit{\ding{182} DFO depth.}}
The lowest average MAE is achieved with three layers on PEMS-Stream,
two on CA-Stream, and one on AIR-Stream. Increasing depth up to
three layers improves performance on PEMS-Stream, while CA-Stream
shows little variation between one and three layers. Errors on
AIR-Stream increase with depth. Using four layers worsens performance
on all three streams relative to their best settings, with a
particularly pronounced increase at horizon $12$ on CA-Stream.
Thus, increasing operator depth does not consistently improve forecasting.

\textbf{\textit{\ding{183} Latent grid size.}}
Increasing the number of grid points per axis from $8$ to $32$
substantially reduces errors on both traffic streams, especially
at horizon $12$. PEMS-Stream shows little further change at $64$,
while CA-Stream continues to improve, with smaller gains than
at coarser resolutions. AIR-Stream benefits mainly from increasing
the grid size from $8$ to $16$, with comparatively small changes
thereafter. These results indicate greater sensitivity to spatial
resolution on the traffic streams, with diminishing gains
as the grid becomes finer.

\textbf{\textit{\ding{184} Fourier cutoff.}}
PEMS-Stream, CA-Stream, and AIR-Stream achieve their lowest
average MAE at $M=4$, $M=6$, and $M=2$, respectively.
CA-Stream is the most sensitive: increasing $M$ from $2$ to $4$
sharply reduces errors, $M=6$ yields a further modest improvement,
and $M=8$ slightly increases errors.
PEMS-Stream varies little across the tested settings, while
AIR-Stream shows a slight increase in error as $M$ increases.
The benefit of a wider spectral bandwidth therefore depends
on the stream.

Overall, \model{} performs well with shallow operator stacks,
while the preferred grid resolution and spectral bandwidth
vary across streams. Increasing these hyperparameters beyond
suitable settings yields limited gains or degrades performance,
supporting the use of compact field operators with spatial
and spectral resolutions suited to each forecasting task.
\clearpage
\end{document}